\documentclass{article} % For LaTeX2e
\usepackage{iclr2027_conference,times}

\usepackage{amsmath,amsfonts,bm}

\def\eqref#1{equation~\ref{#1}}
\def\1{\bm{1}}

\DeclareMathAlphabet{\mathsfit}{\encodingdefault}{\sfdefault}{m}{sl}
\SetMathAlphabet{\mathsfit}{bold}{\encodingdefault}{\sfdefault}{bx}{n}

\usepackage[colorlinks=true,linkcolor=NavyBlue,citecolor=NavyBlue,urlcolor=NavyBlue]{hyperref}
\usepackage{url}

\usepackage{amsmath,amssymb,amsthm}
\newtheorem{proposition}{Proposition}
\newtheorem{lemma}{Lemma}
\usepackage{multirow}
\usepackage{booktabs}
\usepackage{enumitem}
\usepackage{graphicx}
\usepackage{xcolor}
\usepackage{colortbl}
\usepackage{microtype}
\usepackage{tikz}
\usepackage{capt-of}
\usepackage{stmaryrd}
\usepackage[capitalise,noabbrev,nameinlink]{cleveref}
\usetikzlibrary{arrows.meta,positioning}
\usepackage{pifont}
\usepackage{titletoc}
\newcommand{\cmark}{\textcolor{green!60!black}{\ding{51}}}
\newcommand{\xmark}{\textcolor{red!80!black}{\ding{55}}}
\usetikzlibrary{calc}

\definecolor{NavyBlue}{RGB}{0,60,120}
\definecolor{DiagFrozen}{RGB}{237,239,242}
\definecolor{DiagFrozenL}{RGB}{170,176,186}
\definecolor{DiagTrain}{RGB}{219,233,247}
\definecolor{DiagTrainL}{RGB}{ 92,141,196}
\definecolor{DiagData}{RGB}{230,243,232}
\definecolor{DiagDataL}{RGB}{104,160,116}
\definecolor{DiagEmb}{RGB}{251,236,225}
\definecolor{DiagEmbL}{RGB}{206,124, 72}

\tikzset{
  dbox/.style   ={draw, rounded corners=2.5pt, minimum height=8mm, align=center, inner sep=3.5pt,
                  font=\small},
  frozen/.style ={dbox, fill=DiagFrozen, draw=DiagFrozenL},
  trained/.style={dbox, fill=DiagTrain,  draw=DiagTrainL, very thick},
  ddata/.style  ={dbox, fill=DiagData,   draw=DiagDataL},
  demb/.style   ={dbox, fill=DiagEmb,    draw=DiagEmbL},
  darr/.style   ={-{Stealth[length=2.2mm]}, thick, draw=black!65},
}

\newcommand{\code}[1]{\texttt{\small #1}}

\newcommand{\Dth}{D_\theta}
\newcommand{\Dtil}{\widetilde{D}}
\newcommand{\xhat}{\hat{x}_0}
\newcommand{\hs}{h}
\newcommand{\adapt}{\varphi}
\newcommand{\textenc}{E_\vartheta^{\mathrm{txt}}}
\newcommand{\imgenc}{E_\vartheta^{\mathrm{img}}}

\newcommand{\sg}[1]{\mathrm{sg}\!\left[#1\right]}
\newcommand{\FD}{\mathrm{FD}}
\newcommand{\MIND}{\mathrm{MIND}}
\newcommand{\FID}{\mathrm{FID}}
\newcommand{\pstar}{p^{*}}
\newcommand{\avgknn}{\mathrm{AvgkNN}}
\newcommand{\method}{Debias Anything}
\definecolor{Grad}{RGB}{160,196,236}
\newcommand{\shade}[2]{\cellcolor{Grad!#1}#2}                 % per-column gradient, 100 = best
\newcommand{\best}[1]{\cellcolor{Grad}\textbf{\boldmath #1}}  % best guided value per column
\newcommand{\rowmark}[1]{\tikz[remember picture,overlay]\coordinate(#1);}
\newcommand{\rowbracket}[2]{\begin{tikzpicture}[remember picture,overlay]
  \draw[NavyBlue,thick] ([xshift=-2pt,yshift=2.0ex]#1) -- ([xshift=-6pt,yshift=2.0ex]#1)
    -- ([xshift=-6pt,yshift=-1.1ex]#2) -- ([xshift=-2pt,yshift=-1.1ex]#2);
\end{tikzpicture}}
\newcommand{\lo}[1]{#1$\downarrow$}
\newcommand{\hi}[1]{#1$\uparrow$}
\newcommand{\rowfig}[4]{\begin{tikzpicture}
  \node[anchor=south west, inner sep=0] (img) {\includegraphics[width=#1, trim=#3 0 0 0, clip]{#2}};
  \foreach \f/\lab in {#4}
    \node[font=\scriptsize, rotate=90, anchor=south, align=center] at ($(img.south west)!\f!(img.north west)+(-0.8mm,0)$) {\lab};
\end{tikzpicture}}
\newcommand{\Legendbest}{Guided rows are shaded from best (dark, bold) to worst (light)}

\tikzset{
  fdaxis/.style={black!65},
  fdgrid/.style={black!12},
  fdtick/.style={font=\footnotesize, color=black!75, inner sep=2pt},
  fdwlab/.style={font=\tiny, color=black!55, inner sep=1.2pt},
  fdname/.style={font=\small, align=center},
  fdpt/.style={circle, inner sep=1.1pt},
  fdbase/.style={rectangle, inner sep=1.8pt, fill=white, draw=black!70, very thick},
  fdleg/.style={font=\tiny, color=black!70, inner sep=1pt},
}

\title{Debias Anything: Fairness with Diversity without Supervision in Diffusion Models}

\author{Théau d'Audiffret\textsuperscript{1,2,3}, Mariia Vladimirova\textsuperscript{1,4,}\footnotemark[1]\hphantom{\textsuperscript{*}}, Jean-Yves Franceschi\textsuperscript{1,}\thanks{Equal contribution.} \\
\textsuperscript{1}Criteo AI Lab, Paris, France \textsuperscript{2}ESSEC Business School, France \\ \textsuperscript{3}CentraleSupélec, Université Paris-Saclay, France \textsuperscript{4}FairPlay joint team
}

\iclrfinalcopy % Uncomment for camera-ready version, but NOT for submission.
\begin{document}

\maketitle

\begin{abstract}

Although diffusion models produce high-quality images, they also reproduce and amplify demographic imbalances in their training data. Debiasing their generation process post-training w.r.t.\ some sensitive attribute usually relies on classifier guidance or explicit text extra-conditioning, but this reduces methods' applicability and output diversity. Conversely, methods promoting diversity alone do not ensure fair attribute representation. In this paper, we propose a method tackling fairness and diversity jointly that is generally applicable to any diffusion model and any sensitive attribute. To this end, an adapter connects the frozen diffusion model to a pretrained vision-language embedding space, enabling fairness and diversity guidance without sensitive-attribute annotations. For fairness, pairs of text prompts define attribute directions which guide batch composition towards specific proportions. For diversity, we introduce a score measuring disagreement between the semantic estimates derived from this representation. The formulation supports unconditional and text-conditional diffusion models, while requiring no prior knowledge or data of sensitive attribute. Experiments confirm that our method improves quality and diversity scores at comparable fairness levels.
Our code is available at \url{https://github.com/theaudaudiffret/Debias_anything}.

\end{abstract}

\begin{figure}[h]\centering
\rowfig{0.95\linewidth}{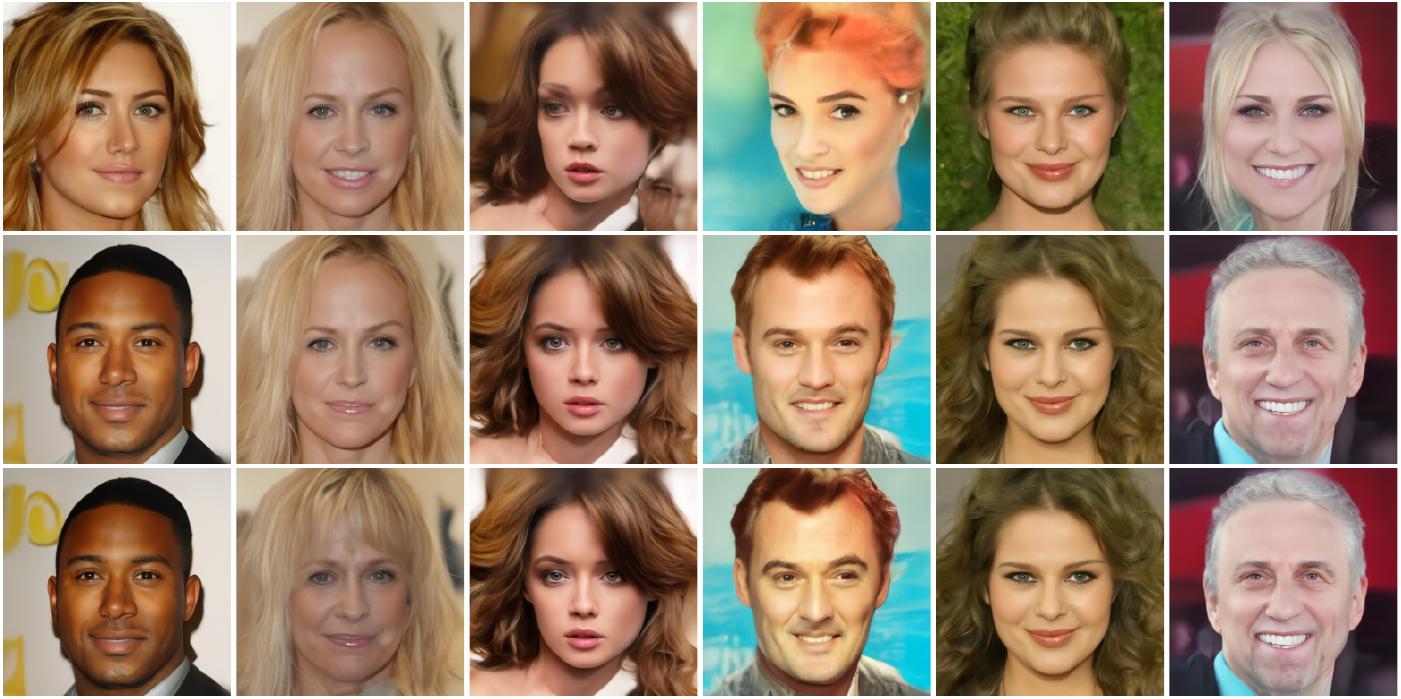}{0}{0.833/{Original}, 0.500/{\method{}}, 0.167/{\method{}\\$+$ Diversity}}
\caption{Generations examples on CelebA-HQ. \textit{Top}: the original model, 6 random draws of women; two of them appear with artifacts, the third looking away from the camera and the fourth turning her head. \textit{Middle}: guidance with the fairness term. Three columns turn into men and the faces sharpen,
but the pose collapses, every face now frontal and centered. \textit{Bottom}: guidance with the fairness and diversity terms. The woman in the second column looks older, the man in the fourth turns his head away and gains a mustache.}
\label{fig:teaser}
\end{figure}

\section{Introduction}
\label{sec:contribution}
 
Diffusion \citep{ho2020ddpm,song2021sde}, and equivalently flow matching \citep{lipman2023flow}, models are the standard unconditional generation models or text-to-image systems deployed at scale
\citep{rombach2022ldm,podell2024sdxl,wu2025qwen}.
 They inherit
imbalances of their training data and often exaggerate them
\citep{perera2023bias,bianchi2023easily,luccioni2023stablebias}, and the guidance they sample with
amplifies the imbalance further as its scale grows \citep{kim2026stayfair}.

Debiasing them includes choosing how often a given property, the \emph{sensitive attribute}, should appear
in the generated images, and making the model comply \citep{choi2020fairgen,teo2023measuring}.
Two problems make debiasing diffusion models a
hard problem. Firstly, existing methods have limited applicability with various restrictions: either they must rely on sensitive attribute supervision \citep{parihar2024balancing,shi2025difflens,zaleska2026icm}, require model finetuning \citep{shen2024finetuning}, are not adaptable to arbitrary attributes \citep{korada2025unsupervised}, or are restricted to text-to-image models \citep{friedrich2023fairdiffusion,zhang2023itigen}.
Secondly, guidance, used to steer a diffusion model towards fair representation as predicted by an attribute classifier, harms output diversity: the stronger the push, the more the samples cluster around a few typical
images and the rarer ones vanish \citep{ho2022cfg,sadat2024cads,sehwag2022lowdensity}. Methods that
restore diversity exist \citep{um2024sgms,corso2024particle}, but they do not aim for specific attribute fairness.

\begin{table}[t]\centering\footnotesize
\setlength{\tabcolsep}{2.5pt}
\caption{Positioning among debiasing methods for diffusion models. A method may need attribute
supervision (e.g.\ labels, reference images), may not accept an arbitrary target
proportion, may not have been applied to both unconditional and text-conditioned models, and may retrain the generator. 
\\ ($*$)
Score Guidance also presents an alternate supervision-free version, but with significantly dimi\-nished performance compared to the supervised version, which is the one we compare with.}
\label{tab:positioning}
\vspace{\baselineskip}
\scriptsize
\begin{tabular}{lccccl}
\toprule
 & \shortstack{no attribute\\supervision} & \shortstack{arbitrary\\proportion} & \shortstack{arbitrary\\conditioning} & \shortstack{no\\retraining} & attribute \\
\midrule
Fair Diffusion \citep{friedrich2023fairdiffusion} & \cmark & \cmark & \xmark & \cmark & text \\
ITI-GEN \citep{zhang2023itigen}                   & \xmark & \cmark & \xmark & \cmark & ref.\ images \\
Self-debiasing \citep{korada2025unsupervised}     & \cmark & \xmark & \cmark & \cmark & discovered \\
Latent editing \citep{kwon2023asyrp}              & \cmark & \xmark & \cmark & \cmark & text, training per attribute \\
Latent direction \citep{li2024selfdiscovering}    & \cmark & \cmark & \xmark & \cmark & text, training per attribute \\
FairGen \citep{jiang2025fairgen}                  & \cmark & \cmark & \xmark & \xmark & text, training per attribute \\
Fine-tuning \citep{shen2024finetuning}            & \xmark & \cmark & \xmark & \xmark & attribute-trained classifier \\
Balancing Act \citep{parihar2024balancing}        & \xmark & \cmark & \cmark & \cmark & labels \\
Inference-time alignment \citep{luan2026alignment} & \xmark & \cmark & \xmark & \cmark & attribute-trained classifier \\
Jeffrey guidance \citep{razafindralambo2026jeffrey} & \xmark & \cmark & \xmark & \cmark & attribute-trained classifier \\
DiffLens \citep{shi2025difflens}                  & \xmark & \cmark & \cmark & \cmark & labeled probes \\
ICM \citep{zaleska2026icm}                        & \xmark & \cmark & \xmark & \cmark & labeled probes \\
Score Guidance \citep{tiwary2026scoreguidance}    &\hphantom{*}\xmark/\cmark$^{*}$ & \cmark & \cmark & \cmark & ref.\ images\,/\,free text \\
\midrule
\method{} (Ours)                                         & \cmark & \cmark & \cmark & \cmark & free text \\
\bottomrule
\end{tabular}
\end{table}

We propose Debias Anything, a guidance that sets the proportion of any multi-class attribute one can describe in words, with no
labels and no generator retraining, and keeps the diversity of the model. The attribute is described independently of the diffusion model or prompt as short
sentences, for instance \emph{a person with eyeglasses} and \emph{a person without eyeglasses}. An
image-text encoder places sentences and images in one shared
space, so sentences pairs define a fairness direction which can be used to steer generated images towards fairness. To follow that direction while images are
still noisy, we train a small adapter, once per model and for no attribute in particular, that
predicts the encoder's expected embedding of the final image from the inner activations of the frozen
generator. Each sampling step then moves every image along a direction towards an assigned attribute class.
A second term, computed from the same adapter, pushes the images back towards the rare regions the
first term empties.

We compare \method{} with published baselines on unconditional face models and a text-to-image model, demonstrating its versatility. Despite the strong assumptions made by the baselines, especially regarding attribute supervision, \method{}
is better than or comparable to them. On CelebA-HQ, it reaches the best or near-best fairness score
on every considered attribute, with an improved image quality.
On Stable Diffusion~1.5, it substantially improves fairness across all considered attributes, with an image quality among the best. 
Finally, we show it can also target attributes for
which no label exists, confirming its attribute independence

% ===========================================================================
\section{Related work}
\label{sec:related-main}

\paragraph{Debiasing diffusion models.}
\cref{tab:positioning} compares existing methods. Almost all of them share at least one restriction among four. First, many need attribute supervision like reference images for ITI-GEN \citep{zhang2023itigen} and labeled images for DiffLens
\citep{shi2025difflens} and ICM \citep{zaleska2026icm}. Second, some cannot debias towards arbitrary attribute proportions \citep{korada2025unsupervised}. Third, most rely on text conditioning in the base model, which restricts them to text-to-image models \citep{friedrich2023fairdiffusion,li2024selfdiscovering,jiang2025fairgen}. Fourth, a few of them necessitate retraining or fine-tuning the base model \citep{shen2024finetuning}. Our method is free from all these assumptions.

The closest methods to ours steer a whole batch towards a target distribution
at sampling time. Balancing Act \citep{parihar2024balancing} does so through a predictor trained on
labeled U-Net activations, and every image receives the gradient of a batch-level distance, but remains a supervised method.
Score Guidance \citep{tiwary2026scoreguidance} needs exemplar images of each value, except in a text
version that needs no supervision but is less effective in most of the reported experiments. Both versions
score the attribute in pixel space, whereas our score is read in the encoder's embedding space (\cref{sec:adapter}), and both adopt a
greedy assignment of images to values early in sampling, whereas ours is optimally
recomputed at every step (\cref{prop:quota}).

\paragraph{Restoring diversity.}
Guidance narrows the distribution it samples \citep{ho2022cfg} and amplifies attribute imbalance as
its scale grows \citep{kim2026stayfair}. Sampling-time methods recover the tail by pushing towards
low-density regions of a feature space \citep{sehwag2022lowdensity}, repelling the samples of a batch
in pixel or feature space \citep{corso2024particle,bhat2026dontsettle}, maximising their kernel
entropy \citep{jalali2025sparke}, boosting the initial noise \citep{um2025boost}, annealing the
conditioning \citep{sadat2024cads}, measuring rarity against
a prior independent of the generator \citep{park2026jepa}, or scoring rarity by how much re-denoising
deforms the Tweedie estimate, with a trained classifier \citep{um2024minority} or without one
\citep[SGMS;][]{um2024sgms}. None of them constrains a sensitive attribute. Our rarity term transposes
SGMS into the encoder's space through the same adapter for efficiency purposes, and it is added to the fairness term so that
the tail is recovered at the target proportion.

\paragraph{Inferring the attribute during sampling.}
Classifier guidance \citep{dhariwal2021beatgans} trains a classifier on noisy images for each
attribute; training-free guidance \citep{chung2023dps,yu2023freedom,bansal2024universal,ye2024tfg},
like the text version of Score Guidance, applies an off-the-shelf model to the denoised image, a
blurred average, early in sampling; readout guidance \citep{luo2024readout} trains light heads on inner
features with task labels. Instead, our method is unsupervised and only applies an encoder on clean, non-blurry images.

The closest method to ours is that of unsupervised self-debiasing
\citep{korada2025unsupervised}, the only debiasing method that is given no attribute at all. For each prompt, it
trains a projector from the $h$-space of \citet{kwon2023asyrp}, to an embedding of the image that the sampler produces, then clusters the projections and steers each
batch towards balanced clusters. The attribute is therefore discovered rather than named, and a target
can only be set over the clusters found. Our adapter is instead trained once per generator rather than
per prompt, and is simulation-free: it regresses the embedding of a clean image from noised versions of
that image rather than from the sampling trajectory that produced it (\cref{sec:adapter}). Sentences then name the
attribute, and our batch assignment can set any proportion.

% ===========================================================================
\section{Preliminaries}
\label{sec:prelim}

\paragraph{Generative model.}
We focus in this paper on diffusion models, even though our method can easily be adapted to standard flow matching methods.
We follow the EDM parameterisation \citep{karras2022edm} without loss of generality; other diffusion models such as DDPM \citep{ho2020ddpm} and VP\,/\,VE SDEs \citep{song2021sde} may be written in the same framework.

A noisy image is
$x_\sigma = x_0 + \sigma\varepsilon \sim p_\sigma$ with $x_0 \sim p_{\mathrm{data}}$ and
$\varepsilon \sim \mathcal{N}(0,I)$, and a network $\Dth(x;\sigma)$ is pre-trained to approximate the
posterior mean $\mathbb{E}[x_0 \mid x_\sigma = x]$. We write $\xhat(x) = \Dth(x;\sigma)$ and call it
the \textit{Tweedie estimate}, the model's approximation (as an average) of the denoised image given the current noisy state. 
Tweedie's identity \citep{efron2011tweedie}, $\nabla_x \log p_\sigma(x) = (\Dth(x;\sigma) - x)/\sigma^{2}$,
turns the denoiser into the score of $p_\sigma$,
% which the sampler integrates from $\sigma_{\max}$ down to zero: 
and sampling integrates from $\sigma_{\max}$  down to $\sigma = 0$, for any $\beta>0$:
\begin{equation}
    \label{eq:sde}
    \mathrm{d}x_\sigma = - (1 + \beta) \sigma \,\nabla \log p_\sigma(x_\sigma) \,\mathrm{d}\sigma + \sqrt{2 \beta \sigma} \,\mathrm{d}W_\sigma = (1 + \beta) \frac{x - \Dth(x_\sigma;\sigma)}{\sigma} \,\mathrm{d}\sigma + \sqrt{2 \beta \sigma} \,\mathrm{d}W_\sigma.
\end{equation}
Guidance may be used to steer the sampler towards images with a desired property without retraining the model.
One chooses a scalar function $f$ of the image and of the noise level, the \emph{guidance term}, and a weight $w$, and
samples from $p_\sigma(x)\,e^{w f(x; \sigma)}$ instead of $p_\sigma$. By Tweedie's identity this only changes
the denoiser \citep{dhariwal2021beatgans,chung2023dps}:
\begin{equation}
  \Dtil(x;\sigma) \;=\; \Dth(x;\sigma) \;+\; w\,\sigma^{2}\,\nabla_x f(x; \sigma).
  \label{eq:replacement}
\end{equation}
Classifier guidance \citep{dhariwal2021beatgans} is the case $f = \log p_\sigma(y \mid x)$ for a class $y$,
estimated with a classifier trained on noisy images, and classifier-free guidance \citep{ho2022cfg} obtains
that classifier from a conditional model by Bayes' rule. The solver is untouched, and several guidance terms add inside one gradient.

\paragraph{Image-text embeddings.}
We will use a frozen image-text encoder $\imgenc$\,/\,$\textenc$ mapping respectively images and
texts to a shared embedding space on the unit sphere $\mathbb{S}^{d-1}$, so that matching pairs are close.
Any choice of image-text encoder, such as CLIP \citep{radford2021clip} and its successors, is admissible.
In this paper, we choose SigLIP~2 \citep{zhai2023siglip,tschannen2025siglip2}.

\section{Method}
\label{sec:method}

We describe here our guidance method for debiasing a pre-trained diffusion model $\Dth$ of any kind w.r.t.\ any sensitive attribute, expressed in free text, with arbitrary proportions and preserved output quality \& diversity, while requiring no extra annotated data and no diffusion model fine-tuning.

For this, we combine two guidance terms,
one for fairness and one for diversity, and compute both from a single semantic representation of the
image being generated. The proposed method therefore contains three pieces (\cref{fig:overview}): an adapter that makes a clean and easy-to-use embedding space of a pre-trained image-text encoder available at every noise level of a frozen
generator (\cref{sec:adapter}), attribute guidance directions from pairs of prompts with the batch-level
fairness term (\cref{sec:fairterm}), and a diversity term defined in that space
(\cref{sec:rarity}).

\begin{figure}[t]\centering
\resizebox{\linewidth}{!}{%
\begin{tikzpicture}[every node/.append style={font=\footnotesize}, x=1cm, y=1cm]
  % columns: A = denoiser, adapter, encoder; B = embedding; C = update, terms, directions
  \def\cA{5.7} \def\cB{8.9} \def\cC{12.0}
  % rows: 0 = sampling, \rU = adapter, \rT = prompts
  \def\rU{1.85} \def\rT{3.5}
  % ---- row 0: noisy image, denoiser, update, guided image
  \node[inner sep=0] (xs) at (2.2,0) {\includegraphics[width=1.45cm]{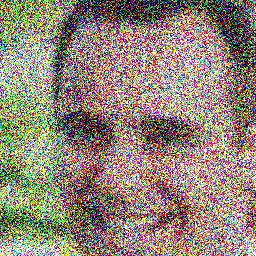}};
  \node[below=1pt of xs] {noisy image $x_\sigma$};
  \begin{scope}[shift={(\cA,0)}]
    \foreach \xx/\hh in {-1.53/0.78, -1.08/0.58, -0.63/0.40}
      \fill[DiagFrozen, draw=DiagFrozenL] (\xx,-\hh) rectangle (\xx+0.25,\hh);
    \fill[DiagEmb, draw=DiagEmbL, thick] (-0.175,-0.27) rectangle (0.175,0.27);
    \foreach \xx/\hh in {0.42/0.40, 0.87/0.58, 1.32/0.78}
      \fill[DiagFrozen!60, draw=DiagFrozenL, dashed] (\xx,-\hh) rectangle (\xx+0.25,\hh);
    \node[font=\scriptsize, DiagEmbL] at (0,-0.5) {$\hs(x_\sigma)$};
  \end{scope}
  \node[font=\scriptsize, inner sep=1pt] (unet) at (\cA,-1.05) {denoiser network $\Dth$ (frozen)};
  \draw[darr] (xs.east) -- (\cA-1.53,0);
  \node[frozen, align=center] (upd) at (\cC,0) {denoiser update\\$\Dtil = \Dth + \sigma^{2}\nabla_x[\,w_{\mathrm{fair}} f_{\mathrm{fair}} + w_{\mathrm{div}} f_{\mathrm{div}}\,]$};
  \draw[darr] (\cA+1.57,0) -- node[above, font=\scriptsize]{$\Dth(x_\sigma;\sigma)$} (upd.west);
  \node[inner sep=0, right=19mm of upd] (out) {\includegraphics[width=1.45cm]{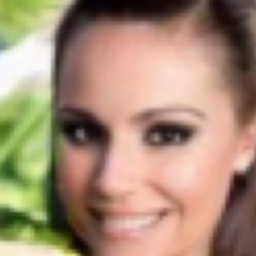}};
  \node[below=1pt of out] {guided image $x_0$};
  \draw[darr] (upd.east) -- node[above=1pt, font=\scriptsize, pos=0.6]{at $\sigma=0$} (out.west);
  % loop: the updated image is the noisy image of the next step
  \coordinate[right=6mm of upd.east] (fork);
  \fill[black!65] (fork) circle (1.4pt);
  \draw[darr] (fork) -- ++(0,-1.75) -- node[below, font=\scriptsize]{repeat at the next noise level until $\sigma=0$} (0.9,-1.75) -- (0.9,0) -- (xs.west);
  \node[ddata, dashed, font=\scriptsize, inner sep=2pt, align=center, anchor=west] (cond) at (\cA+2.4,-1.05) {condition or text prompt $c$\\from the end user (if any)};
  \draw[darr, dashed] (cond.west) -- (unet.east);
  % ---- row 1: adapter, embedding, guidance terms
  \node[trained, align=center] (ad) at (\cA,\rU) {adapter $\adapt_\psi$ (trained once)\\{\scriptsize inner layer $\hs$ $\to$ embedding space}};
  \draw[darr, DiagEmbL] (\cA,0.27) -- (ad.south);
  \node[demb] (z) at (\cB,\rU) {$m(x^{1:B};\sigma)$};
  \node[dbox, fill=DiagTrain, draw=DiagTrainL, align=center] (g) at (\cC,\rU) {fairness term $f_{\mathrm{fair}}(x^{1:B})$\\diversity term $f_{\mathrm{div}}$};
  \draw[darr] (ad) -- (z);
  \draw[darr] (z) -- (g);
  \draw[darr] (g.south) -- node[right=1pt, font=\scriptsize, align=left]{guidance} (upd.north);
  % ---- row 2: operator, prompts, image-text encoder, directions
  \begin{scope}[shift={(0,\rT)}]
    \fill[black!55] (0,0.32) circle (0.17);
    \fill[black!55] (-0.32,-0.3) .. controls (-0.32,0.12) and (0.32,0.12) .. (0.32,-0.3) -- cycle;
    \node[font=\scriptsize, below=8pt] at (0,-0.25) {operator};
  \end{scope}
  \node[ddata, align=left, font=\scriptsize] (prm) at (2.2,\rT) {$t_0$: \textit{a photo of a person}\\$t_1$: \textit{a photo of a man}\\$t_2$: \textit{a photo of a woman}\\targets $\alpha_1 = \alpha_2 = 0.5$};
  \node[frozen, align=center] (txt) at (\cA,\rT) {image--text encoder\\$\textenc$, $\imgenc$ (frozen)};
  \node[demb] (d) at (\cC,\rT) {directions $\Delta_k$};
  \draw[darr] (prm) -- (txt);
  \draw[darr] (txt) -- node[above, font=\scriptsize]{$\textenc(t_k)$} (d);
  \draw[darr, dashed] (txt.south) -- node[right=1pt, font=\scriptsize]{$\imgenc(x_0)$: training target} (ad.north);
  \draw[darr] (d.south) -- node[right=2pt, font=\scriptsize]{computed once} (g.north);
  % ---- inset, top right: directions and zero-shot classifier in the embedding space
  \begin{scope}[shift={(16.15,2.55)}]
    \draw[black!25, rounded corners=2.5pt] (-1.85,-1.3) rectangle (1.85,1.55);
    \node[font=\scriptsize, black!60, anchor=north east] at (1.85,1.55) {embedding space};
    % boundary between the two values: equal logits
    \draw[black!50, dashed] (0,-0.85) -- (0,0.65);
    \node[font=\scriptsize, black!60] at (0,-1.02) {$S_1 = S_2$};
    % directions from the neutral sentence
    \draw[-{Stealth[length=1.8mm]}, thick, DiagEmbL, shorten >=2.5pt, shorten <=2.5pt]
      (0,0.8) -- node[above right=-1pt, font=\scriptsize, DiagEmbL] {$\Delta_1$} (1.3,0.2);
    \draw[-{Stealth[length=1.8mm]}, thick, DiagEmbL, shorten >=2.5pt, shorten <=2.5pt]
      (0,0.8) -- node[above left=-1pt, font=\scriptsize, DiagEmbL] {$\Delta_2$} (-1.3,0.2);
    % clean images: men (triangles) and women (circles)
    \foreach \px/\py in {0.55/-0.35, 0.85/-0.15, 1.05/-0.5, 0.75/-0.65, 1.3/-0.25}
      \fill[DiagTrainL, shift={(\px,\py)}] (0,1.6pt) -- (-1.4pt,-0.8pt) -- (1.4pt,-0.8pt) -- cycle;
    \foreach \px/\py in {-0.55/-0.35, -0.85/-0.15, -1.05/-0.5, -0.75/-0.65, -1.3/-0.25}
      \fill[DiagFrozenL!90] (\px,\py) circle (1.2pt);
    \node[font=\scriptsize, DiagTrainL] at (1.25,-1.02) {$S_1 > S_2$};
    \node[font=\scriptsize, DiagFrozenL!80!black] at (-1.25,-1.02) {$S_2 > S_1$};
    % sentence embeddings
    \foreach \px/\py in {0/0.8, 1.3/0.2, -1.3/0.2}
      \fill[DiagDataL, shift={(\px,\py)}] (-2pt,-2pt) rectangle (2pt,2pt);
    \node[font=\scriptsize, DiagDataL!70!black, above=1pt] at (0,0.8) {$t_0$};
    \node[font=\scriptsize, DiagDataL!70!black, right=1pt] at (1.3,0.2) {$t_1$};
    \node[font=\scriptsize, DiagDataL!70!black, left=1pt] at (-1.3,0.2) {$t_2$};
  \end{scope}
  \draw[black!40, densely dotted] (d.east) -- (14.3,3.5);
\end{tikzpicture}}
\caption{One guided sampling step. The prompts $t_k$ are set once and
encoded into directions $\Delta_k$. At each step, the adapter $\adapt$ maps an inner layer $\hs(x_\sigma)$ of
the frozen denoiser to $m(x_\sigma;\sigma)$ for every image of the batch, where the fairness and
diversity terms are computed.}
\label{fig:overview}
\end{figure}

\subsection{A clean simulation-free semantic embedding of the noisy image}
\label{sec:adapter}

To build guidance terms without labeled data, we use the pre-trained image-text encoder $\imgenc$\,/\,$\textenc$, which turns any property described in words into a direction for guidance in image space.
% To construct our guidance terms without any supervision from labeled data, we leverage the provided pre-trained image-text encoder $\imgenc$\,/\,$\textenc$ which can transfer any property described in words into actionable directions for diffusion guidance in the image space. 
However, a guidance term $f$ needs to be evaluated on noisy images $x_\sigma$ (cf.\ \cref{sec:prelim}), on which the image-text encoder was never trained on. This raises the question of how to obtain the encoder's embedding $m(x_\sigma;\sigma)$ of the final image from the noisy one.

\paragraph{Choice of encoder input.} Two options have been considered in the literature.

The simplest option, used by training-free guidance
\citep{chung2023dps,yu2023freedom,bansal2024universal}, is to denoise $x_\sigma$ in one step to the
Tweedie estimate $\xhat \approx \mathbb{E}[x_0 \mid x_\sigma]$ and to embed that image to $m(x_\sigma;\sigma) = \imgenc(\xhat)$.
In this case, $\imgenc$ is applied to an average of images, necessarily blurry for high $\sigma$s and therefore out of distribution (OOD) for $\imgenc$, making it unreliable.
This distribution shift is particularly strong in the first phases of the trajectory when the coarse content of a sample, and with it an
attribute such as gender, is decided \citep{raya2023symmetry,biroli2024dynamical,gagneux2025phases}, i.e.\ when guidance needs to act.
We refer to \cref{app:potential} for more details.

Another option explored by \citet{korada2025unsupervised} is to embed the generated sample obtained after integrating \cref{eq:sde}.
To avoid the simulation cost at each guidance step, this necessitates the training of an adapter network trained to map $x_\sigma$ or some proxy like internal activations of $\Dth(x;\sigma)$ to the true embedding $m(x_\sigma;\sigma) = \imgenc(x_\sigma + \int_\sigma^0 \text{[\cref{eq:sde}]})$.
This solves the issue of blurry OOD inputs for the encoder, but ties guidance to the choice of generating process, including discretization errors and other guidance terms.

Instead, we propose to embed $x_0 = x_\sigma - \sigma \varepsilon$. When learning an adapter to regress this target, then:
\begin{equation}
    \label{eq:embedding}
    m(x_\sigma;\sigma) \;=\; \mathbb{E}[\imgenc(x_0) \mid x_\sigma].
\end{equation}
It combines the advantages of the two previous methods, being simulation-free and computed in-distribution, at the cost of averaging in the embedding space.
As shown in \cref{sec:fairterm}, this is a mild approximation as, due to its semantic nature, averaging in the embedding space better preserves attribute information than in the pixel space.

\paragraph{Our adapter.} To approximate $m$, we also use an adapter taking as input the denoiser
bottleneck activation $\hs(x_\sigma)$, which \citet{kwon2023asyrp} identified as a semantic latent
space (the $h$-space), and which the denoiser computes in any case during the generation process. In a U-Net
\citep{ronneberger2015unet}, $\hs$ can be the output of the mid-block; in a diffusion transformer
\citep{peebles2023dit} $\hs$ can then be the tokens of an intermediate block \citep{yu2025repa}.
The adapter network $\adapt_\psi$, with
parameters $\psi$, maps this activation to a unit vector $\adapt_\psi\big(\hs(x_\sigma);\,\sigma\big) \;\in\; \mathbb{S}^{d-1}$
and is trained with a cosine loss on pairs $(x_\sigma, \imgenc(x_0))$ at random noise levels, whose minimizer is the direction of $m$:
\begin{equation}
  \mathcal{L}(\psi) \;=\; 1 - \mathbb{E}_{x_0,\sigma,\varepsilon}\,\big\langle \adapt_\psi(\hs(x_\sigma);\sigma),\; \imgenc(x_0) \big\rangle,
  \label{eq:adapterloss}
\end{equation}
Only $\psi$ is trained; the denoiser and the encoder stay
frozen and no attribute label enters.
The adapter is trained once per generator and serves every attribute afterwards
(\cref{app:adapter}). 

\subsection{Fairness guidance from text prompts}
\label{sec:fairterm}
\label{sec:objective}

For a $K$-valued sensitive attribute, we are provided with a description of each value $k \in \llbracket 1, K \rrbracket$ by a sentence $t_k$, and a neutral sentence $t_0$: for age, for instance,
$t_0 = \text{\emph{a photo of a person}}$, $t_1 = \text{\emph{a photo of a young person}}$, $t_2 = \text{\emph{a photo of a middle-aged person}}$, $t_3 = \text{\emph{a photo of an old person}}$.
We will build a fairness guidance term using only these as information about the attribute.

\paragraph{Zero-shot classifier.} Thanks to the image-text encoder, a zero-shot classifier \citep{radford2021clip} can be built by comparing each attribute to the neutral prompt. Each value $k$
defines a direction $\Delta_k$ in the joint embedding space, and projecting an image embedding from \cref{sec:adapter} on $\Delta_k$ gives
a logit $S_k$ that increases the closer the image is to $t_k$ than to $t_0$ (\cref{fig:overview}, top right):
\begin{align}
  \Delta_k = \frac{\textenc(t_k) - \textenc(t_0)}{\lVert \textenc(t_k) - \textenc(t_0)\rVert},
  &&
  S_k(x_\sigma;\sigma) = \big\langle m(x_\sigma;\sigma),\, \Delta_k \big\rangle \in [-1,1].
  \label{eq:delta}
\end{align}
Directions are computed once before sampling. The difference of embeddings keeps what
distinguishes the two sentences and cancels what they share \citep{bolukbasi2016man,lyu2023deltaedit}.
In practice, we compute the logits with the adapter output
$\adapt_\psi(\hs(x_\sigma);\sigma)$ in place of $m(x_\sigma;\sigma)$. Note that the same adapter
serves every attribute.

At $\sigma = 0$, $m(x_0;0) = \imgenc(x_0)$ so it is the true embedding of the clean image. This defines a clean-image one-versus-rest classifier as $p(Y_k = 1 \mid x_0) \propto \mathrm{e}^{w S_k(x_0;0)}$, with $w > 0$ the guidance strength and $Y_k = \delta_{y=k}$. By linearity, $S_k(x_\sigma;\sigma) = \mathbb{E}[S_k(x_0;0) \mid x_\sigma]$: the logit of the noisy image
is the expected logit of the clean one. Our classifier thus
averages the log of the clean classifier:
\begin{equation}
  \bar p_\sigma(Y_k = 1 \mid x_\sigma) \;=\; \exp \mathbb{E}[\log p(Y_k = 1 \mid x_0) \mid x_\sigma] \;\propto\; \mathrm{e}^{w S_k(x_\sigma;\sigma)}.
  \label{eq:cls-ours}
\end{equation}
It is not the ideal such classifier, which is the average of the clean classifier (\cref{sec:prelim}):
\begin{equation}
  p_\sigma(Y_k = 1 \mid x_\sigma) \;=\; \mathbb{E}[p(Y_k = 1 \mid x_0) \mid x_\sigma] \;\propto\; \mathbb{E}[\mathrm{e}^{w S_k(x_0;0)} \mid x_\sigma],
  \label{eq:cls-ideal}
\end{equation}
but which is intractable in a zero-shot, attribute-independent setting. The gap is as follows.

\begin{lemma}[expected logit]\label{lem:score}
With $V_k(x) = \mathrm{Var}[S_k(x_0;0) \mid x_\sigma]$,
\begin{equation}
  e^{-2w}\,\tfrac{w^{2}}{2}\,V_k(x) \;\le\; \log p_\sigma(Y_k = 1 \mid x_\sigma) - \log \bar p_\sigma(Y_k = 1 \mid x_\sigma) \;\le\; e^{2w}\,\tfrac{w^{2}}{2}\,V_k(x).
  \label{eq:gap}
\end{equation}
\end{lemma}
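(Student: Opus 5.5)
Writing $Z = S_k(x_0;0)$ for the clean logit, the idea is to reduce the gap to a Jensen gap for the exponential of a bounded random variable. Everything is conditional on $x_\sigma$; write $\mu = \mathbb{E}[Z \mid x_\sigma] = S_k(x_\sigma;\sigma)$, which uses the linearity identity stated just before \cref{eq:cls-ours}.

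\textbf{Step 1: identify the gap.} Since $\imgenc(x_0)\in\mathbb{S}^{d-1}$ and $\Delta_k$ is a unit vector, Cauchy--Schwarz gives $Z\in[-1,1]$ almost surely, and hence $\mu\in[-1,1]$. I will treat the two classifiers with a common normalisation, as the ``$\propto$'' in \cref{eq:cls-ours,eq:cls-ideal} suggests: the same proportionality constant is used in the clean classifier before and after averaging. The constant then cancels, and
\[
\log p_\sigma(Y_k = 1 \mid x_\sigma) - \log \bar p_\sigma(Y_k = 1 \mid x_\sigma) = \log \mathbb{E}[e^{wZ}\mid x_\sigma] - w\mu = \log \mathbb{E}\big[e^{w(Z-\mu)}\mid x_\sigma\big].
\]

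\textbf{Step 2: bound the inner expectation.} Apply Taylor's theorem with Lagrange remainder around $\mu$:
\[
e^{wZ} = e^{w\mu}\Big(1 + w(Z-\mu) + \tfrac{w^2}{2}e^{w(\xi-\mu)}(Z-\mu)^2\Big)
\]
for some $\xi$ between $\mu$ and $Z$. Both $\xi$ and $\mu$ lie in $[-1,1]$, so $|\xi-\mu|\le 2$ and $e^{w(\xi-\mu)}\in[e^{-2w},e^{2w}]$. Take the conditional expectation; the linear term vanishes. This gives
\[
1+e^{-2w}\tfrac{w^2}{2}V_k \;\le\; \mathbb{E}[e^{w(Z-\mu)}\mid x_\sigma] \;\le\; 1+e^{2w}\tfrac{w^2}{2}V_k .
\]

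\textbf{Step 3: pass to the logarithm (the main obstacle).} The upper bound follows directly from $\log(1+u)\le u$. The lower bound is the delicate step, because $\log(1+u)\ge u$ is false. I would try two routes in order.

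\emph{First route.} Bound the logarithm of the moment generating function directly. Let $\psi(t)=\log \mathbb{E}[e^{t(Z-\mu)}\mid x_\sigma]$. Then $\psi(0)=\psi'(0)=0$, and $\psi''(t)$ is the variance of $Z$ under the tilted law $\propto e^{tZ}$. The tilted density ratio $e^{tZ}/\mathbb{E}[e^{tZ}]$ lies in $[e^{-2t},e^{2t}]$. So the tilted variance satisfies
\[
\mathrm{Var}_t \;\ge\; \min_c \mathbb{E}_t[(Z-c)^2] \;\ge\; e^{-2t}\min_c \mathbb{E}[(Z-c)^2] = e^{-2t}V_k .
\]
Similarly, $\mathrm{Var}_t\le \mathbb{E}_t[(Z-\mu)^2]\le e^{2t}V_k$. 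Integrating twice from $0$ to $w$ gives both bounds, with $\int_0^w\!\int_0^s e^{\pm 2t}\,dt\,ds$ in place of $e^{\pm 2w}w^2/2$. The upper bound is then immediate, since $e^{2t}\le e^{2w}$.

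\emph{Lower constant.} For the lower bound this route gives $\int_0^w\!\int_0^s e^{-2t}\,dt\,ds$ rather than $e^{-2w}w^2/2$, and the former is at least the latter because $e^{-2t}\ge e^{-2w}$ on the integration range. So the lower bound holds as stated.

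\emph{Fallback.} If the tilted-measure bookkeeping fails, use instead Jensen in the form $\log \mathbb{E}[e^{U}]\ge \mathbb{E}[U]$ applied to a second-order expansion. I would only use this if the first route breaks, since it is less clean.
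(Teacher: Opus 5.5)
Your first route in Step 3 is the paper's proof, and it is correct. You write the gap as $\psi(w)$ with $\psi(0)=\psi'(0)=0$, identify $\psi''(t)$ with the variance under the tilted posterior, and sandwich it between $e^{-2t}V_k$ and $e^{2t}V_k$ because the tilt density lies in $[e^{-2t},e^{2t}]$; integrating against $(w-t)$ then gives both bounds. The Lagrange-remainder Step 2, which only yields the upper bound, and the Jensen fallback, which with $U=w(Z-\mu)$ only gives the trivial lower bound $0$, are unnecessary and can be dropped.
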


The proof is in \cref{app:readouts}. The gap between the ideal classifier and ours therefore vanishes when $V_k$ approaches $0$, that is, once the model has decided the attribute.

\paragraph{Guidance.} Given target proportions $\alpha_1, \dots, \alpha_K \ge 0$ summing to one and a batch of $B$ images,
the objective is that a fraction $\alpha_k$ of the batch carries attribute value $k$. The objective is
enforced by batch-wise assignment: each image $i$ is assigned a value $\pi(i)$ and guided towards it
with our classifier of \cref{eq:cls-ours}. With integer quotas $B_k \approx \alpha_k B$ summing to $B$,
let $\Pi$ be the set of maps $\pi \colon \{1, \dots, B\} \to \{1, \dots, K\}$ that assign exactly
$B_k$ images to value $k$. The assignment, solved exactly by the Hungarian algorithm \citep{kuhn1955hungarian}, maximises the total logit of the batch, so that each image
is pushed, as far as the quotas allow, towards the value it is already closest to.
\begin{align}
  \pi^{\star} \;=\; \operatorname*{arg\,max}_{\pi \in \Pi} \sum_{i=1}^{B} a_{i\pi(i)},
  &&
  f_{\mathrm{fair}}(x^{1:B}) \;=\; \sum_{i=1}^{B} a_{i\pi^{\star}(i)}, && a_{ik} = S_k(x_i;\sigma).
  \label{eq:fair}
\end{align}
Up to a constant, $w\,f_{\mathrm{fair}} = \sum_i \log \bar p_\sigma(y = \pi^{\star}(i) \mid x_i)$ is the
log of our classifier summed over the batch, so guiding with $f_{\mathrm{fair}}$ is guiding each image
towards its assigned value.

\begin{proposition}\label{prop:quota}
Let $A = (a_{ik}) \in \mathbb{R}^{B \times K}$. Then
(i)~$f_{\mathrm{fair}} = \max_{\pi \in \Pi} \sum_{i} a_{i\pi(i)}$ is convex and piecewise linear in
$A$, and wherever the maximiser $\pi^{\star}$ is unique it is differentiable with
$\partial f_{\mathrm{fair}} / \partial a_{ik} = \mathbf{1}[\pi^{\star}(i) = k]$;
(ii)~the guided denoiser of \cref{eq:replacement} with $f = f_{\mathrm{fair}}$ is then:
\begin{equation}
  \Dtil(x_i;\sigma) \;=\; \Dth(x_i;\sigma) \;+\; w_{\mathrm{fair}}\,\sigma^{2}\, \nabla_{x_i} f_{\mathrm{fair}}(x^{1:B})
  \;=\; \Dth(x_i;\sigma) \;+\; w_{\mathrm{fair}}\,\sigma^{2}\, \nabla_{x_i} a_{i\pi^{\star}(i)} ,
  \label{eq:fairguidance}
\end{equation}
so each image is pushed towards the value it is assigned to.
\end{proposition}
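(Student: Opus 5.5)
The plan is to treat $f_{\mathrm{fair}}$ as a finite maximum of linear functionals of $A$. For each $\pi \in \Pi$, define $L_\pi(A) = \sum_{i} a_{i\pi(i)} = \langle A, P_\pi\rangle$, where $P_\pi \in \{0,1\}^{B\times K}$ is the assignment matrix with $(P_\pi)_{ik} = \mathbf{1}[\pi(i)=k]$. The set $\Pi$ is finite; it is nonempty because the quotas $B_k$ sum to $B$. Hence $f_{\mathrm{fair}} = \max_{\pi\in\Pi} L_\pi$ is a pointwise maximum of finitely many linear functions. Such a function is convex, since each $L_\pi$ is convex and a supremum of convex functions is convex. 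It is also piecewise linear: on each closed polyhedral cell $C_\pi = \{A : L_\pi(A) \ge L_{\pi'}(A)\ \forall \pi'\}$ it coincides with $L_\pi$, and these cells cover $\mathbb{R}^{B\times K}$.

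For differentiability, fix $A$ at which the maximiser $\pi^\star$ is unique. Then $L_{\pi^\star}(A) - \max_{\pi\neq\pi^\star} L_\pi(A) = \delta > 0$. Each $L_\pi$ is Lipschitz with constant $\lVert P_\pi\rVert_F = \sqrt{B}$, so on the ball of radius $\delta/(2\sqrt B)$ around $A$, $\pi^\star$ remains the strict maximiser. There $f_{\mathrm{fair}} = L_{\pi^\star}$, a linear function. Therefore $f_{\mathrm{fair}}$ is differentiable at $A$ with gradient $P_{\pi^\star}$, i.e.\ $\partial f_{\mathrm{fair}}/\partial a_{ik} = \mathbf{1}[\pi^\star(i)=k]$. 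This is the only step needing care: we need an open neighbourhood on which the argmax is frozen, and finiteness of $\Pi$ provides it. I would also remark, though it is not required, that the subdifferential at ties is the convex hull of the $P_\pi$ over the tied maximisers, consistent with Danskin's theorem.

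For (ii), I would apply the chain rule to $x^{1:B} \mapsto A(x^{1:B}) \mapsto f_{\mathrm{fair}}(A)$. Here $a_{jk} = S_k(x_j;\sigma)$ depends only on $x_j$, through the adapter output, which I assume is differentiable in $x$. At configurations where $\pi^\star$ is unique, part (i) gives
\[
\nabla_{x_i} f_{\mathrm{fair}} = \sum_{j,k} \mathbf{1}[\pi^\star(j)=k]\,\nabla_{x_i} a_{jk}.
\]
All terms with $j\neq i$ vanish because $a_{jk}$ does not depend on $x_i$, which leaves $\nabla_{x_i} a_{i\pi^\star(i)}$. Substituting into \cref{eq:replacement} with $f = f_{\mathrm{fair}}$ and weight $w_{\mathrm{fair}}$ yields \cref{eq:fairguidance}. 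Finally, the identity $w\,a_{i\pi^\star(i)} = \log\bar p_\sigma(y=\pi^\star(i)\mid x_i) + \text{const}$ from \cref{eq:cls-ours} gives the interpretation that each image is pushed towards its assigned value.
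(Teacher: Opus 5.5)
Your proof is correct and follows essentially the same route as the paper: you write $f_{\mathrm{fair}}$ as a finite maximum of linear functionals, freeze the unique maximiser on a neighbourhood, and apply the chain rule, where the cross terms $j \neq i$ vanish because $a_{jk}$ depends only on $x_j$. The one addition over the paper is that you make the neighbourhood explicit through the Lipschitz radius $\delta/(2\sqrt{B})$, which the paper's proof leaves implicit.
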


The proof is in \cref{app:readouts}. 
By \cref{eq:fairguidance}, pushing each image towards its assigned value is guidance with $f_{\mathrm{fair}}$, and recomputing the assignment at every step recomputes its gradient.
Because the assignment couples the $B$
trajectories, \cref{eq:fair} is a joint guidance term over the batch in the sense of
\citet{corso2024particle}. Still, unlike \citet{parihar2024balancing}, where every image
receives the gradient of a single batch-level distance between the predicted and the target attribute
distributions, each image is guided by its own logit,
towards the value it is assigned to and therefore compatible with.

\paragraph{Binary vs multi-class.}

A multi-class zero-shot classifier on the same scores would lead to the same method with fewer guarantees: the assignment is unchanged, and the guidance term gains the normalisation
$-\log \sum_{j} e^{w S_j(x_0;0)}$; this term is not linear in the embedding, so \cref{lem:score} would
no longer hold exactly.
For a binary attribute,
the neutral sentence is not needed and the assignment may be reduced to ranking the batch along a single
direction (\cref{app:binary}).

\subsection{The diversity term and combined guidance}
\label{sec:rarity}

\paragraph{Diversity guidance.}
We start from SGMS~\citep{um2024sgms} and move its construction into the
encoder's space, available from the adapter at no extra cost.
SGMS scores diversity by re-noising the Tweedie estimate to $\sigma_s$, denoising it again, and
measuring how far the image moved:
\begin{equation}
  \hat x_s \;=\; \sg{\xhat(x)} + \sigma_s\varepsilon,
  \qquad
  f_{\mathrm{SGMS}}(x) \;=\; d_{\mathrm{pix}}\big(\xhat(x),\, \Dth(\hat x_s;\sigma_s)\big),
  \label{eq:sgms}
\end{equation}
with $d_{\mathrm{pix}}$ an LPIPS \citep{zhang2018lpips} or $\ell^{2}$ distance and $\sg{\cdot}$ a
stop-gradient.

Re-denoising pulls an
image towards the modes of the model, so an image already on a mode barely moves while an image in
the tail does. Measured in pixel space, however, the displacement counts pose, background and
lighting rather than semantics. We propose instead a
score that measures the disagreement between two estimates of the embedding of the image being
generated:
\begin{equation}
  f_{\mathrm{div}}(x) \;=\; 1 - \big\langle m(x_\sigma;\sigma),\; m(\hat x_s;\sigma_s) \big\rangle \;\approx\; 1 - \big\langle \adapt_\psi(\hs(x_\sigma);\sigma),\; \adapt_\psi(\hs(\hat x_s);\sigma_s) \big\rangle .
  \label{eq:perturbproj}
\end{equation}
\cref{app:siglipms}
describes a variant that takes the second estimate from the encoder instead.

\paragraph{Combined guidance.}\label{sec:combined}
The diversity term is added to the fairness guidance of \cref{eq:fairguidance}:
\begin{equation}
  \Dtil(x;\sigma) \;=\; \Dth(x;\sigma) \;+\; \sigma^{2}\,\nabla_x\!\left[\, w_{\mathrm{fair}}\,f_{\mathrm{fair}}(x) \;+\; w_{\mathrm{div}}\,f_{\mathrm{div}}(x) \,\right].
  \label{eq:combined}
\end{equation}

\paragraph{Compatibility with any conditioning and other guidance}
\label{sec:t2i}

The presented method is independent of how the model is conditioned, applies to
unconditional and text-conditional models alike, and composes with any other guidance already in use,
since~\cref{eq:combined} would simply gain a term. When classifier-free guidance \citep{ho2022cfg} is used, the adapter only reads from prompt-conditioned branch.

\begin{figure}[t]\centering
\rowfig{0.8\linewidth}{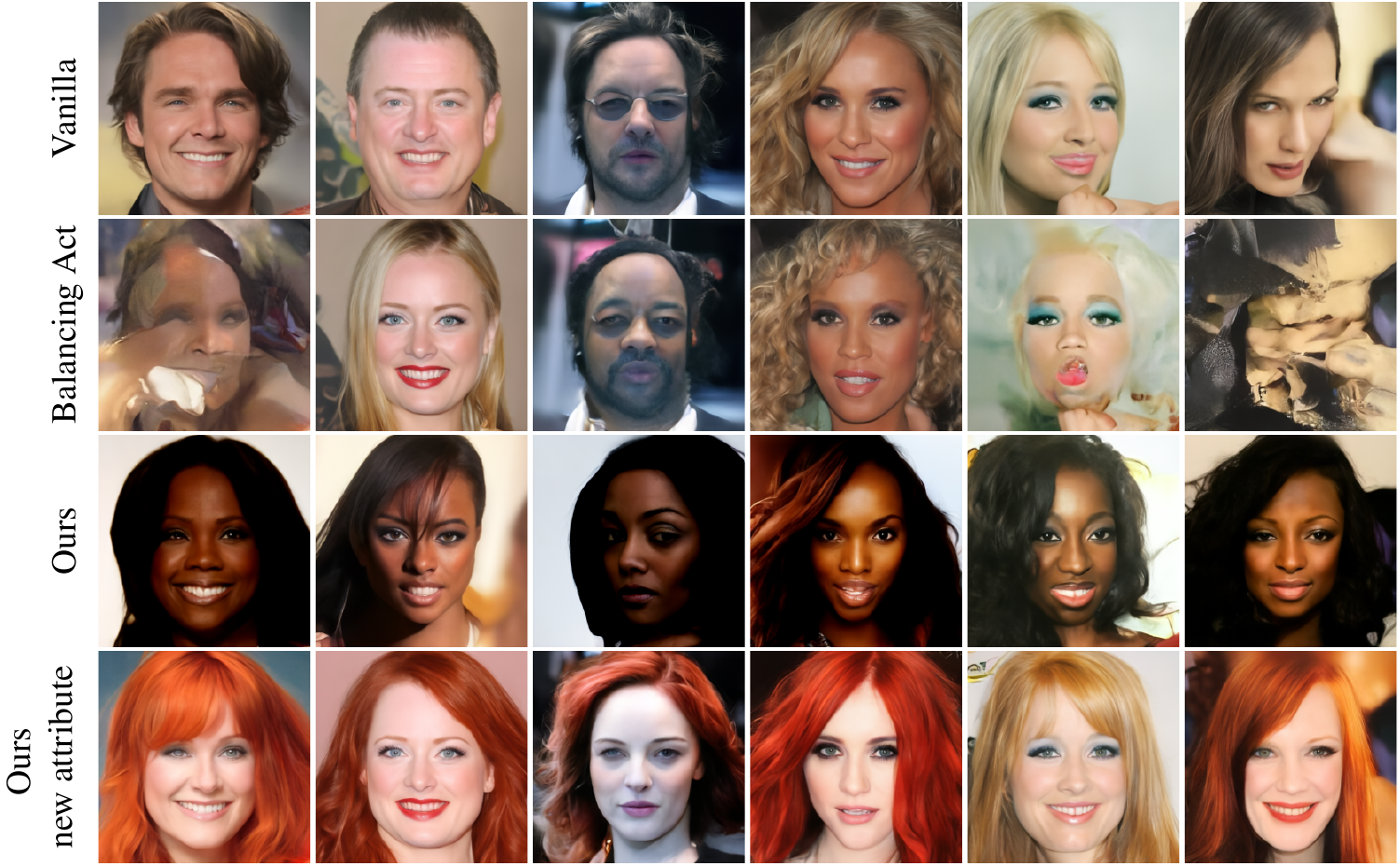}{51}{0.874/{Original}, 0.625/{Balancing Act \\ \textit{dark-skinned w.}}, 0.375/{\method{} \\ \textit{dark-skinned w.}}, 0.126/{Debias Anything \\ \textit{red-hair w.}}}
\caption{CelebA-HQ, P2 backbone, six shared seeds per column. Top: unguided samples. Second row:
Balancing Act steered towards a dark-skinned woman with its labelled predictor. Third row:
\method{} on the same target, given as two sentences. Bottom: \method{} on a second attribute
with no CelebA-HQ label, red hair, from the same seeds. 
}
\label{fig:celebahq-qual}
\end{figure}

% ===========================================================================
\section{Experiments}
\label{sec:results}

\label{sec:experiments}

We  compare in this section the proposed method, \method{}, with published, more restrictive (\cref{tab:positioning}), baselines at matched
fairness, on unconditional and a text-to-image models. 

\paragraph{Setup.}
\label{sec:res-setup}
We test \method{} on three generators: an EDM \citep{karras2022edm} trained on CelebA
\citep{liu2015celeba} at $64\times64$ (deferred to~\cref{app:celeba}), the P2 model of \citet{choi2022p2} trained on CelebA-HQ \citep{karras2018progressive} at
$256\times256$, and Stable Diffusion~1.5 \citep{rombach2022ldm}. On CelebA the attributes are: \emph{gender}
and \emph{eyeglasses}, binary with a target of one half; on CelebA-HQ: \emph{gender}, \emph{race} and \emph{eyeglasses}; on Stable
Diffusion: \emph{gender}, \emph{age} and \emph{race}, on occupation prompts that do not mention these attributes. The target proportion is uniform for each of these attribute possible values.
Fairness is measured
by the discrepancy $\FD$ between the achieved and target proportions, CLEAM-corrected where possible
\citep{teo2023measuring}.
Quality is assessed w.r.t.\ a debiased reference set with $\FID$ \citep{heusel2017fid} (and also $\MIND$ from \citet{berthet2026mind} in the appendix), by precision and recall \citep{kynkaanniemi2019precision},
and diversity with the Vendi score \citep{friedman2023vendi}, a diversity measure counting the effective number of distinct
images without a reference set. 
CelebA-HQ and Stable Diffusion comparisons follow the protocols of Balancing Act
\citep{parihar2024balancing} and DiffLens \citep{shi2025difflens}, so that their published numbers
can be reused. Prompts, baselines, metric definitions, weights, and other details are included in
\cref{app:details}.

\subsection{Comparison with baselines}
\label{sec:res-baselines}

\paragraph{CelebA-HQ.} Every number in \cref{tab:celebahq} comes from our
own runs of the three methods under the protocol of Balancing Act~\citep{parihar2024balancing}. For Score Guidance~\citep{tiwary2026scoreguidance}, the numbers we obtain differ from those reported in the original paper. We reproduced its
attribute classifier, and we believe the gap comes from the reference set behind $\FID$, which
might not be the same as Balancing Act; the Score Guidance version we run is
the stronger of the two suggested in the paper. \cref{tab:celebahq} reports our method at two weights, \textit{fairness setting} and \textit{quality setting}, since fairness guidance might reduce generation quality.

At the \textit{fairness setting} weight the discrepancy is the lowest in race and eyeglasses and, in
gender, second best behind \textbf{Score Guidance, which compensates for its low discrepancy with worse $\FID$}: at similar discrepancy, our $\FID$ is $7$ to $14$
points below the best version of the concurrent method. At the \textit{quality setting} weight, we set the discrepancy to approach Balancing Act's by tweaking $w_{\mathrm{fair}}$, so that only $\FID$ differs between the two bracketed rows: it is five points
lower for us on eyeglasses, and $2.2$ and $0.6$ points higher on race and gender. Adding
the diversity term at the same weight lowers discrepancy and $\FID$ together on gender and race, and takes $\FID$ below Balancing Act on gender and eyeglasses. So the diversity term
closes the $\FID$ gap that the fairness term leaves, still without any label. 
\cref{fig:celebahq-qual} shows additional advantage of the proposed method: the same model steered towards a target \textit{red-hair woman} that CelebA-HQ does not have a label for. Thus, \textbf{Debias Anything generates the image proportion closer to the target than a classifier trained on labeled images}, and with the diversity term \textbf{it gives a lower or comparable $\FID$ at comparable discrepancy.}

\begin{table}[t]\centering
\setlength{\tabcolsep}{5pt}
\caption{CelebA-HQ, P2 backbone, protocol of Balancing Act ($10\,000$ images, one seed); all rows
are our runs. $\FD$ is CLEAM-corrected except on race, which has no label to correct against. \Legendbest; the bracket marks the rows matched on
discrepancy.}
\label{tab:celebahq}
\vspace{\baselineskip}
\scriptsize
\begin{tabular}{lrrrrrr}
\toprule
& \multicolumn{2}{c}{Gender} & \multicolumn{2}{c}{Race} & \multicolumn{2}{c}{Eyeglasses} \\
\cmidrule(lr){2-3}\cmidrule(lr){4-5}\cmidrule(lr){6-7}
Method & \lo{$\FD$} & \lo{$\FID$} & \lo{$\FD$} & \lo{$\FID$} & \lo{$\FD$} & \lo{$\FID$} \\
\midrule
Original & 0.257 & 54.9 & 0.606 & 62.8 & 0.666 & 73.9 \\
Score Guidance \citep{tiwary2026scoreguidance} & \best{0.001} & \shade{15}{64.7} & \shade{98}{0.012} & \shade{15}{65.4} & \shade{93}{0.075} & \shade{15}{66.8} \\
\rowmark{mA}Balancing Act \citep{parihar2024balancing} & \shade{36}{0.043} & \shade{93}{50.1} & \shade{15}{0.182} & \best{52.0} & \shade{15}{0.187} & \shade{79}{50.5} \\
\midrule
\rowmark{mB}\method{}, quality setting & \shade{15}{0.057} & \shade{89}{50.7} & \shade{26}{0.159} & \shade{86}{54.2} & \shade{18}{0.183} & \shade{99}{45.4} \\
\quad $+$ diversity term & \shade{26}{0.050} & \best{48.7} & \shade{32}{0.147} & \shade{96}{52.7} & \shade{17}{0.184} & \best{45.2} \\
\method{}, fairness setting & \shade{89}{0.008} & \shade{89}{50.7} & \best{0.008} & \shade{61}{58.2} & \best{0.065} & \shade{43}{59.8} \\
\quad $+$ diversity term & \shade{89}{0.008} & \shade{93}{50.0} & \best{0.008} & \shade{62}{58.0} & \best{0.065} & \shade{43}{59.8} \\
\bottomrule
\end{tabular}
\rowbracket{mA}{mB}
\end{table}

\paragraph{Stable Diffusion.} The baseline rows of \cref{tab:t2i} are the numbers the DiffLens and ICM papers report under the setting of~\citet{shi2025difflens} and \citet{zaleska2026icm}. We re-ran the unguided model under the same setup and
evaluation and recovered the corresponding numbers reported by DiffLens, so we directly report all their published numbers. \textbf{Every baseline knows the attribute in advance}, through labels, a classifier
or a training run for that attribute. \textbf{Ours does not}: the adapter is
trained once, on prompts about jobs that say nothing about gender, age or race, and the attribute
enters only at sampling, as two sentences, with one weight for all three attributes. \cref{fig:sd-qual} shows doctors and firefighters before
and after guidance on shared seeds. 

With 
much less information about the attributes, our method brings the discrepancy from $0.56$--$0.75$ down to $0.20$
on all three attributes, and image quality stays close to the unguided model's: $\FID$ is second on
gender and age, behind DiffLens, and first on race. \cref{app:t2i-full} reports the detailed explanation on the residual discrepancy. Some methods go lower on one attribute, always
with attribute-specific training or supervision, and none holds across the three; only ours is below $0.21$ on all three. Where DiffLens is ahead, on gender and age, it is advantaged by attribute knowledge: labeled images locate the features that carry the attribute inside the network,
and it edits those features directly. 
Thus, in the setting of \citet{shi2025difflens}, \textbf{our unsupervised results are better
than or comparable to those of methods that leverage labeled data}.

\begin{table}[t]\centering
\setlength{\tabcolsep}{4pt}
\caption{Stable Diffusion~1.5, protocol of \citet{shi2025difflens}, whose reported numbers fill the
baseline rows bar ICM. All baselines incorporate attribute knowledge (labeled data, attribute-specific training);
ours only gets two sentences. \Legendbest.}
\label{tab:t2i}
\vspace{\baselineskip}
\scriptsize
\begin{tabular}{lrrrrrr}
\toprule
& \multicolumn{2}{c}{Gender} & \multicolumn{2}{c}{Age} & \multicolumn{2}{c}{Race} \\
\cmidrule(lr){2-3}\cmidrule(lr){4-5}\cmidrule(lr){6-7}
Method & \lo{$\FD$} & \lo{$\FID$} & \lo{$\FD$} & \lo{$\FID$} & \lo{$\FD$} & \lo{$\FID$} \\
\midrule
Original & 0.564 & 120.1 & 0.752 & 120.1 & 0.558 & 120.1 \\
\midrule
\emph{Attribute-specific} \\
Latent editing \citep{kwon2023asyrp} & \shade{15}{0.408} & \shade{15}{166.1} & \shade{29}{0.682} & \shade{15}{200.9} & \shade{29}{0.524} & \shade{29}{153.1} \\
$h$-distribution \citep{parihar2024balancing} & \shade{43}{0.222} & \shade{43}{151.7} & \shade{43}{0.506} & \shade{43}{147.7} & \shade{15}{0.544} & \shade{58}{126.9} \\
Latent direction \citep{li2024selfdiscovering} & \shade{29}{0.305} & \shade{58}{129.4} & \shade{86}{0.052} & \shade{72}{113.8} & \best{0.175} & \shade{43}{128.3} \\
Fine-tuning \citep{shen2024finetuning} & \shade{86}{0.050} & \shade{29}{161.5} & \shade{15}{0.746} & \shade{29}{161.5} & \shade{72}{0.198} & \shade{15}{161.5} \\
DiffLens \citep{shi2025difflens} & \best{0.046} & \best{112.8} & \best{0.049} & \best{99.2} & \shade{43}{0.401} & \shade{72}{119.9} \\
ICM \citep{zaleska2026icm} & \shade{72}{0.087} & \shade{72}{122.1} & \shade{72}{0.133} & \shade{58}{114.9} & \shade{58}{0.266} & \shade{86}{116.9} \\
\midrule
\emph{Attribute-unsupervised} \\
\method{} & \shade{58}{0.200} & \shade{86}{113.8} & \shade{58}{0.206} & \shade{86}{110.6} & \shade{86}{0.195} & \best{114.7} \\
\bottomrule
\end{tabular}
\end{table}

\subsection{Ablations on CelebA}
\label{sec:res-ablations}

Results are included in \cref{app:celeba}; selected findings are the following. \textbf{Reading the score
through the adapter is what makes the rare attribute reachable.} At equal weights on eyeglasses
it drives the discrepancy from $0.573$ to $0.32$ while lowering $\MIND$ from $20.51$ to $8.9$,
whereas the same score read from the encoder on the denoised image never passes $0.489$ and
raises $\MIND$ (\cref{tab:hspace-vs-pixel,fig:hspace-vs-tweedie}).
\textbf{Every baseline diversity term improves on the fairness
term alone, but ours lowers the discrepancy most} (\cref{tab:fairdiverse-5seeds}). On gender all
four lower $\MIND$ from $2.52$ to about $2.0$, and our term
lowers the discrepancy from $0.031$ to $0.006$; on eyeglasses they lower the discrepancy and raise
Vendi, while $\MIND$ moves within one standard deviation and $\FID$ rises by up to $1.2$.
\textbf{Our method is resilient to changing the wording of attribute reference sentences}, as shown in \cref{fig:celebahq-prompts}.

\section{Conclusion}

We introduce \method{}, a generative model guidance that sets the proportions of a sensitive attribute described in words, in
unconditional and text-to-image diffusion models, with no labels and no generator retraining. A generic adapter gives access to the embedding space of an image-text encoder; from it, a batch-level fairness term enforces the target proportions and a diversity term
counteracts the loss of diversity that guidance causes. Although it receives very little attribute information,
\method{} is better than or comparable to methods that rely on attribute supervision.

\textbf{Limitations.} The method has the limits of its parts. The reconstituted attribute relies on the image-text embedding, with the encoder's own biases. 
Guidance can only push towards images the model can produce: on Stable Diffusion, male nurses are rare and remain hard to reach. Steering one attribute moves others, as eyeglasses moved gender on CelebA, so attribute combinations should be considered.
The proportion is
enforced on a batch and not at sample level.
The two guidance weights have a narrow useful range that we
select by sweeps as detailed in the appendix. Our evidence is on human faces only, at uniform target proportions, with single seeds on
CelebA-HQ and Stable Diffusion.

\textbf{Future work.} We believe that the work has several future directions. The adapter itself could serve to
audit a model's biases at a fraction of the cost of generating and classifying its samples, or to
stop a biased trajectory early. The same recipe, one adapter from an inner layer to an embedding shared
with text, carries over to video and audio diffusion models, whose encoders exist. 
Finally, the proportions we enforce are measured by classifiers; putting the correction of their
errors inside the loop, rather than after it, would let the sampler aim at the true proportion
instead of the counted one.

% \subsection*{AI use statement}

% In this work, we used generative AI tools to help derive our theoretical analysis and as writing and coding assistants.

% We have
% not used generative AI assistants to generate synthetic data, to develop theoretical models or
% conceptual frameworks, to propose or refine hypotheses, to design or give feedback on the
% methodology or the experiments, to implement the method, or to interpret the results; translation,
% dataset cleaning and reformatting, and qualitative or thematic data analysis are not applicable to
% this work.

% We used generative AI tools for the following tasks: to
% draft parts of the paper and edit it for readability, to assist in theorem proving, to create and edit software code for
% evaluation and plotting, to produce the LaTeX of tables and figures, to identify and summarize
% relevant literature, and to format references. We have reviewed all AI-assisted work. Every
% AI-drafted sentence was checked by the authors against the tables and the cited papers; every
% number in the paper comes from our own runs or from the cited papers; AI-edited code was read and
% tested by the authors before use. We take responsibility for the final content of this work,
% including text, claims or artifacts produced with the aid of generative AI.

\subsection*{Ethics statement}

This work makes a generator represent groups of people in proportions an operator chooses, and
the mechanism that sets a proportion to one half can set it to zero. Any method that controls an
attribute offers the same lever, including the label-based methods we compare with. Furthermore, the method makes a chosen distribution
enforceable and says nothing about which distribution is fair.

The attributes we steer (gender, race, age, skin tone and
eyeglasses) are named in text and assessed by classifiers we trained and external tools like FairFace or FID,
that carry biases of their own and may impose fixed categories, binary gender and four races among some of
them. Our discrepancy numbers are only as good as the used classifiers.

\subsection*{Reproducibility statement}

\cref{app:details} lists the generators, attributes and prompts, the baselines and what each is
given, the metric definitions, and the protocol of every comparison. \cref{app:adapter} gives the
adapter's layers, loss, and training data, \cref{app:celeba} every ablation table with seeds and
standard deviations, and \cref{app:readouts} the proofs. Our code is available at \url{https://github.com/theaudaudiffret/Debias_anything}.

% It is important that the work published in ICLR is reproducible. Authors are
% strongly encouraged to include a paragraph-long Reproducibility Statement at the
% end of the main text (before references) to discuss the efforts that have been
% made to ensure reproducibility. This paragraph should not itself describe
% details needed for reproducing the results, but rather reference the parts of
% the main paper, appendix, and supplemental materials that will help with
% reproducibility. For example, for novel models or algorithms, a link to an
% anonymous downloadable source code can be submitted as supplementary materials;
% for theoretical results, clear explanations of any assumptions and a complete
% proof of the claims can be included in the appendix; for any datasets used in
% the experiments, a complete description of the data processing steps can be
% provided in the supplementary materials. Each of the above are examples of
% things that can be referenced in the reproducibility statement.

% \subsubsection*{Author Contributions}
% If you'd like to, you may include  a section for author contributions as is done
% in many journals. This is optional and at the discretion of the authors.

\subsubsection*{Acknowledgments}

This project was provided with computing AI and storage resources by GENCI at IDRIS thanks to the grant 2025-A0191015707 on the supercomputer Jean Zay's A100 partition.
% Use unnumbered third level headings for the acknowledgments. All
% acknowledgments, including those to funding agencies, go at the end of the paper.

\bibliography{refs}
\bibliographystyle{iclr2027_conference}

\appendix
\clearpage
\startcontents[appendix]
\titlecontents{section}[1.6em]{\addvspace{2pt}\small\bfseries}{\contentslabel{1.6em}}{\hspace*{-1.6em}}{\hfill\contentspage}
\titlecontents{subsection}[3.9em]{\small}{\contentslabel{2.3em}}{\hspace*{-2.3em}}{\titlerule*[0.6pc]{.}\contentspage}
\printcontents[appendix]{}{1}{\section*{Appendix contents}\setcounter{tocdepth}{2}}

% ===========================================================================
\section{Additional method details}
\label{sec:background}
\label{app:background}

\subsection{Proofs of theoretical results}
\label{app:readouts}

\begin{proof}[Proof of \cref{lem:score}]
Fix $k$, $\sigma$ and $x_\sigma$. Write $p(Y_k = 1 \mid x_0) = c\,e^{w S_k(x_0;0)}$, with $c$ independent
of $x_0$, and $S = S_k(x_0;0) \in [-1,1]$ for the logit of the clean image, a function of $x_0$ under
the posterior given $x_\sigma$. Let $\Lambda(t) = \log \mathbb{E}[e^{t S} \mid x_\sigma]$. Then
$\log p_\sigma(Y_k = 1 \mid x_\sigma) = \log c + \Lambda(w)$ by~\cref{eq:cls-ideal}. By linearity,
$\Lambda'(0) = \mathbb{E}[S \mid x_\sigma] = S_k(x_\sigma;\sigma)$, so
$\log \bar p_\sigma(Y_k = 1 \mid x_\sigma) = \log c + w\,S_k(x_\sigma;\sigma) = \log c + w\,\Lambda'(0)$
by~\cref{eq:delta} and~\cref{eq:cls-ours}, and the gap is $\Lambda(w) - \Lambda(0) - w\,\Lambda'(0)$.
Since $S$ is bounded, $\Lambda$ is smooth and $\Lambda''(t) = \mathrm{Var}_t[S]$, the variance under the
posterior tilted by $e^{t S}$. Taylor's formula with integral remainder gives:
\begin{equation}
    \Lambda(w) - \Lambda(0) - w\,\Lambda'(0) \;=\; \int_0^w (w - t)\,\mathrm{Var}_t[S]\,\mathrm{d}t .
\end{equation}
Since $S \in [-1,1]$, the density of the tilted posterior with respect to the posterior,
$e^{t S}/\mathbb{E}[e^{t S} \mid x_\sigma]$, lies in $[e^{-2t}, e^{2t}]$. Hence, for every
$b \in \mathbb{R}$, $\mathbb{E}_t[(S - b)^{2}]$ lies between $e^{-2t}$ and $e^{2t}$ times
$\mathbb{E}[(S - b)^{2} \mid x_\sigma]$; taking the minimum over $b$ gives
$e^{-2t}V_k(x) \le \mathrm{Var}_t[S] \le e^{2t}V_k(x)$. Bounding $e^{\pm 2t}$ by $e^{\pm 2w}$ on $[0,w]$
and using $\int_0^w (w - t)\,\mathrm{d}t = w^{2}/2$ gives~\cref{eq:gap}.
\end{proof}

\begin{proof}[Proof of \cref{prop:quota}]
(i)~$f_{\mathrm{fair}}$ is the maximum of the finitely many linear functions
$A \mapsto \sum_i a_{i\pi(i)}$, $\pi \in \Pi$, hence convex and piecewise linear. Where the maximiser
$\pi^{\star}$ is unique, the inequalities that make it the maximiser are strict and remain so on a
neighbourhood of $A$, where $f_{\mathrm{fair}}$ therefore coincides with $\sum_i a_{i\pi^{\star}(i)}$;
its partial derivatives are $\mathbf{1}[\pi^{\star}(i) = k]$. (ii)~Since $a_{ik}$ depends on $x_i$
only, $\nabla_{x_i} f_{\mathrm{fair}} = \sum_k \mathbf{1}[\pi^{\star}(i) = k]\,\nabla_{x_i} a_{ik}
= \nabla_{x_i} a_{i\pi^{\star}(i)}$ wherever (i) applies, and substituting
into~\cref{eq:replacement} gives~\cref{eq:fairguidance}.
\end{proof}

\subsection{Binary attributes}
\label{app:binary}

For a binary attribute, each value is described against the other instead of against a neutral
sentence. With $t_{+}$ and $t_{-}$ the two sentences, for gender \emph{a photo of a woman} and
\emph{a photo of a man},
\begin{equation}
  \Delta \;=\; \frac{\textenc(t_{+}) - \textenc(t_{-})}{\lVert \textenc(t_{+}) - \textenc(t_{-})\rVert},
  \qquad \Delta_1 = \Delta, \qquad \Delta_2 = -\Delta .
  \label{eq:delta-binary}
\end{equation}
The clean logit $S(x_0;0) = \langle \imgenc(x_0), \Delta\rangle$ is positive when the image is closer to $t_{+}$
than to $t_{-}$, and its sign is the zero-shot classifier of \cref{sec:prelim}. With
$a_i = S(x_i;\sigma)$, the noisy logit of~\cref{eq:delta} along $\Delta$, the scores
of~\cref{eq:fair} are $a_{i1} = a_i$ and
$a_{i2} = -a_i$. Writing $s_i = +1$ if $\pi(i) = 1$ and $s_i = -1$ otherwise, the total score of an
assignment is $\sum_i s_i a_i$, with exactly $B_{+} = B_1$ positive signs. Moving a $+1$ from a
smaller score to a larger one cannot decrease this sum, so it is maximised by giving the positive
signs to the $B_{+}$ largest scores (the rearrangement inequality). The assignment
of~\cref{eq:fair} thus reduces to ranking the batch along $\Delta$,
\begin{equation}
  s_i =
  \begin{cases}
    +1 & \text{if } a_i \text{ is among the } B_{+} \text{ largest scores of the batch,}\\
    -1 & \text{otherwise,}
  \end{cases}
  \qquad
  f_{\mathrm{fair}}(x^{1:B}) \;=\; \sum_{i=1}^{B} s_i\,a_i ,
  \label{eq:fair-binary}
\end{equation}
and~\cref{eq:fairguidance} becomes
$\Dtil(x_i;\sigma) = \Dth(x_i;\sigma) + w_{\mathrm{fair}}\,\sigma^{2}\, s_i\,\nabla_{x_i} a_i$: each
image is pushed so that its score moves towards the pole it is assigned to. The maximiser is unique
wherever the $B_{+}$-th and $(B_{+}+1)$-th largest scores differ. The CelebA experiments of
\cref{sec:results} use this case.

\subsection{Alternative diversity term}
\label{app:siglipms}

A second diversity term, which we name \code{SigLIPMS}, takes the second estimate of~\cref{eq:perturbproj} from the
frozen image encoder applied to the Tweedie estimate:
\begin{equation}
  f_{\mathrm{div}}(x) \;=\; 1 - \big\langle m(x_\sigma;\sigma),\; \sg{\imgenc(\xhat(x))} \big\rangle
  \;\approx\; 1 - \big\langle \adapt_\psi(\hs(x_\sigma);\sigma),\; \sg{\imgenc(\xhat(x))} \big\rangle .
  \label{eq:siglipms}
\end{equation}
The term uses the very discrepancy that motivated the adapter: $m(x_\sigma;\sigma) =
\mathbb{E}[\imgenc(x_0) \mid x_\sigma]$ takes the expectation in embedding space and
$\imgenc(\xhat(x))$ in pixel space; the two coincide when the posterior is concentrated on one image,
and they differ when it spreads over several. The term is thus small where the model already knows
what it is generating, on the modes, and large in the tail. The two terms are the two natural ways to obtain a
second estimate, one external and one internal, and they fail differently: \code{SigLIPMS} inherits
the encoder's behaviour on blurred images, while our original diversity term inherits the adapter's own errors.
\cref{app:celeba} compares them per attribute.

\subsection{Density baseline}
\label{app:densitybaselines}

The diversity terms of \cref{sec:rarity,app:siglipms} compare two estimates of the same
image. A second family, after \citet{sehwag2022lowdensity}, compares $m(x_\sigma;\sigma)$ with a fixed
reference set $\mathcal{R}$ of $10{,}000$ embeddings $\imgenc(x_0)$ of real images and measures rarity
relative to the data. We use the mean distance to the $\kappa$ nearest neighbours (\code{avgKNN}),
\begin{equation}
  f_{\mathrm{div}}(x) \;=\; \frac{1}{\kappa} \sum_{r \in \mathcal{N}_\kappa(x)} \big\lVert m(x_\sigma;\sigma) - r \big\rVert ,
  \label{eq:avgknn}
\end{equation}
with $\mathcal{N}_\kappa(x)$ the $\kappa$ nearest neighbours of $m(x_\sigma;\sigma)$ in $\mathcal{R}$,
computed, as in \cref{sec:fairterm}, with the adapter output in place of $m(x_\sigma;\sigma)$; both lie
on the unit sphere, so this is a function of the inner products
$\langle \adapt_\psi(\hs(x_\sigma);\sigma), r\rangle$, $r \in \mathcal{R}$. The two families measure
different quantities, rarity relative to the data against the posterior spread of the model. They are compared in \cref{app:celeba}. In every case, $w_{\mathrm{div}} > 0$ increases the term and
therefore pushes towards low-density regions.

\section{Experimental details}
\label{app:details}

This section contains all details needed to reproduce \cref{sec:experiments}, except for adapters which are described in
\cref{app:adapter}.

\paragraph{Implementation details.} We implemented our method using PyTorch \citep{Paszke2019} with the help of SciPy \citep{2020SciPy-NMeth} for the matching algorithm. Our code is available at \url{https://github.com/theaudaudiffret/Debias_anything}. Every run uses one NVIDIA A100 (40\,GB) GPU.

\subsection{Generators}
\label{app:details-backbones}
On CelebA \citep{liu2015celeba} we train an EDM model \citep{karras2022edm} at $64\times64$: a
U-Net \citep{ronneberger2015unet} of $19.4$\,M parameters (base width $96$, multipliers $(1,2,2,2)$,
two residual blocks per level in the encoder and three in the decoder, self-attention at $16^2$,
$8^2$ and in the middle block, dropout $0.1$), trained with the EDM loss and preconditioning on the
$162{,}770$ training images (centre crop $140$, no flip) with Adam \citep{kingma2017adammethodstochasticoptimization}, batch
$256$, EMA $0.9999$ and a learning rate of $2\cdot10^{-4}$ divided by $2$, $10$ and $10$ at plateaus
of the validation loss; the EMA weights of lowest validation loss are reached after $1{,}487$ epochs
($9.5\cdot10^5$ steps, $8.5$ days). On CelebA-HQ \citep{karras2018progressive} we use the released
P2 model \citep{choi2022p2}, an ADM U-Net \citep{dhariwal2021beatgans} of $93.6$\,M parameters at
$256\times256$ (width $128$, multipliers $(1,1,2,2,4,4)$, attention at $16^2$) with a linear $\beta$
schedule over $1{,}000$ steps. The text-to-image model is the released Stable Diffusion~1.5
from \citet{rombach2022ldm} ($860$\,M-parameter U-Net, $512\times512$).

\subsection{Attributes, prompts and sample sizes}
\label{app:details-prompts}
On CelebA \citep{liu2015celeba} the attributes are gender (\emph{a photo of a man} / \emph{a photo
of a woman}) and eyeglasses (\emph{a person with eyeglasses or sunglasses} / \emph{a person without
eyeglasses or sunglasses}). On CelebA-HQ \citep{karras2018progressive}, following Balancing Act
\citep{parihar2024balancing}, gender keeps the CelebA pair, race adds \emph{a photo of a black
person} / \emph{a photo of a white person}, and eyeglasses is \emph{a photo of a person with
eyeglasses} / \emph{a photo of a person without eyeglasses}. All these attributes are binary and
use the form of \cref{app:binary}. On Stable Diffusion \citep{rombach2022ldm}, following DiffLens
\citep{shi2025difflens}, images are generated from \emph{A face of a \{doctor, firefighter, nurse,
receptionist\}}. Gender is binary (\emph{a photo of a woman} / \emph{a photo of a man}); age and race
use the $K$-valued form of \cref{sec:fairterm} with $t_0 = $ \emph{a photo of a person}, and
$t_1, t_2, t_3 = $ \emph{a photo of a young / middle-aged / old person} for age, $t_1, \dots, t_4 =
$ \emph{a photo of a white / black / asian / indian person} for race, the four FairFace groups of
the DiffLens protocol. The assignment of \cref{eq:fair} is solved with the Hungarian algorithm
\citep{kuhn1955hungarian} on a cost matrix in which the column of value $k$ is repeated $B_k$ times,
so that the quotas are met exactly. The targets are uniform, $\alpha_k = 1/K$. Each configuration is
sampled with five seeds of $5{,}000$ images on CelebA, one seed of $10{,}000$ images per attribute on
CelebA-HQ, and one seed of $500$ images per prompt on Stable Diffusion.

\subsection{Sampling}
\label{app:details-sampling}
\textbf{CelebA}: deterministic Heun solver \citep{karras2022edm}, $100$ steps on $\sigma \in
[0.002, 80]$ ($\rho = 7$), EMA weights, guidance for $\sigma \le \sigma_{\max} = 10$ ($67$ steps),
batch size $B = 100$; SGMS \citep{um2024sgms} and our diversity term of \cref{eq:perturbproj} act at
every fifth network evaluation and re-noise at the fixed level $\sigma_s = 0.96$ of \cref{eq:sgms},
\code{SigLIPMS} (\cref{app:siglipms}) at every evaluation. \textbf{CelebA-HQ}: DDIM
\citep{song2021denoising}, $50$ steps, $\eta = 1$, guidance at every step, batch size $B = 50$; the
unguided model and Balancing Act \citep{parihar2024balancing} are sampled with the released code on
the same schedule with batch size $B=100$. \textbf{Stable Diffusion}: Euler with trailing timesteps,
$30$ steps, classifier-free guidance \citep{ho2022cfg} at scale $7.5$, guidance for $\sigma \in
[0.2, 4]$ ($21$ steps), batch size $B = 100$ images of a single prompt. A run takes
$18$--$25$\,min per $5{,}000$ images on CelebA, $2.1$--$2.9$\,h per $10{,}000$ on CelebA-HQ and
$37$\,min per $2{,}000$ on Stable Diffusion.

\subsection{Baselines}
\label{app:details-baselines}
On CelebA-HQ we rerun Balancing Act \citep{parihar2024balancing}, a predictor trained on labelled
attributes from the same U-Net activations, at its published setting (scale $1{,}500$, batch size
$100$), and the exemplar variant of Score Guidance \citep{tiwary2026scoreguidance}, which steers a
share of each batch towards the DDIM-inverted scores of eight exemplar images per class, with the
authors' code and gender preset on all three attributes. Every method is then evaluated in the same
way, under the protocol of Balancing Act: the same reference sets and our own attribute classifiers
(\cref{app:details-metrics}). On Stable Diffusion we evaluate with the released code of DiffLens
\citep{shi2025difflens} and reproduce its unguided row ($\FD$ $0.565$, $0.747$ and $0.560$ on
gender, age and race against $0.564$, $0.752$ and $0.558$ reported; $\FID$ $117.1$ against
$120.1$), so that our numbers are comparable with those it reports for latent editing
\citep{kwon2023asyrp}, $h$-space distribution guidance \citep{parihar2024balancing}, discovered
latent directions \citep{li2024selfdiscovering}, fine-tuning \citep{shen2024finetuning} and DiffLens
itself; ICM \citep{zaleska2026icm}, which locates the attribute with labeled probes, is added under
the same protocol. On CelebA our two diversity terms are compared with SGMS \citep{um2024sgms}, the
pixel-space score they transpose. 

For~\cref{fig:celebahq-qual}, Balancing Act is run in its per-sample mode, in which every image is guided towards a fixed attribute value, here the combination of its gender and race predictors set to female and Black, at the same scale as the single-attribute runs. Its batch-level distribution mode, which matches a target proportion over the batch, did not reach the compound target in our runs.

\subsection{Metrics}
\label{app:details-metrics}

\paragraph{Reference sets.}
On CelebA \citep{liu2015celeba} the reference has the size of the generated set, drawn for the
diversity experiments from the minority set of SGMS \citep{um2024sgms}, the $10{,}000$ training
images of highest $\avgknn$ \citep{sehwag2022lowdensity} (5 nearest neighbours, ResNet-50 features
\citep{he2015deepresiduallearningimage}), and for the fairness experiments with replacement from the training set by
a sampler balanced on the attribute. On CelebA-HQ \citep{karras2018progressive}, as in Balancing
Act \citep{parihar2024balancing}, it is $5{,}000$ images balanced with the attribute labels for
gender and eyeglasses ($2{,}500$ per class; the $1{,}468$ images with eyeglasses are drawn with
replacement), and for race, which has no label, the $2{,}500$ images most and the $2{,}500$ least
similar to \emph{a black person} under CLIP ViT-B/32 \citep{radford2021clip}. On Stable Diffusion
we follow DiffLens \citep{shi2025difflens}, which takes FFHQ \citep{karras2019stylebasedgeneratorarchitecturegenerative} as the real
reference for text-to-image models and computes $\FID$ on the $2{,}000$ images of the four prompts
at once.

\paragraph{Quality and diversity.}
$\FID$ \citep{heusel2017fid} is computed on Inception-V3 features \citep{szegedy2015rethinkinginceptionarchitecturecomputer} on
CelebA and CelebA-HQ, and with Clean-FID \citep{parmar2022cleanfid} on Stable Diffusion. On CelebA
we add $\MIND$ \citep{berthet2026mind}, an optimal-transport distance on the same features
($1{,}000$ random projections, equal set sizes) that drops the Gaussian assumption of $\FID$, known
to misrank diffusion models \citep{stein2023exposing} and to bias the estimate with sample size
\citep{chong2020unbiasedfid}; precision and recall\citep{kynkaanniemi2019precision}, all with five nearest neighbours, and
density and coverage \citep{naeem2020density}, which separate fidelity from coverage; and the Vendi
score \citep{friedman2023vendi} with a cosine kernel, which needs no reference. On Stable Diffusion,
CLIP-T and CLIP-I \citep{radford2021clip,shi2025difflens} are the similarity of a sample to its
prompt and to the unguided sample of the same prompt; like $\FD$, they are computed per occupation
and averaged over the four.

\paragraph{Fairness.}
On CelebA and CelebA-HQ, $\FD$ is computed from hard predictions:
$\FD = \lVert (p, 1-p) - (\tfrac12, \tfrac12)\rVert_2 = \sqrt{2}\,\lvert p - \tfrac12\rvert$, with $p$
the proportion of class $0$ among hard predictions. For gender and eyeglasses, $p$ is the
CLEAM-corrected proportion $\pstar$ \citep{teo2023measuring} of class $0$ (female, no eyeglasses).
On CelebA the attribute is counted by two CNNs trained on the training split
with class-balanced sampling, at $98.8\%$ (gender) and $99.5\%$ (eyeglasses) validation accuracy,
with per-class accuracies $(\alpha_0, \alpha_1) = (0.987, 0.989)$ and $(0.997, 0.981)$. On
CelebA-HQ, as in Balancing Act \citep{parihar2024balancing}, gender and eyeglasses are counted by
ResNet-18s \citep{he2015deepresiduallearningimage} pretrained on ImageNet and fine-tuned on $80\%$
of CelebA-HQ ($97.9\%$ and $98.2\%$ balanced test accuracy), with per-class recalls
$(0.989, 0.968)$ and $(0.998, 0.966)$. Race, which has no label, is counted by FairFace
\citep{karkkainen2021fairface} restricted to its White and Black outputs; since the lack of labels
prevents a proper evaluation of this classifier, race is not CLEAM-corrected and $p$ is the raw
proportion of White predictions. On Stable Diffusion, as in DiffLens \citep{shi2025difflens}, faces
are cropped with dlib \citep{king2009dlib} and classified by FairFace \citep{karkkainen2021fairface}
into two genders, three ages (0--19, 20--59, 60+) and four races (White, Black, East and Southeast
Asian, Indian, not renormalised), and $\FD$ is the distance of the mean softmax output to the
uniform target, without CLEAM correction, averaged over the four occupations.

\subsection{Protocols}
\label{app:details-protocol}
On CelebA we keep the $w_{\mathrm{fair}}$ of lowest $\MIND$ in the sweep of \cref{app:sweeps} ($5$
on gender, $200$ on eyeglasses) and, for each diversity term, the $w_{\mathrm{div}}$ of best diversity
in the ablation of \cref{app:divsweeps}; entries are means over five seeds, with standard deviations
in \cref{app:fairdiverse}. On CelebA-HQ the fairness setting of \cref{tab:celebahq} uses
$w_{\mathrm{fair}} = 20$ / $100$ / $500$ on gender / race / eyeglasses, and the quality setting
$5$ / $30$ / $180$. With the diversity term in the fairness setting, we choose $w_{\mathrm{div}} = 40$, re-noising at $\sigma_s = 0.5\,\sigma$ every second step for $t \le 800$,
with the race weight lowered to $75$.
On Stable Diffusion $w_{\mathrm{fair}} = 3{,}000$ on every attribute.

\section{Adapter architecture and training}
\label{app:adapter}

\begin{table}[t]
\centering
\footnotesize
\setlength{\tabcolsep}{4pt}
\caption{Adapter training, on one A100 GPU.}
\vspace{\baselineskip}
\label{tab:training}
\begin{tabular}{@{}l>{\raggedright\arraybackslash}p{0.19\linewidth}>{\raggedright\arraybackslash}p{0.19\linewidth}>{\raggedright\arraybackslash}p{0.19\linewidth}rr@{}}
\toprule
Backbone & Data & Noise levels & Optimisation & Epochs & Time \\
\midrule
EDM, CelebA & $162{,}770$ training images & $\sigma$ log-uniform on $[0.002, 80]$
  & AdamW, batch 512, lr $10^{-3}$ & $50$ & $2.9$ h \\
P2, CelebA-HQ & $28{,}000$ ($+2{,}000$ val.) & $t$ uniform on the 49 guided steps
  & AdamW, batch 32, lr $10^{-3}$, wd $10^{-4}$ & $47$ of $60$ & $4.2$ h \\
SD~1.5 & $49{,}000$ generated ($+1{,}000$ val.) & $\sigma$ uniform on $[0.03, 14.6]$
  & AdamW, batch 64, lr $5\cdot10^{-4}$, wd $0.05$, warm-up and cosine & $47$ of $50$ & $11$ h \\
\bottomrule
\end{tabular}
\end{table}

The adapter $\adapt_\psi$ of \cref{sec:adapter} is learned from clean images only. Each training
image $x_0$ is noised at two independent levels, $x_\sigma = x_0 + \sigma\varepsilon$ (on P2
\citep{choi2022p2}, $x_t = \sqrt{\bar\alpha_t}\,x_0 + \sqrt{1-\bar\alpha_t}\,\varepsilon$), passed
through the frozen backbone, and $\adapt_\psi(\hs(x_\sigma);\sigma)$ is regressed onto the image
embedding $\imgenc(x_0)$ of the clean image, given by SigLIP~2 ViT-B/16 \citep{tschannen2025siglip2}
($768$ dimensions), under the cosine loss of \cref{eq:adapterloss}. No sampler trajectory is used. The
targets are fixed, one per image and precomputed once, so a contrastive loss would have nothing to
discriminate that this regression does not already do.

\paragraph{Unconditional backbones.}
On EDM \citep{karras2022edm} and P2 \citep{choi2022p2}, $\hs$ is the output of the U-Net
\citep{ronneberger2015unet} middle block ($192$ and $512$ channels at $8\times8$), read as $64$
tokens with a learned position embedding, pooled by attention with a learned query and a residual
MLP, normalised, modulated by FiLM \citep{perez2017filmvisualreasoninggeneral} on a sinusoidal embedding of the noise level
($\tfrac14\ln\sigma$ on EDM, $t/999$ on P2) initialised at zero, and mapped to $768$ dimensions by a
two-layer MLP followed by $\ell^2$ normalisation ($0.79$\,M and $2.42$\,M parameters).

\paragraph{Text-conditional backbone.}
The adapter reads the prompt-conditioned branch $\Dth(x;\sigma,c)$ of classifier-free guidance
\citep{ho2022cfg}:
\begin{equation}
\Dth^{\gamma}(x;\sigma) \;=\; \Dth(x;\sigma,\varnothing) \;+\; \gamma\,\big(\Dth(x;\sigma,c) - \Dth(x;\sigma,\varnothing)\big),
\label{eq:cfg}
\end{equation}
with $c$ the prompt, $\varnothing$ the empty prompt and $\gamma$ the guidance scale. Its input
combines the mid-block and the four decoder blocks of that branch:
\begin{equation}
\hs(x_\sigma) \;=\; \textstyle\sum_{\ell=1}^{5} \beta_\ell\, \pi_\ell\big(\hs_\ell(x_\sigma)\big),
\qquad \beta = \mathrm{softmax}(b),
\label{eq:dual}
\end{equation}
where $\hs_\ell$ are the block activations ($1280\times8^2$ to $320\times64^2$), $\pi_\ell$
timestep-conditioned bottlenecks to $512\times8\times8$ and $b$ learned logits; a six-layer
transformer (width $512$) with attention pooling maps the $64$ tokens to the embedding space
($21.2$\,M parameters in all). Since the logits are read before the combination \cref{eq:cfg}, they
do not depend on $\gamma$. A readout of the middle block alone reaches the gender target as well, but
not eyeglasses, which it tends to paint as a bright mask on the lower face.

\paragraph{Data for text-to-image.}
The adapter must read the $h$-space of this model, so its training images are generated by Stable
Diffusion~1.5 \citep{rombach2022ldm} itself: $50{,}000$ images at $512\times512$ (Euler with trailing
timesteps, $30$ steps, classifier-free guidance \citep{ho2022cfg} at scale $7.5$), in $6.2$\,h. Each
sample stores the clean latent $x_0$, its prompt and the target $\imgenc$ of the decoded image; at
every training step $x_0$ is noised and passed through the frozen U-Net. Four templates (\emph{a photo of}, \emph{a portrait photo of}, \emph{a realistic centered portrait
of}, \emph{a face of}) crossed with \emph{a person} and the four evaluation occupations are used ($20$
prompts). None name any attribute, so the attribute in each image comes from the model's own bias,
not from the text conditioning.

\paragraph{Training.}
\Cref{tab:training} lists the data, noise levels and cost of each adapter. The noise levels cover
the range where guidance acts: log-uniform over the full EDM range on CelebA, the $49$ guided DDIM
steps on P2, and uniform over the Stable Diffusion schedule \citep{rombach2022ldm}. On CelebA-HQ and
Stable Diffusion the kept epoch is the one of best validation retrieval (R@1 among the validation
targets). \Cref{fig:projection} shows what the CelebA adapter retains of the attribute as the noise
grows.

\begin{figure}[t]\centering
\includegraphics[width=\linewidth]{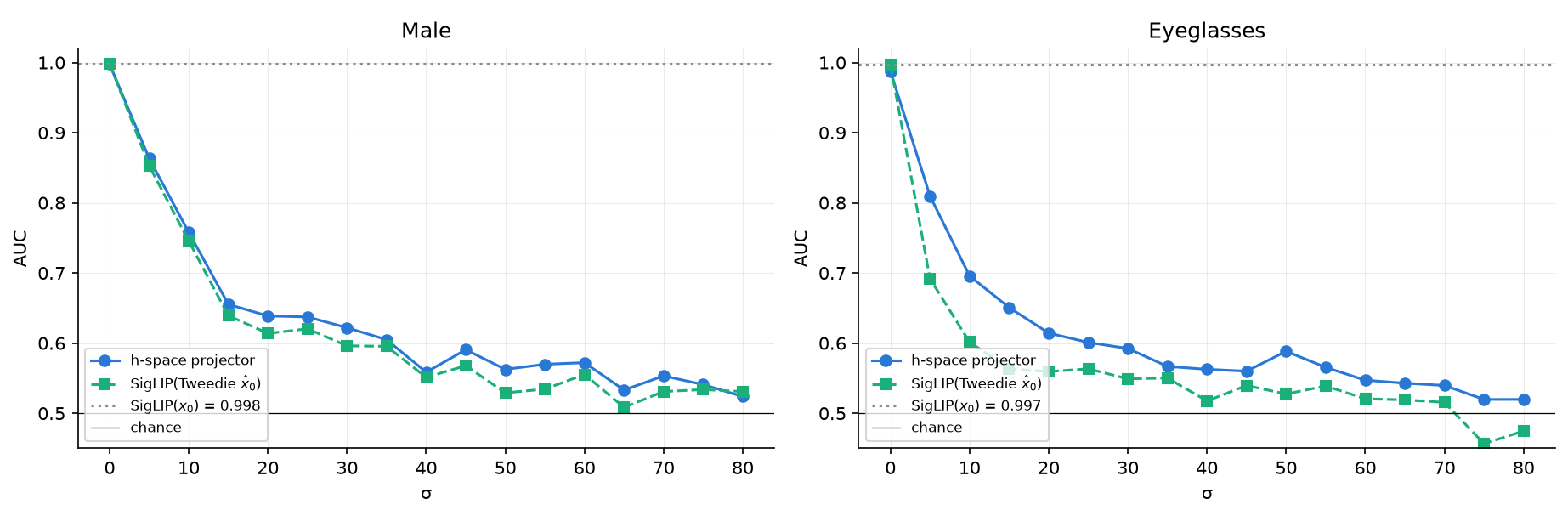}
\caption{What the CelebA adapter reads of the attribute as the noise grows. AUC of the logit
$S(x_\sigma;\sigma)$ along the direction $\Delta$ of \cref{eq:delta-binary}, against the CelebA
label, on $500$ images per class of the test split, for gender (\emph{a photo of a man} / \emph{a
photo of a woman}, left) and eyeglasses (\emph{a photo of a person wearing eyeglasses} / \emph{a
photo of a person}, right). Circles use the adapter output $\adapt_\psi(\hs(x_\sigma);\sigma)$,
squares the encoder on the Tweedie estimate, $\imgenc(\xhat)$; the dotted line is the encoder on the
clean image, $\imgenc(x_0)$. Over the guidance window $\sigma \le 10$ the adapter matches the Tweedie
estimate on gender and exceeds it on eyeglasses, at no decoding cost.}
\label{fig:projection}
\end{figure}

\section{Ablations on CelebA}
\label{app:celeba}

We test the following design choices: where the score
is read, the fairness weight, the diversity weight, and the choice of diversity term; a last
section runs each diversity term on its own, without the fairness term. The chosen diversity term is then used in CelebA-HQ and Stable Diffusion experiments.

The following properties are shared between all runs: the EDM backbone of \cref{app:details}, the deterministic Heun sampler, guidance
active for $\sigma \le \sigma_{\max} = 10$, and the fairness term of \cref{sec:fairterm} read in
the $h$-space through the adapter. Unless a section says otherwise, $\FID$, $\MIND$, precision, recall, density and coverage are
computed against real CelebA images resampled to be balanced on the attribute being steered, as
many as the generated images, and every $\FD$ is CLEAM-corrected. In every table each column is
shaded from its best value (dark, bold) to its worst (light), within each block where a table has
several; in the sweep tables the shading orders the settings of one method.

\subsection{Where the score is computed: $h$-space against pixel space}
\label{app:potential}

``Pixel'' here denotes the fairness term computed from SigLIP applied to the Tweedie estimate
$\xhat$, the readout that \cref{sec:adapter} argued moves the expectation through a nonlinearity
evaluated off the support of clean images, whereas the $h$-space estimate moves it through a linear
functional. \cref{fig:hspace-vs-tweedie} is the visible
counterpart on the compound target of \cref{fig:qual-compound}, and
\cref{tab:hspace-vs-pixel} and \cref{fig:hspace-vs-pixel} the quantitative one on the two
binary attributes, at equal guidance weights.

\begin{figure}[t]\centering
\rowfig{0.94\linewidth}{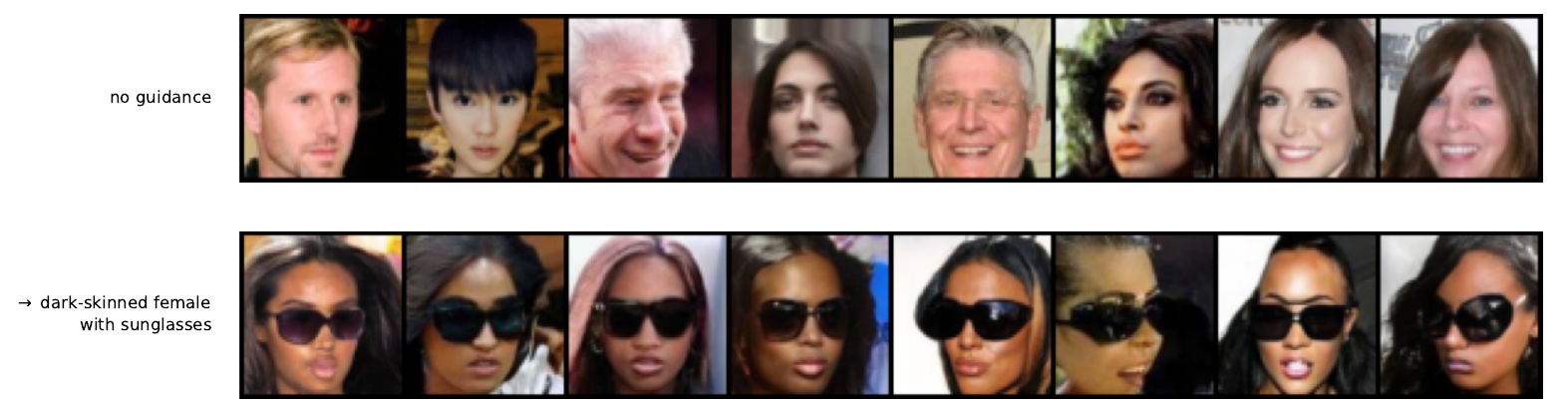}{110}{0.761/{Original}, 0.238/{dark-skinned female\\with sunglasses}}
\caption{Steering an attribute that has no label. Top: unguided samples from a diffusion model
trained on CelebA. Bottom: the same eight initial noises, guided towards \emph{dark-skinned female
with sunglasses}. The attribute is handed to the sampler as this sentence and a contrasting one
describing its absence. CelebA has no label for it, and neither the generator nor any classifier was
trained.}
\label{fig:qual-compound}
\end{figure}

\begin{figure}[t]\centering
\rowfig{0.94\linewidth}{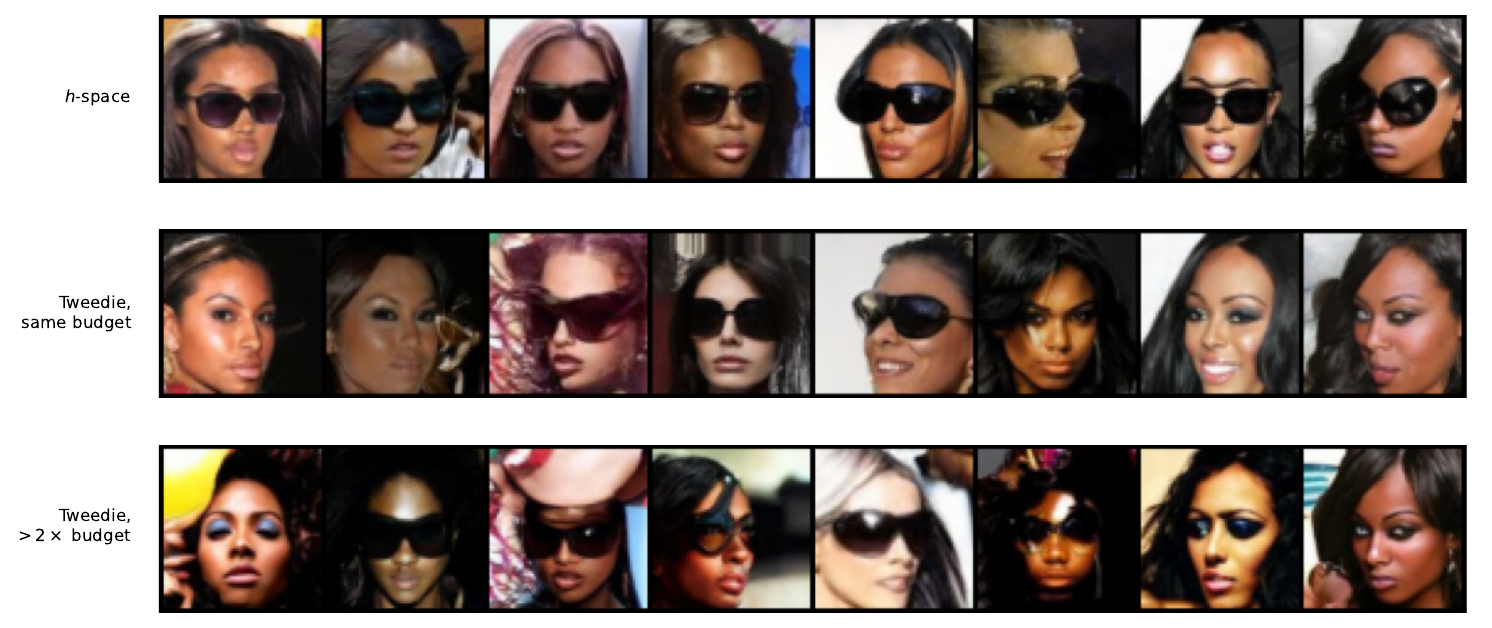}{71}{0.843/{adapter}, 0.500/{pixel,\\same budget}, 0.157/{pixel,\\$2\times$ budget}}
\caption{Same compound target, \emph{dark-skinned female with sunglasses}, three ways of reading the
guidance term. Top: from $\adapt(\hs(x_\sigma);\sigma)$. Middle: from $\imgenc(\xhat)$ at the same sampling
budget. Bottom: from $\imgenc(\xhat)$ with more than twice the budget. Eight samples per row.}
\label{fig:hspace-vs-tweedie}
\end{figure}

Every sample of the top row of \cref{fig:hspace-vs-tweedie} carries both attributes. At an equal
budget the pixel-space score reaches the target for part of the batch only, and more than doubling the
number of steps does not close the gap: several samples still miss the sunglasses, the skin tone, or
both. The failure is not uniform across the batch, which is what an uncontrolled, off-support
evaluation produces: at the noise levels where the guidance acts, $\xhat$ is an average across modes
and its embedding points at no single one of them.

%% ===================== TABLE 1 : h-space vs pixel =====================
\begin{table}[htbp]
  \centering
  \caption{$h$-space against pixel-space fairness term at equal weight $w$; $\FD$ on the steered
  attribute; both share the $w=0$ run, 5000 images, one seed. Bold: the better $\MIND$ of the two on each row.}
  \label{tab:hspace-vs-pixel}
  \vspace{\baselineskip}
  \begin{tabular}{r rrr rrr}
    \toprule
    & \multicolumn{3}{c}{h-space guidance} & \multicolumn{3}{c}{pixel guidance} \\
    \cmidrule(r){2-4}\cmidrule{5-7}
    $w$ & $\MIND$ & $\FD$ & $\FID$ & $\MIND$ & $\FD$ & $\FID$ \\
    \midrule
    \multicolumn{7}{l}{\itshape Eyeglasses attribute ($\FD$ $=0.614$ unguided)} \\
    0 & \shade{15}{19.13} & \shade{15}{0.614} & \shade{29}{16.23} & \shade{100}{19.13} & \shade{15}{0.614} & \shade{100}{16.23} \\
    50 & \shade{29}{\textbf{15.14}} & \shade{29}{0.534} & \shade{58}{13.81} & \shade{86}{19.69} & \shade{29}{0.611} & \shade{86}{16.92} \\
    100 & \shade{58}{\textbf{11.81}} & \shade{43}{0.441} & \shade{72}{12.14} & \shade{72}{19.88} & \shade{43}{0.598} & \shade{72}{17.66} \\
    150 & \shade{86}{\textbf{9.97}} & \shade{58}{0.377} & \shade{100}{11.60} & \shade{58}{20.38} & \shade{58}{0.587} & \shade{58}{18.56} \\
    200 & \shade{100}{\textbf{8.92}} & \shade{72}{0.320} & \shade{86}{11.63} & \shade{43}{20.51} & \shade{72}{0.573} & \shade{43}{19.15} \\
    400 & \shade{72}{\textbf{10.64}} & \shade{86}{0.198} & \shade{43}{15.51} & \shade{29}{20.86} & \shade{86}{0.534} & \shade{29}{20.81} \\
    600 & \shade{43}{\textbf{15.12}} & \shade{100}{0.156} & \shade{15}{21.17} & \shade{15}{20.95} & \shade{100}{0.489} & \shade{15}{22.07} \\
    \midrule
    \multicolumn{7}{l}{\itshape Gender attribute ($\FD$ $=0.0963$ unguided)} \\
    0 & \shade{58}{2.49} & \shade{15}{0.0963} & \shade{72}{6.31} & \shade{58}{2.49} & \shade{15}{0.0963} & \shade{72}{6.31} \\
    2 & \shade{86}{2.31} & \shade{29}{0.0644} & \shade{100}{6.24} & \shade{100}{\textbf{2.21}} & \shade{29}{0.0395} & \shade{100}{6.19} \\
    5 & \shade{100}{2.26} & \shade{43}{0.0302} & \shade{86}{6.25} & \shade{100}{\textbf{2.21}} & \shade{43}{0.0201} & \shade{86}{6.22} \\
    10 & \shade{72}{2.41} & \shade{58}{0.0094} & \shade{58}{6.44} & \shade{72}{\textbf{2.34}} & \shade{58}{0.0097} & \shade{58}{6.39} \\
    20 & \shade{43}{2.85} & \shade{72}{0.0021} & \shade{43}{6.87} & \shade{43}{\textbf{2.64}} & \shade{72}{0.0076} & \shade{43}{6.70} \\
    40 & \shade{29}{3.83} & \shade{100}{0.0010} & \shade{29}{7.69} & \shade{29}{\textbf{3.49}} & \shade{86}{0.0027} & \shade{29}{7.47} \\
    50 & \shade{15}{4.30} & \shade{100}{0.0010} & \shade{15}{8.08} & \shade{15}{\textbf{3.81}} & \shade{100}{0.0018} & \shade{15}{7.79} \\
    \bottomrule
  \end{tabular}
\end{table}

%% ===================== FIGURE 1 : FD vs MIND ==========================
\begin{figure}[htbp]
  \centering
  \begin{tikzpicture}
    \def\fdx#1{((#1)-8)/14*52*1mm}
    \def\fdy#1{((#1)-0.1)/0.56*38*1mm}
    \def\P(#1,#2){({\fdx{#1}},{\fdy{#2}})}
    \foreach \x in {10,12,14,16,18,20}
      \draw[fdgrid] \P(\x,0.1) -- \P(\x,0.66);
    \foreach \y in {0.1,0.2,0.3,0.4,0.5,0.6}
      \draw[fdgrid] \P(8,\y) -- \P(22,\y);
    \draw[fdaxis] \P(8,0.1) -- \P(22,0.1);
    \draw[fdaxis] \P(8,0.1) -- \P(8,0.66);
    \foreach \x in {10,12,14,16,18,20}
      \draw[fdaxis] \P(\x,0.1) -- ++(0,-1.2mm) node[fdtick, below] {\x};
    \draw[fdaxis] \P(8,0.1) -- ++(-1.2mm,0) node[fdtick, left] {0.1};
    \draw[fdaxis] \P(8,0.2) -- ++(-1.2mm,0) node[fdtick, left] {0.2};
    \draw[fdaxis] \P(8,0.3) -- ++(-1.2mm,0) node[fdtick, left] {0.3};
    \draw[fdaxis] \P(8,0.4) -- ++(-1.2mm,0) node[fdtick, left] {0.4};
    \draw[fdaxis] \P(8,0.5) -- ++(-1.2mm,0) node[fdtick, left] {0.5};
    \draw[fdaxis] \P(8,0.6) -- ++(-1.2mm,0) node[fdtick, left] {0.6};
    %% hspace : w = 0, 50, 100, 150, 200, 400, 600
    \draw[thick, DiagTrainL] \P(19.13,0.6143) -- \P(15.14,0.5339) -- \P(11.81,0.4408) -- \P(9.97,0.3768) -- \P(8.92,0.3201) -- \P(10.64,0.1983) -- \P(15.12,0.1564);
    \foreach \px/\py in {19.13/0.6143, 15.14/0.5339, 11.81/0.4408, 9.97/0.3768, 8.92/0.3201, 10.64/0.1983, 15.12/0.1564}
      \node[fdpt, fill=DiagTrainL] at \P(\px,\py) {};
    %% pixel : w = 0, 50, 100, 150, 200, 400, 600
    \draw[thick, DiagEmbL] \P(19.13,0.6143) -- \P(19.69,0.6106) -- \P(19.88,0.5981) -- \P(20.38,0.5869) -- \P(20.51,0.5733) -- \P(20.86,0.5336) -- \P(20.95,0.4894);
    \foreach \px/\py in {19.13/0.6143, 19.69/0.6106, 19.88/0.5981, 20.38/0.5869, 20.51/0.5733, 20.86/0.5336, 20.95/0.4894}
      \node[fdpt, fill=DiagEmbL] at \P(\px,\py) {};
    %% legend
    \foreach \c/\yy/\lab in {DiagTrainL/0.63/h-space, DiagEmbL/0.592/pixel} {
      \draw[thick, \c] \P(13.0,\yy) -- \P(13.3262,\yy);
      \node[fdpt, fill=\c] at \P(13.1631,\yy) {};
      \node[fdleg, right=0.8mm] at \P(13.3262,\yy) {\lab};
    }
    \node[fdbase] at \P(19.13,0.6143) {};
    \node[fdleg, above left=0.6mm] at \P(19.13,0.6143) {unguided};
    \node[fdname, below=4.5mm] at \P(15.0,0.1) {MIND $\rightarrow$ worse};
    \node[fdname, rotate=90, anchor=south] at ({\fdx{8}-7.5mm},{\fdy{0.38}}) {FD $\rightarrow$ worse};
    \node[fdname, above=0.5mm] at \P(15.0,0.66) {Eyeglasses};
  \end{tikzpicture}%
  \hfill
  \begin{tikzpicture}
    \def\fdx#1{((#1)-2.1)/2.4*52*1mm}
    \def\fdy#1{((#1)-0.0)/0.1*38*1mm}
    \def\P(#1,#2){({\fdx{#1}},{\fdy{#2}})}
    \foreach \x in {2.5,3.0,3.5,4.0}
      \draw[fdgrid] \P(\x,0.0) -- \P(\x,0.1);
    \foreach \y in {0,0.02,0.04,0.06,0.08}
      \draw[fdgrid] \P(2.1,\y) -- \P(4.5,\y);
    \draw[fdaxis] \P(2.1,0.0) -- \P(4.5,0.0);
    \draw[fdaxis] \P(2.1,0.0) -- \P(2.1,0.1);
    \foreach \x in {2.5,3.0,3.5,4.0}
      \draw[fdaxis] \P(\x,0.0) -- ++(0,-1.2mm) node[fdtick, below] {\x};
    \draw[fdaxis] \P(2.1,0) -- ++(-1.2mm,0) node[fdtick, left] {0.00};
    \draw[fdaxis] \P(2.1,0.02) -- ++(-1.2mm,0) node[fdtick, left] {0.02};
    \draw[fdaxis] \P(2.1,0.04) -- ++(-1.2mm,0) node[fdtick, left] {0.04};
    \draw[fdaxis] \P(2.1,0.06) -- ++(-1.2mm,0) node[fdtick, left] {0.06};
    \draw[fdaxis] \P(2.1,0.08) -- ++(-1.2mm,0) node[fdtick, left] {0.08};
    %% hspace : w = 0, 2, 5, 10, 20, 40, 50
    \draw[thick, DiagTrainL] \P(2.49,0.0963) -- \P(2.31,0.0644) -- \P(2.26,0.0302) -- \P(2.41,0.0094) -- \P(2.85,0.0021) -- \P(3.83,0.0010) -- \P(4.30,0.0010);
    \foreach \px/\py in {2.49/0.0963, 2.31/0.0644, 2.26/0.0302, 2.41/0.0094, 2.85/0.0021, 3.83/0.0010, 4.30/0.0010}
      \node[fdpt, fill=DiagTrainL] at \P(\px,\py) {};
    %% pixel : w = 0, 2, 5, 10, 20, 40, 50
    \draw[thick, DiagEmbL] \P(2.49,0.0963) -- \P(2.21,0.0395) -- \P(2.21,0.0201) -- \P(2.34,0.0097) -- \P(2.64,0.0076) -- \P(3.49,0.0027) -- \P(3.81,0.0018);
    \foreach \px/\py in {2.49/0.0963, 2.21/0.0395, 2.21/0.0201, 2.34/0.0097, 2.64/0.0076, 3.49/0.0027, 3.81/0.0018}
      \node[fdpt, fill=DiagEmbL] at \P(\px,\py) {};
    %% legend
    \foreach \c/\yy/\lab in {DiagTrainL/0.096/h-space, DiagEmbL/0.089/pixel} {
      \draw[thick, \c] \P(3.5,\yy) -- \P(3.5559,\yy);
      \node[fdpt, fill=\c] at \P(3.5280,\yy) {};
      \node[fdleg, right=0.8mm] at \P(3.5559,\yy) {\lab};
    }
    \node[fdbase] at \P(2.49,0.0963) {};
    \node[fdleg, right=1mm] at \P(2.49,0.0963) {unguided};
    \node[fdname, below=4.5mm] at \P(3.3,0.0) {MIND $\rightarrow$ worse};
    \node[fdname, rotate=90, anchor=south] at ({\fdx{2.1}-9mm},{\fdy{0.05}}) {FD $\rightarrow$ worse};
    \node[fdname, above=0.5mm] at \P(3.3,0.1) {Gender};
  \end{tikzpicture}
  \caption{FD--$\MIND$ trade-off, h-space versus pixel-space fairness term; each
  polyline follows its own sweep in order of increasing guidance weight $w$, and
  the white square is the shared unguided model. Left: on eyeglasses the two
  curves do not overlap, pixel guidance stays pinned in the top-right corner,
  drifting \emph{rightwards} as $w$ grows, while h-space sweeps down and to the
  left, then turns back up past $w=200$ as fidelity starts to cost, so no pixel
  setting matches any h-space setting on both axes. Right: on gender the two
  sweeps trace the same short hook, pixel running marginally inside h-space;
  both bottom out in $\MIND$ around $w=5$ and then climb steeply as $\FD$ is driven
  to zero.}
  \label{fig:hspace-vs-pixel}
\end{figure}

On the strongly skewed eyeglasses attribute the two estimates are not comparable in effect. The
$h$-space score drives the discrepancy from $0.614$ to $0.156$ and improves $\MIND$ and $\FID$
until $w = 200$ ($\MIND$ $8.9$, $\FID$ $11.6$); the pixel-space score stalls at a discrepancy of
$0.489$, never entering the range the $h$-space covers, and pays for that small gain with a rising
$\MIND$ ($19.1 \to 21.0$) and a rising $\FID$ ($16.2 \to 22.1$). On the nearly balanced gender
attribute the ranking reverses but the margin collapses: the pixel-space score takes $\MIND$ on every
guided row, by $2\%$ at the selected weight $w = 5$ and $13\%$ at $w = 50$, against a seed-to-seed standard deviation of $5$ to $10\%$ of $\MIND$ measured on the same
configuration, so the two are indistinguishable there. The $h$-space score is also cheaper, its
gradient stopping at the mid-block instead of traversing the full denoiser and the encoder.

\subsection{Fairness weight}
\label{app:sweeps}

\cref{tab:hspace-vs-pixel} doubles as the fairness-weight sweep. On both attributes the
discrepancy falls monotonically with $w_{\text{fair}}$, but quality does not: $\MIND$ and $\FID$
first improve, as the guidance corrects the imbalance, reach a minimum, and then degrade. On
eyeglasses the minimum is at $w_{\text{fair}} = 200$; between it and $w_{\text{fair}} = 600$, $\MIND$
rises from $8.9$ to $15.1$ and $\FID$ from $11.6$ to $21.2$ while the discrepancy gains only its last
$0.16$. On gender the minimum is at $w_{\text{fair}} \approx 5$ and the climb past it is steep, $\MIND$
reaching $4.3$ at $w_{\text{fair}} = 50$ for a discrepancy already at $0.001$. The setting that
maximises quality is therefore not the setting that maximises parity, and the useful operating range
ends well before parity is reached. Steering eyeglasses also moves the gender balance: the gender
discrepancy rises from $0.096$ to $0.343$ at $w_{\text{fair}} = 600$, a collateral shift that a
multi-attribute target would have to control.

\subsection{Diversity weight}
\label{app:divsweeps}

Four diversity terms are compared from here on: \code{PerturbProj}, the term of
\cref{eq:perturbproj}; \code{SigLIPMS}, its variant of \cref{app:siglipms}; and the two baselines
of \cref{app:details-baselines}, \code{SGMS} and \code{avgKNN}. With the fairness weight fixed at
the value selected above, each term is swept over its own weight; $w_{\mathrm{div}} = 0$ reproduces the fairness-only run in every block, which serves as a
consistency check, and $\star$ marks the setting carried forward to \cref{app:fairdiverse}. 

\cref{tab:div-eyeglasses,tab:div-gender} use 2000 images per setting, so their FID and MIND are inflated relative to \cref{tab:hspace-vs-pixel,tab:fairdiverse-5seeds} and comparable only within a sweep; the selected settings are re-evaluated at 5,000 images and five seeds in \cref{tab:fairdiverse-5seeds}.

%% ============================
\begin{table}[htbp]
  \centering
  \caption{Diversity-weight sweep on eyeglasses, fairness weight fixed at $200$. $w_{\mathrm{div}}=0$ is
  the fairness-only baseline, 2000 images, one seed; $\star$ marks the setting carried to \cref{tab:fairdiverse-5seeds}.}
  \label{tab:div-eyeglasses}
  \vspace{\baselineskip}
  \small
  \setlength{\tabcolsep}{4pt}
  \begin{tabular}{l r rrrrrrrr}
    \toprule
    Method & $w_{\mathrm{div}}$ & \lo{$\MIND$} & \lo{$\FD$} & \lo{$\FID$} & \hi{Vendi} & \hi{Prec.} & \hi{Rec.} & \hi{Cov.} & \hi{Dens.} \\
    \midrule
    \code{avgKNN} & 0 & \shade{49}{10.76} & \shade{15}{0.336} & \best{19.21} & \shade{15}{6.47} & \best{0.814} & \shade{15}{0.679} & \best{0.893} & \best{1.218} \\
      & 2 & \shade{66}{10.56} & \shade{32}{0.332} & \best{19.21} & \shade{32}{6.52} & \shade{83}{0.812} & \shade{32}{0.689} & \best{0.893} & \shade{83}{1.208} \\
      & 5$^{\star}$ & \best{10.44} & \shade{66}{0.330} & \shade{66}{19.39} & \shade{49}{6.58} & \shade{66}{0.805} & \shade{49}{0.690} & \best{0.893} & \shade{66}{1.190} \\
      & 10 & \shade{83}{10.46} & \shade{66}{0.330} & \shade{49}{19.78} & \shade{66}{6.69} & \shade{49}{0.795} & \best{0.697} & \shade{49}{0.880} & \shade{49}{1.159} \\
      & 20 & \shade{32}{11.24} & \shade{83}{0.317} & \shade{32}{20.77} & \shade{83}{6.88} & \shade{32}{0.778} & \shade{66}{0.692} & \shade{32}{0.870} & \shade{32}{1.131} \\
      & 40 & \shade{15}{14.72} & \best{0.309} & \shade{15}{23.80} & \best{7.31} & \shade{15}{0.765} & \shade{83}{0.696} & \shade{15}{0.835} & \shade{15}{1.035} \\
    \midrule
    \code{SGMS} & 0 & \shade{66}{10.75} & \shade{15}{0.336} & \best{19.18} & \shade{15}{6.48} & \best{0.815} & \shade{15}{0.683} & \best{0.892} & \best{1.217} \\
      & 10 & \best{10.64} & \shade{32}{0.332} & \shade{83}{19.52} & \shade{32}{6.57} & \shade{83}{0.807} & \shade{49}{0.695} & \shade{83}{0.887} & \shade{83}{1.186} \\
      & 20$^{\star}$ & \shade{83}{10.68} & \shade{66}{0.327} & \shade{66}{19.93} & \shade{49}{6.63} & \shade{66}{0.797} & \shade{32}{0.694} & \shade{66}{0.881} & \shade{66}{1.171} \\
      & 40 & \shade{49}{10.84} & \shade{66}{0.327} & \shade{49}{20.72} & \shade{66}{6.78} & \shade{49}{0.778} & \shade{66}{0.696} & \shade{49}{0.867} & \shade{49}{1.120} \\
      & 60 & \shade{32}{11.29} & \shade{83}{0.323} & \shade{32}{21.75} & \shade{83}{6.97} & \shade{32}{0.766} & \shade{83}{0.704} & \shade{32}{0.860} & \shade{32}{1.085} \\
      & 80 & \shade{15}{12.08} & \best{0.316} & \shade{15}{22.91} & \best{7.12} & \shade{15}{0.756} & \best{0.714} & \shade{15}{0.852} & \shade{15}{1.047} \\
    \midrule
    \code{SigLIPMS} & 0 & \shade{47}{10.77} & \shade{15}{0.336} & \shade{79}{19.21} & \shade{15}{6.47} & \best{0.813} & \shade{89}{0.682} & \shade{89}{0.894} & \best{1.215} \\
      & 20 & \shade{68}{9.89} & \shade{26}{0.327} & \shade{89}{19.02} & \shade{26}{6.54} & \shade{89}{0.810} & \best{0.685} & \best{0.897} & \shade{89}{1.210} \\
      & 40 & \shade{89}{9.43} & \shade{36}{0.317} & \best{18.92} & \shade{36}{6.59} & \shade{68}{0.806} & \shade{58}{0.672} & \shade{79}{0.889} & \shade{79}{1.200} \\
      & 80 & \best{9.07} & \shade{47}{0.298} & \shade{79}{19.21} & \shade{47}{6.72} & \shade{68}{0.806} & \shade{68}{0.675} & \shade{68}{0.886} & \shade{68}{1.180} \\
      & 120$^{\star}$ & \shade{79}{9.49} & \shade{58}{0.287} & \shade{58}{19.99} & \shade{58}{6.85} & \shade{79}{0.807} & \shade{79}{0.678} & \shade{58}{0.879} & \shade{47}{1.165} \\
      & 160 & \shade{58}{10.26} & \shade{68}{0.283} & \shade{47}{20.72} & \shade{68}{6.94} & \shade{47}{0.800} & \shade{26}{0.657} & \shade{47}{0.875} & \shade{36}{1.153} \\
      & 200 & \shade{36}{11.21} & \shade{79}{0.280} & \shade{36}{21.56} & \shade{79}{7.07} & \shade{36}{0.795} & \shade{47}{0.669} & \shade{36}{0.863} & \shade{58}{1.168} \\
      & 240 & \shade{26}{12.58} & \shade{89}{0.277} & \shade{26}{22.57} & \shade{89}{7.15} & \shade{15}{0.790} & \shade{36}{0.668} & \shade{26}{0.855} & \shade{26}{1.135} \\
      & 300 & \shade{15}{15.05} & \best{0.267} & \shade{15}{24.31} & \best{7.26} & \shade{26}{0.794} & \shade{15}{0.655} & \shade{15}{0.830} & \shade{15}{1.094} \\
    \midrule
    \code{PerturbProj} & 0 & \shade{15}{10.77} & \shade{15}{0.336} & \shade{91}{19.21} & \shade{15}{6.47} & \best{0.813} & \shade{81}{0.682} & \shade{91}{0.894} & \best{1.215} \\
      & 20 & \shade{24}{10.60} & \shade{24}{0.334} & \best{19.19} & \shade{24}{6.50} & \shade{91}{0.812} & \shade{62}{0.680} & \shade{62}{0.889} & \shade{91}{1.212} \\
      & 40 & \shade{34}{10.48} & \shade{34}{0.332} & \shade{91}{19.21} & \shade{34}{6.52} & \shade{81}{0.811} & \shade{81}{0.682} & \shade{43}{0.886} & \shade{81}{1.209} \\
      & 80 & \shade{43}{10.31} & \shade{43}{0.326} & \shade{72}{19.29} & \shade{43}{6.55} & \shade{53}{0.804} & \best{0.688} & \shade{43}{0.886} & \shade{72}{1.201} \\
      & 120 & \shade{53}{10.27} & \shade{53}{0.323} & \shade{62}{19.42} & \shade{53}{6.57} & \shade{43}{0.802} & \shade{91}{0.684} & \shade{62}{0.889} & \shade{53}{1.198} \\
      & 160 & \shade{62}{10.22} & \shade{62}{0.320} & \shade{53}{19.49} & \shade{62}{6.59} & \shade{72}{0.806} & \shade{53}{0.678} & \shade{72}{0.890} & \shade{43}{1.196} \\
      & 200 & \shade{72}{10.18} & \shade{72}{0.317} & \shade{43}{19.58} & \shade{72}{6.61} & \shade{62}{0.805} & \shade{43}{0.677} & \shade{81}{0.892} & \shade{62}{1.200} \\
      & 240 & \shade{81}{10.09} & \shade{81}{0.313} & \shade{34}{19.62} & \shade{81}{6.65} & \shade{43}{0.802} & \shade{15}{0.669} & \best{0.895} & \shade{34}{1.187} \\
      & 300$^{\star}$ & \best{10.05} & \shade{91}{0.309} & \shade{24}{19.87} & \shade{91}{6.72} & \shade{15}{0.787} & \shade{34}{0.673} & \shade{15}{0.881} & \shade{24}{1.185} \\
      & 350 & \shade{91}{10.08} & \best{0.307} & \shade{15}{20.09} & \best{6.77} & \shade{24}{0.790} & \shade{24}{0.672} & \shade{24}{0.883} & \shade{15}{1.177} \\
    \bottomrule
  \end{tabular}
\end{table}

%% ================================
\begin{table}[htbp]
  \centering
  \caption{Diversity-weight sweep, on gender, fairness weight fixed at $10$. $w_{\mathrm{div}}=0$ is
  the fairness-only baseline, 2000 images, one seed; $\star$ marks the setting carried to \cref{tab:fairdiverse-5seeds}.}
  \label{tab:div-gender}
  \vspace{\baselineskip}
  \small
  \setlength{\tabcolsep}{4pt}
  \begin{tabular}{l r rrrrrrrr}
    \toprule
    Method & $w_{\mathrm{div}}$ & \lo{$\MIND$} & \lo{$\FD$} & \lo{$\FID$} & \hi{Vendi} & \hi{Prec.} & \hi{Rec.} & \hi{Cov.} & \hi{Dens.} \\
    \midrule
    \code{avgKNN} & 0 & \shade{49}{2.76} & \shade{32}{0.0104} & \shade{66}{12.41} & \shade{15}{6.18} & \shade{83}{0.829} & \shade{32}{0.723} & \shade{83}{0.970} & \best{1.133} \\
      & 2 & \shade{66}{2.65} & \shade{49}{0.0075} & \shade{83}{12.40} & \shade{32}{6.24} & \shade{66}{0.828} & \shade{32}{0.723} & \shade{66}{0.969} & \shade{83}{1.124} \\
      & 5$^{\star}$ & \best{2.54} & \shade{66}{0.0039} & \best{12.39} & \shade{49}{6.32} & \best{0.832} & \shade{66}{0.732} & \best{0.973} & \shade{66}{1.113} \\
      & 10 & \shade{83}{2.56} & \best{0.0017} & \shade{49}{12.54} & \shade{66}{6.45} & \shade{32}{0.820} & \shade{83}{0.734} & \shade{32}{0.963} & \shade{49}{1.094} \\
      & 20 & \shade{32}{3.42} & \shade{83}{0.0019} & \shade{32}{13.28} & \shade{83}{6.72} & \shade{49}{0.824} & \shade{66}{0.732} & \shade{49}{0.967} & \shade{32}{1.060} \\
      & 40 & \shade{15}{8.34} & \shade{15}{0.0121} & \shade{15}{16.81} & \best{7.26} & \shade{15}{0.792} & \best{0.749} & \shade{15}{0.902} & \shade{15}{0.936} \\
    \midrule
    \code{SGMS} & 0 & \shade{43}{2.77} & \shade{15}{0.0104} & \best{12.41} & \shade{15}{6.18} & \best{0.830} & \shade{43}{0.724} & \shade{58}{0.969} & \best{1.135} \\
      & 5 & \shade{72}{2.66} & \shade{29}{0.0075} & \shade{86}{12.42} & \shade{29}{6.22} & \shade{86}{0.828} & \shade{15}{0.718} & \shade{86}{0.971} & \shade{86}{1.117} \\
      & 10 & \shade{86}{2.60} & \shade{43}{0.0068} & \shade{72}{12.46} & \shade{43}{6.27} & \shade{72}{0.826} & \shade{29}{0.722} & \best{0.973} & \shade{72}{1.106} \\
      & 20$^{\star}$ & \best{2.51} & \best{0.0010} & \shade{58}{12.54} & \shade{58}{6.38} & \shade{58}{0.824} & \shade{58}{0.738} & \shade{72}{0.970} & \shade{58}{1.076} \\
      & 40 & \shade{58}{2.72} & \shade{58}{0.0039} & \shade{43}{12.88} & \shade{72}{6.52} & \shade{43}{0.808} & \shade{86}{0.753} & \shade{43}{0.964} & \shade{43}{1.035} \\
      & 60 & \shade{29}{3.29} & \shade{72}{0.0017} & \shade{29}{13.47} & \shade{86}{6.68} & \shade{29}{0.786} & \shade{72}{0.750} & \shade{29}{0.960} & \shade{29}{0.985} \\
      & 80 & \shade{15}{4.47} & \shade{86}{0.0012} & \shade{15}{14.58} & \best{6.90} & \shade{15}{0.771} & \best{0.756} & \shade{29}{0.960} & \shade{15}{0.949} \\
    \midrule
    \code{SigLIPMS} & 0 & \shade{66}{2.76} & \shade{66}{0.0104} & \best{12.41} & \shade{15}{6.18} & \shade{83}{0.829} & \shade{66}{0.723} & \shade{66}{0.970} & \shade{83}{1.133} \\
      & 20$^{\star}$ & \best{2.54} & \best{0.0017} & \shade{83}{12.42} & \shade{32}{6.28} & \best{0.836} & \shade{83}{0.724} & \best{0.972} & \best{1.136} \\
      & 40 & \shade{83}{2.58} & \shade{83}{0.0031} & \shade{66}{12.59} & \shade{49}{6.39} & \shade{66}{0.827} & \shade{66}{0.723} & \shade{83}{0.971} & \shade{66}{1.111} \\
      & 80 & \shade{49}{3.71} & \shade{49}{0.0128} & \shade{49}{13.60} & \shade{66}{6.65} & \shade{49}{0.821} & \best{0.726} & \shade{49}{0.966} & \shade{49}{1.085} \\
      & 120 & \shade{32}{5.81} & \shade{32}{0.0186} & \shade{32}{14.99} & \shade{83}{6.83} & \shade{32}{0.819} & \shade{15}{0.707} & \shade{32}{0.945} & \shade{32}{1.071} \\
      & 160 & \shade{15}{9.39} & \shade{15}{0.0360} & \shade{15}{17.11} & \best{7.01} & \shade{15}{0.818} & \shade{32}{0.714} & \shade{15}{0.939} & \shade{15}{1.038} \\
    \midrule
    \code{PerturbProj} & 0 & \shade{15}{2.77} & \shade{15}{0.0104} & \shade{51}{12.41} & \shade{15}{6.18} & \shade{39}{0.830} & \shade{76}{0.726} & \shade{15}{0.970} & \best{1.134} \\
      & 10 & \shade{27}{2.74} & \shade{27}{0.0082} & \shade{64}{12.40} & \shade{27}{6.19} & \shade{39}{0.830} & \shade{64}{0.725} & \shade{39}{0.971} & \best{1.134} \\
      & 20 & \shade{39}{2.70} & \shade{39}{0.0068} & \shade{88}{12.39} & \shade{39}{6.21} & \best{0.834} & \shade{64}{0.725} & \shade{51}{0.972} & \shade{76}{1.133} \\
      & 40 & \shade{51}{2.66} & \shade{51}{0.0053} & \best{12.38} & \shade{51}{6.23} & \shade{51}{0.832} & \shade{27}{0.723} & \shade{76}{0.973} & \shade{64}{1.130} \\
      & 80 & \shade{76}{2.58} & \shade{76}{0.0039} & \shade{88}{12.39} & \shade{64}{6.26} & \best{0.834} & \shade{88}{0.728} & \best{0.975} & \shade{51}{1.124} \\
      & 120$^{\star}$ & \shade{88}{2.56} & \shade{88}{0.0031} & \shade{39}{12.47} & \shade{76}{6.31} & \shade{76}{0.833} & \shade{15}{0.719} & \shade{88}{0.974} & \shade{39}{1.107} \\
      & 160 & \best{2.55} & \shade{76}{0.0039} & \shade{27}{12.54} & \shade{88}{6.36} & \shade{76}{0.833} & \best{0.732} & \shade{76}{0.973} & \shade{39}{1.107} \\
      & 200 & \shade{64}{2.60} & \best{0.0017} & \shade{15}{12.63} & \best{6.42} & \shade{15}{0.828} & \shade{39}{0.724} & \shade{39}{0.971} & \shade{15}{1.095} \\
    \bottomrule
  \end{tabular}
\end{table}

The columns disagree, so no setting dominates. On eyeglasses (\cref{tab:div-eyeglasses}) $\FID$
and density are best at or near $w_{\mathrm{div}} = 0$ while discrepancy and Vendi keep improving to
the end of every sweep; the retained value is a compromise, and for \code{SGMS} and \code{SigLIPMS}
it is deliberately past the $\MIND$ minimum, buying discrepancy and Vendi at a small $\MIND$ cost.
Only \code{SigLIPMS} improves $\MIND$, discrepancy and $\FID$ together over its own baseline, at
$w_{\mathrm{div}}$ between $40$ and $80$; the other three buy discrepancy and Vendi while giving up
$\FID$, precision, coverage and density, the four moving as one. On gender
(\cref{tab:div-gender}) all four terms improve $\MIND$ and discrepancy over their baseline, and
coverage holds or rises while density falls only slightly, which is why the final table shows no
fidelity penalty there. Every sweep is unimodal in $\MIND$ and turns sharply past its optimum
(\code{avgKNN} reaches $8.34$ at $w_{\mathrm{div}} = 40$, \code{SigLIPMS} $9.39$ at $160$), so the
useful range is narrow.

\subsection{Diversity on top of the fairness term at the selected setting}
\label{app:fairdiverse}

\cref{tab:fairdiverse-5seeds} compares the four diversity terms at the fairness weight of
\cref{app:sweeps} and the diversity weight of \cref{app:divsweeps}, as means and standard deviations
over five sampling seeds of $5,000$ images; it is the table behind the diversity findings of
\cref{sec:res-ablations}.
\cref{tab:fairdiverse-50k} repeats the comparison with $50,000$ images per configuration and a
single seed.

\begin{table}[htbp]
  \centering
  \caption{Diversity terms on top of the fairness term at the selected weights, 5000 images, 5 seeds; time is wall-clock
  generation of one run and is not ranked.}
  \label{tab:fairdiverse-5seeds}
  \vspace{\baselineskip}
  \scriptsize
  \setlength{\tabcolsep}{5pt}
  \begin{tabular}{l rrrrr}
    \toprule
    & time (s) & \lo{$\FD$} & \lo{$\FID$} & \lo{$\MIND$} & \hi{Vendi} \\
    \midrule
    \multicolumn{6}{l}{\itshape Gender, $w_{\text{fair}} = 5$} \\
    \method{} & $1613 \pm 665$ & \shade{15}{$0.031 \pm 0.004$} & \shade{15}{$6.59 \pm 0.04$} & \shade{15}{$2.52 \pm 0.13$} & \shade{15}{$6.26 \pm 0.03$} \\
    \quad $+$ \code{SGMS} & $1380 \pm 5$ & \shade{36}{$0.023 \pm 0.005$} & \shade{58}{$6.46 \pm 0.05$} & \shade{79}{$2.05 \pm 0.14$} & \best{$6.46 \pm 0.02$} \\
    \quad $+$ \code{avgKNN} & $1635 \pm 713$ & \shade{58}{$0.021 \pm 0.003$} & \best{$6.40 \pm 0.11$} & \shade{58}{$2.06 \pm 0.15$} & \shade{58}{$6.42 \pm 0.03$} \\
    \quad $+$ \code{SigLIPMS} & $2179 \pm 962$ & \shade{79}{$0.011 \pm 0.004$} & \shade{36}{$6.48 \pm 0.07$} & \shade{36}{$2.13 \pm 0.17$} & \shade{36}{$6.41 \pm 0.04$} \\
    \quad $+$ \code{PerturbProj} & $1476 \pm 592$ & \best{$0.006 \pm 0.003$} & \shade{79}{$6.44 \pm 0.10$} & \best{$2.03 \pm 0.15$} & \shade{79}{$6.43 \pm 0.04$} \\
    \midrule
    \multicolumn{6}{l}{\itshape Eyeglasses, $w_{\text{fair}} = 200$} \\
    \method{} & $1619 \pm 662$ & \shade{15}{$0.304 \pm 0.004$} & \shade{79}{$11.47 \pm 0.12$} & \shade{36}{$9.06 \pm 0.48$} & \shade{15}{$6.62 \pm 0.02$} \\
    \quad $+$ \code{SGMS} & $1418 \pm 51$ & \shade{58}{$0.300 \pm 0.002$} & \shade{58}{$11.87 \pm 0.10$} & \shade{79}{$8.81 \pm 0.44$} & \shade{58}{$6.80 \pm 0.01$} \\
    \quad $+$ \code{avgKNN} & $1599 \pm 678$ & \shade{58}{$0.300 \pm 0.003$} & \best{$11.46 \pm 0.09$} & \best{$8.64 \pm 0.41$} & \shade{36}{$6.73 \pm 0.03$} \\
    \quad $+$ \code{SigLIPMS} & $2162 \pm 936$ & \best{$0.270 \pm 0.005$} & \shade{15}{$12.65 \pm 0.29$} & \shade{15}{$9.14 \pm 0.83$} & \best{$7.03 \pm 0.04$} \\
    \quad $+$ \code{PerturbProj} & $1703 \pm 755$ & \shade{79}{$0.288 \pm 0.004$} & \shade{36}{$12.38 \pm 0.16$} & \shade{58}{$8.82 \pm 0.56$} & \shade{79}{$6.91 \pm 0.03$} \\
    \bottomrule
  \end{tabular}
\end{table}

\begin{table}[htbp]
  \centering
  \caption{As \cref{tab:fairdiverse-5seeds}, with $50\,000$ images per configuration and one seed.}
  \label{tab:fairdiverse-50k}
  \vspace{\baselineskip}
  \small
  \setlength{\tabcolsep}{5pt}
  \begin{tabular}{l rrrrr}
    \toprule
    & time (s) & \lo{$\FD$} & \lo{$\FID$} & \lo{$\MIND$} & \hi{Vendi} \\
    \midrule
    \multicolumn{6}{l}{\itshape Gender, $w_{\text{fair}} = 5$} \\
    \method{} & 11065 & \shade{15}{0.024} & \shade{15}{2.48} & \shade{15}{2.14} & \shade{15}{6.33} \\
    \quad $+$ \code{SGMS} & 15747 & \shade{36}{0.018} & \shade{58}{2.24} & \shade{79}{1.61} & \best{6.55} \\
    \quad $+$ \code{avgKNN} & 10942 & \shade{58}{0.015} & \best{2.22} & \shade{58}{1.63} & \shade{58}{6.50} \\
    \quad $+$ \code{SigLIPMS} & 15560 & \shade{79}{0.008} & \shade{36}{2.29} & \shade{36}{1.72} & \shade{36}{6.48} \\
    \quad $+$ \code{PerturbProj} & 11417 & \best{0.002} & \shade{79}{2.23} & \best{1.59} & \shade{79}{6.52} \\
    \midrule
    \multicolumn{6}{l}{\itshape Eyeglasses, $w_{\text{fair}} = 200$} \\
    \method{} & 11120 & \shade{15}{0.311} & \shade{79}{7.50} & \shade{36}{8.78} & \shade{15}{6.69} \\
    \quad $+$ \code{SGMS} & 13829 & \shade{58}{0.307} & \shade{58}{7.84} & \shade{58}{8.42} & \shade{58}{6.88} \\
    \quad $+$ \code{avgKNN} & 10923 & \shade{58}{0.307} & \best{7.47} & \shade{79}{8.32} & \shade{36}{6.81} \\
    \quad $+$ \code{SigLIPMS} & 14712 & \best{0.272} & \shade{15}{8.45} & \shade{15}{8.79} & \best{7.08} \\
    \quad $+$ \code{PerturbProj} & 12431 & \shade{79}{0.293} & \shade{36}{8.13} & \best{8.28} & \shade{79}{7.00} \\
    \bottomrule
  \end{tabular}
\end{table}

On gender the gains are unambiguous. All four terms lower $\MIND$ from $2.52$ to $2.03$--$2.13$,
two to three standard deviations, and improve the discrepancy, $\FID$ and Vendi at the same time.
\code{PerturbProj} reaches the best discrepancy, $0.006$ against $0.031$, and the best $\MIND$; the
four terms are within one standard deviation of each other on $\MIND$, so the choice between them is
driven by the discrepancy. On eyeglasses the terms buy discrepancy ($0.304 \to 0.270$ for
\code{SigLIPMS}, $0.288$ for \code{PerturbProj}, both many standard deviations) and Vendi ($6.62 \to
7.03$), while the five $\MIND$ means, $8.64$ to $9.14$, all lie within one standard deviation
($0.4$ to $0.8$) of one another: on this attribute the diversity terms cannot be ranked on $\MIND$, and
the two semantic terms do not improve it. $\FID$ rises from $11.47$ to $12.4$--$12.7$ for the two
semantic terms and stays put for \code{avgKNN}. Since the reference is balanced on the attribute,
a lower discrepancy alone would lower $\FID$, so the rise is a genuine departure from the reference
manifold: the semantic terms leave it, which is the price of the coverage they recover, whereas
the density term, measured against real embeddings, stays on it. Vendi, which needs no reference,
moves the other way, from $6.62$ to $7.03$, and that recovered spread is the quantity the fairness
term had taken away. The $50\,000$-sample run reproduces every one of these
orderings; $\FID$ falls by a factor of two to three with the larger sample, as expected from its
bias at small sample size \citep{chong2020unbiasedfid}, $\MIND$ falls by $0.3$ to $0.6$ and Vendi
barely moves. Generation time is dominated by the sampler; in the 50,000-image runs, the diversity terms add up to approximately $43\%$ overhead, with the largest increases for SGMS and \code{SigLIPMS}.

\subsection{Diversity terms alone}
\label{app:alone}

The earliest runs measure each diversity term without the fairness term, against the $10\,000$ CelebA
training images with the highest average distance to their five nearest neighbours in ResNet-50
feature space, following \citet{sehwag2022lowdensity}: against this set a term scores well only if it
reaches the rare part of the data. These runs predate the protocol of the rest of this appendix:
$\sigma_{\max} = 20$, $1000$ images per setting, prompt \emph{``a photo of a man''}, and every
metric below, $\MIND$ included, is against this rare subset and not against the balanced reference
of the other sections.

Selecting each term at its lowest $\MIND$ would be a mistake. \code{SGMS} reaches $\MIND = 27.6$
at $\sigma_s = 0.4$, $w_{\mathrm{div}} = 200$ (\cref{tab:sgms-pr}), but its precision has fallen
from $0.97$ to $0.77$ by then and the images at that setting carry visible artefacts: the term has
left the model's manifold, and the metric, which only asks how close the samples are to the rare
reference images, does not tell that apart from reaching the tail of the data.
\cref{tab:diversity-alone-mind} therefore keeps, for each term, the setting with the lowest $\MIND$
among those whose precision stays at or above $0.95$, a floor $0.02$ below the unguided model.
Under that floor \code{avgKNN} reaches the lowest $\MIND$, $62.7$, with the best coverage and
Vendi; \code{SigLIPMS} comes second at $71.2$ with the best density, \code{PerturbProj} third, and
\code{SGMS}, the term with the largest recall gain, last at $101.0$, because its recall is bought with
precision, which the floor forbids.

\begin{table}[htbp]
  \centering
  \caption{Each diversity term alone, at its lowest-$\MIND$ setting with precision $\geq 0.95$ in
  \cref{tab:avgknn-pr,tab:pproj-pr,tab:sgms-pr,tab:siglipms-pr}.}
  \label{tab:diversity-alone-mind}
  \vspace{\baselineskip}
  \small
  \setlength{\tabcolsep}{4.5pt}
  \begin{tabular}{l l rrrrrr}
    \toprule
    Method & Setting & \lo{$\MIND$} & \hi{Prec.} & \hi{Rec.} & \hi{Cov.} & \hi{Dens.} & \hi{Vendi} \\
    \midrule
    Original & --- & 124.10 & 0.972 & 0.389 & 0.809 & 3.010 & 5.94 \\
    \midrule
    \code{avgKNN} & $\kappa=50,\ w_{\mathrm{div}}=30$ & \best{62.71} & \shade{43}{0.954} & \shade{72}{0.425} & \best{0.874} & \shade{43}{2.875} & \best{7.33} \\
    \code{SGMS} & $\sigma_s=0.8,\ w_{\mathrm{div}}=40$ & \shade{15}{100.96} & \shade{72}{0.955} & \best{0.468} & \shade{43}{0.841} & \shade{15}{2.783} & \shade{43}{6.63} \\
    \code{PerturbProj} & $\sigma_s=0.4,\ w_{\mathrm{div}}=250$ & \shade{43}{92.34} & \best{0.958} & \shade{43}{0.386} & \shade{15}{0.837} & \shade{72}{2.909} & \shade{15}{6.56} \\
    \code{SigLIPMS} & $w_{\mathrm{div}}=250$ & \shade{72}{71.17} & \shade{15}{0.953} & \shade{15}{0.332} & \shade{72}{0.865} & \best{3.076} & \shade{72}{6.96} \\
    \bottomrule
  \end{tabular}
\end{table}

Two things qualify this ranking. The measurement favours \code{avgKNN}: its score is a distance to
a bank of SigLIP embeddings of real CelebA images (\cref{app:densitybaselines}), and the reference
set is the rarest part of the same data, so the term is pushed towards the region it is then
evaluated on, while the two semantic terms measure the model's own posterior spread and are given no
such hint. And the ranking does not survive the fairness term. Once $w_{\text{fair}} > 0$
(\cref{tab:fairdiverse-5seeds}) \code{PerturbProj} and \code{SigLIPMS} reach the lowest
discrepancy on both attributes, \code{PerturbProj} the lowest $\MIND$ on gender and
\code{SigLIPMS} the highest Vendi on eyeglasses, and \code{avgKNN} keeps only $\FID$ on
eyeglasses. The single-term table selects a diversity score; the combined tables select a debiasing
method, and the two disagree.

\begin{figure}[htbp]
  \centering
  \begin{tikzpicture}
    \def\fdx#1{((#1)-0)/34*52*1mm}
    \def\fdy#1{((#1)-20)/105*38*1mm}
    \def\P(#1,#2){({\fdx{#1}},{\fdy{#2}})}
    \foreach \x in {5,10,20,30}
      \draw[fdgrid] \P(\x,20) -- \P(\x,125);
    \foreach \y in {20,40,60,80,100,120}
      \draw[fdgrid] \P(0,\y) -- \P(34,\y);
    \draw[fdaxis] \P(0,20) -- \P(34,20);
    \draw[fdaxis] \P(0,20) -- \P(0,125);
    \foreach \x in {5,10,20,30}
      \draw[fdaxis] \P(\x,20) -- ++(0,-1.2mm) node[fdtick, below] {\x};
    \foreach \y in {20,40,60,80,100,120}
      \draw[fdaxis] \P(0,\y) -- ++(-1.2mm,0) node[fdtick, left] {\y};
    \draw[thick, DiagTrainL] \P(5,109.19) -- \P(10,98.36) -- \P(20,80.95) -- \P(30,67.28);
    \foreach \px/\py in {5/109.19, 10/98.36, 20/80.95, 30/67.28}
      \node[fdpt, fill=DiagTrainL] at \P(\px,\py) {};
    \draw[thick, DiagEmbL] \P(5,109.65) -- \P(10,98.21) -- \P(20,80.46) -- \P(30,65.19);
    \foreach \px/\py in {5/109.65, 10/98.21, 20/80.46, 30/65.19}
      \node[fdpt, fill=DiagEmbL] at \P(\px,\py) {};
    \draw[thick, DiagDataL] \P(5,109.53) -- \P(10,97.88) -- \P(20,79.73) -- \P(30,64.91);
    \foreach \px/\py in {5/109.53, 10/97.88, 20/79.73, 30/64.91}
      \node[fdpt, fill=DiagDataL] at \P(\px,\py) {};
    \draw[thick, NavyBlue] \P(5,109.76) -- \P(10,98.07) -- \P(20,79.26) -- \P(30,64.53);
    \foreach \px/\py in {5/109.76, 10/98.07, 20/79.26, 30/64.53}
      \node[fdpt, fill=NavyBlue] at \P(\px,\py) {};
    \draw[thick, DiagFrozenL] \P(5,109.71) -- \P(10,97.72) -- \P(20,78.91) -- \P(30,64.04);
    \foreach \px/\py in {5/109.71, 10/97.72, 20/78.91, 30/64.04}
      \node[fdpt, fill=DiagFrozenL] at \P(\px,\py) {};
    \draw[thick, DiagTrainL!55] \P(5,109.42) -- \P(10,97.72) -- \P(20,79.30) -- \P(30,62.71);
    \foreach \px/\py in {5/109.42, 10/97.72, 20/79.30, 30/62.71}
      \node[fdpt, fill=DiagTrainL!55] at \P(\px,\py) {};
    \draw[thick, DiagEmbL!55] \P(5,109.51) -- \P(10,97.82) -- \P(20,79.40) -- \P(30,62.86);
    \foreach \px/\py in {5/109.51, 10/97.82, 20/79.40, 30/62.86}
      \node[fdpt, fill=DiagEmbL!55] at \P(\px,\py) {};
    % legend
    \foreach \c/\yy/\lab in {DiagTrainL/121.0/1, DiagEmbL/116.1/3, DiagDataL/111.2/5, NavyBlue/106.3/10, DiagFrozenL/101.4/20, DiagTrainL!55/96.5/50, DiagEmbL!55/91.6/100} {
      \draw[thick, \c] \P(19.4,\yy) -- \P(22.1462,\yy);
      \node[fdpt, fill=\c] at \P(20.7731,\yy) {};
      \node[fdleg, right=0.8mm] at \P(22.1462,\yy) {$\kappa=\,$\lab};
    }
    \node[fdname, below=4.5mm] at \P(17,20) {weight $w_{\mathrm{div}}$};
    \node[fdname, rotate=90, anchor=south] at ({\fdx{0}-8.5mm},{\fdy{72.5}}) {MIND};
    \node[fdname, above=0.5mm] at \P(17,125) {AvgkNN};
  \end{tikzpicture}%
  \hfill
  \begin{tikzpicture}
    \def\fdx#1{((#1)-30)/290*52*1mm}
    \def\fdy#1{((#1)-20)/105*38*1mm}
    \def\P(#1,#2){({\fdx{#1}},{\fdy{#2}})}
    \foreach \x in {50,100,150,200,250,300}
      \draw[fdgrid] \P(\x,20) -- \P(\x,125);
    \foreach \y in {20,40,60,80,100,120}
      \draw[fdgrid] \P(30,\y) -- \P(320,\y);
    \draw[fdaxis] \P(30,20) -- \P(320,20);
    \draw[fdaxis] \P(30,20) -- \P(30,125);
    \foreach \x in {50,100,150,200,250,300}
      \draw[fdaxis] \P(\x,20) -- ++(0,-1.2mm) node[fdtick, below] {\x};
    \foreach \y in {20,40,60,80,100,120}
      \draw[fdaxis] \P(30,\y) -- ++(-1.2mm,0) node[fdtick, left] {\y};
    \draw[thick, DiagTrainL] \P(50,116.85) -- \P(100,112.19) -- \P(150,106.24) -- \P(200,101.33) -- \P(250,96.56) -- \P(300,93.03);
    \foreach \px/\py in {50/116.85, 100/112.19, 150/106.24, 200/101.33, 250/96.56, 300/93.03}
      \node[fdpt, fill=DiagTrainL] at \P(\px,\py) {};
    \draw[thick, DiagEmbL] \P(50,115.97) -- \P(100,109.52) -- \P(150,103.25) -- \P(200,98.26) -- \P(250,92.34);
    \foreach \px/\py in {50/115.97, 100/109.52, 150/103.25, 200/98.26, 250/92.34}
      \node[fdpt, fill=DiagEmbL] at \P(\px,\py) {};
    \draw[thick, DiagDataL] \P(50,120.74) -- \P(100,117.26) -- \P(150,114.93) -- \P(200,112.46) -- \P(250,109.68);
    \foreach \px/\py in {50/120.74, 100/117.26, 150/114.93, 200/112.46, 250/109.68}
      \node[fdpt, fill=DiagDataL] at \P(\px,\py) {};
    \draw[thick, NavyBlue] \P(50,120.34) -- \P(100,117.60) -- \P(150,115.28) -- \P(200,112.77) -- \P(250,109.73);
    \foreach \px/\py in {50/120.34, 100/117.60, 150/115.28, 200/112.77, 250/109.73}
      \node[fdpt, fill=NavyBlue] at \P(\px,\py) {};
    % legend
    \foreach \c/\yy/\lab in {DiagTrainL/52.0/0.2, DiagEmbL/45.5/0.4, DiagDataL/39.0/0.6, NavyBlue/32.5/0.8} {
      \draw[thick, \c] \P(45.95,\yy) -- \P(69.3731,\yy);
      \node[fdpt, fill=\c] at \P(57.6615,\yy) {};
      \node[fdleg, right=0.8mm] at \P(69.3731,\yy) {$\sigma_s=\,$\lab};
    }
    \node[fdname, below=4.5mm] at \P(175,20) {diversity weight $w_{\mathrm{div}}$};
    \node[fdname, rotate=90, anchor=south] at ({\fdx{30}-8.5mm},{\fdy{72.5}}) {MIND};
    \node[fdname, above=0.5mm] at \P(175,125) {PerturbProj};
  \end{tikzpicture}

  \vspace{7mm}

  \begin{tikzpicture}
    \def\fdx#1{((#1)-25)/295*52*1mm}
    \def\fdy#1{((#1)-20)/105*38*1mm}
    \def\P(#1,#2){({\fdx{#1}},{\fdy{#2}})}
    \foreach \x in {40,80,120,160,200,300}
      \draw[fdgrid] \P(\x,20) -- \P(\x,125);
    \foreach \y in {20,40,60,80,100,120}
      \draw[fdgrid] \P(25,\y) -- \P(320,\y);
    \draw[fdaxis] \P(25,20) -- \P(320,20);
    \draw[fdaxis] \P(25,20) -- \P(25,125);
    \foreach \x in {40,80,120,160,200,300}
      \draw[fdaxis] \P(\x,20) -- ++(0,-1.2mm) node[fdtick, below] {\x};
    \foreach \y in {20,40,60,80,100,120}
      \draw[fdaxis] \P(25,\y) -- ++(-1.2mm,0) node[fdtick, left] {\y};
    \draw[thick, DiagTrainL] \P(40,113.38) -- \P(80,102.23) -- \P(120,91.43) -- \P(160,86.16) -- \P(200,77.67) -- \P(300,64.90);
    \foreach \px/\py in {40/113.38, 80/102.23, 120/91.43, 160/86.16, 200/77.67, 300/64.90}
      \node[fdpt, fill=DiagTrainL] at \P(\px,\py) {};
    \draw[thick, DiagEmbL] \P(40,90.88) -- \P(80,64.45) -- \P(120,44.96) -- \P(160,33.61) -- \P(200,27.57);
    \foreach \px/\py in {40/90.88, 80/64.45, 120/44.96, 160/33.61, 200/27.57}
      \node[fdpt, fill=DiagEmbL] at \P(\px,\py) {};
    \draw[thick, DiagDataL] \P(40,95.00) -- \P(80,72.89) -- \P(120,57.84) -- \P(160,45.02) -- \P(200,35.70);
    \foreach \px/\py in {40/95.00, 80/72.89, 120/57.84, 160/45.02, 200/35.70}
      \node[fdpt, fill=DiagDataL] at \P(\px,\py) {};
    \draw[thick, NavyBlue] \P(40,100.96) -- \P(80,83.45) -- \P(120,71.76) -- \P(160,62.30) -- \P(200,54.99);
    \foreach \px/\py in {40/100.96, 80/83.45, 120/71.76, 160/62.30, 200/54.99}
      \node[fdpt, fill=NavyBlue] at \P(\px,\py) {};
    % legend
    \foreach \c/\yy/\lab in {DiagTrainL/121.0/0.2, DiagEmbL/116.1/0.4, DiagDataL/111.2/0.6, NavyBlue/106.3/0.8} {
      \draw[thick, \c] \P(202,\yy) -- \P(225.827,\yy);
      \node[fdpt, fill=\c] at \P(213.913,\yy) {};
      \node[fdleg, right=0.8mm] at \P(225.827,\yy) {$\sigma_s=\,$\lab};
    }
    \node[fdname, below=4.5mm] at \P(172.5,20) {diversity weight $w_{\mathrm{div}}$};
    \node[fdname, rotate=90, anchor=south] at ({\fdx{25}-8.5mm},{\fdy{72.5}}) {MIND};
    \node[fdname, above=0.5mm] at \P(172.5,125) {SGMS};
  \end{tikzpicture}%
  \hfill
  \begin{tikzpicture}
    \def\fdx#1{((#1)-30)/390*52*1mm}
    \def\fdy#1{((#1)-20)/105*38*1mm}
    \def\P(#1,#2){({\fdx{#1}},{\fdy{#2}})}
    \foreach \x in {50,150,250,350,400}
      \draw[fdgrid] \P(\x,20) -- \P(\x,125);
    \foreach \y in {20,40,60,80,100,120}
      \draw[fdgrid] \P(30,\y) -- \P(420,\y);
    \draw[fdaxis] \P(30,20) -- \P(420,20);
    \draw[fdaxis] \P(30,20) -- \P(30,125);
    \foreach \x in {50,150,250,350,400}
      \draw[fdaxis] \P(\x,20) -- ++(0,-1.2mm) node[fdtick, below] {\x};
    \foreach \y in {20,40,60,80,100,120}
      \draw[fdaxis] \P(30,\y) -- ++(-1.2mm,0) node[fdtick, left] {\y};
    \draw[thick, DiagTrainL] \P(50,98.28) -- \P(100,78.76) -- \P(150,73.87) -- \P(200,71.96) -- \P(250,71.17) -- \P(300,74.46) -- \P(350,75.89) -- \P(400,76.56);
    \foreach \px/\py in {50/98.28, 100/78.76, 150/73.87, 200/71.96, 250/71.17, 300/74.46, 350/75.89, 400/76.56}
      \node[fdpt, fill=DiagTrainL] at \P(\px,\py) {};
    \node[fdname, below=4.5mm] at \P(225,20) {diversity weight $w_{\mathrm{div}}$};
    \node[fdname, rotate=90, anchor=south] at ({\fdx{30}-8.5mm},{\fdy{72.5}}) {MIND};
    \node[fdname, above=0.5mm] at \P(225,125) {SigLIPMS};
  \end{tikzpicture}

  \caption{MIND against the diversity guidance weight $w_{\mathrm{div}}$, one curve per
  hyperparameter configuration. All four panels share the $\MIND$ axis. \code{avgKNN} is nearly insensitive to $\kappa$ over two
  orders of magnitude (seven superimposed curves, $\kappa=1$ to $\kappa=100$). For \code{SGMS}
  and \code{PerturbProj} the perturbation scale is non-monotone: $\sigma_s=0.2$ is
  markedly \emph{worse} than $0.4$, so the useful range is
  $\sigma_s\in[0.4,0.6]$. \code{SGMS} is the only method reaching $\MIND$ below $60$, and it does
  so at a precision below $0.9$ (\cref{tab:sgms-pr}).
  \code{SigLIPMS} turns around: $\MIND$ bottoms out at $w_{\mathrm{div}}=250$ and rises again
  beyond it.}
  \label{fig:mind-vs-weight}
\end{figure}

\begin{table}[htbp]
\begin{minipage}[t]{0.48\linewidth}\centering\footnotesize
\caption{\code{avgKNN} alone, 1,000 images: precision, recall and $\MIND$ per $(\kappa, w_{\mathrm{div}})$.}
\label{tab:avgknn-pr}
\vspace{\baselineskip}
\begin{tabular}{r r rrr}
    \toprule
    $\kappa$ & $w_{\mathrm{div}}$ & \hi{Precision} & \hi{Recall} & \lo{$\MIND$} \\
    \midrule
    \multirow{4}{*}{1} & 5 & \best{0.971} & \shade{19}{0.383} & \shade{33}{109.19} \\
      & 10 & \shade{59}{0.964} & \shade{37}{0.392} & \shade{37}{98.36} \\
      & 20 & \shade{37}{0.955} & \shade{78}{0.412} & \shade{59}{80.95} \\
      & 30 & \shade{19}{0.946} & \shade{93}{0.435} & \shade{82}{67.28} \\
    \midrule
    \multirow{4}{*}{3} & 5 & \shade{89}{0.969} & \shade{41}{0.394} & \shade{22}{109.65} \\
      & 10 & \shade{52}{0.963} & \shade{56}{0.403} & \shade{41}{98.21} \\
      & 20 & \shade{59}{0.964} & \shade{15}{0.381} & \shade{63}{80.46} \\
      & 30 & \shade{15}{0.944} & \shade{89}{0.433} & \shade{85}{65.19} \\
    \midrule
    \multirow{4}{*}{5} & 5 & \shade{89}{0.969} & \shade{33}{0.391} & \shade{26}{109.53} \\
      & 10 & \shade{74}{0.967} & \shade{59}{0.404} & \shade{48}{97.88} \\
      & 20 & \shade{45}{0.960} & \shade{45}{0.397} & \shade{67}{79.73} \\
      & 30 & \shade{26}{0.949} & \shade{96}{0.441} & \shade{89}{64.91} \\
    \midrule
    \multirow{4}{*}{10} & 5 & \shade{96}{0.970} & \shade{30}{0.389} & \shade{15}{109.76} \\
      & 10 & \shade{67}{0.965} & \shade{56}{0.403} & \shade{45}{98.07} \\
      & 20 & \shade{48}{0.961} & \shade{67}{0.407} & \shade{74}{79.26} \\
      & 30 & \shade{26}{0.949} & \best{0.449} & \shade{93}{64.53} \\
    \midrule
    \multirow{4}{*}{20} & 5 & \shade{89}{0.969} & \shade{26}{0.386} & \shade{19}{109.71} \\
      & 10 & \shade{89}{0.969} & \shade{48}{0.398} & \shade{56}{97.72} \\
      & 20 & \shade{45}{0.960} & \shade{78}{0.412} & \shade{78}{78.91} \\
      & 30 & \shade{30}{0.951} & \shade{82}{0.424} & \shade{96}{64.04} \\
    \midrule
    \multirow{4}{*}{50} & 5 & \shade{96}{0.970} & \shade{22}{0.384} & \shade{30}{109.42} \\
      & 10 & \shade{67}{0.965} & \shade{67}{0.407} & \shade{56}{97.72} \\
      & 20 & \shade{70}{0.966} & \shade{70}{0.408} & \shade{70}{79.30} \\
      & 30 & \shade{33}{0.954} & \shade{85}{0.425} & \best{62.71} \\
    \bottomrule
  \end{tabular}
\end{minipage}\hfill
\begin{minipage}[t]{0.48\linewidth}\centering\footnotesize
\caption{\code{SigLIPMS} alone, 1,000 images: precision, recall and $\MIND$ per $w_{\mathrm{div}}$.}
\label{tab:siglipms-pr}
\vspace{\baselineskip}
\begin{tabular}{r rrr}
    \toprule
    $w_{\mathrm{div}}$ & \hi{Precision} & \hi{Recall} & \lo{$\MIND$} \\
    \midrule
    50 & \best{0.965} & \shade{51}{0.342} & \shade{15}{98.28} \\
    100 & \shade{88}{0.957} & \shade{88}{0.382} & \shade{27}{78.76} \\
    150 & \shade{76}{0.954} & \best{0.405} & \shade{76}{73.87} \\
    200 & \shade{51}{0.952} & \shade{76}{0.349} & \shade{88}{71.96} \\
    250 & \shade{64}{0.953} & \shade{39}{0.332} & \best{71.17} \\
    300 & \shade{27}{0.944} & \shade{64}{0.345} & \shade{64}{74.46} \\
    350 & \shade{39}{0.948} & \shade{27}{0.316} & \shade{51}{75.89} \\
    400 & \shade{15}{0.941} & \shade{15}{0.299} & \shade{39}{76.56} \\
    \bottomrule
  \end{tabular}
\end{minipage}
\end{table}

\begin{table}[htbp]
\begin{minipage}[t]{0.48\linewidth}\centering\footnotesize
\caption{\code{PerturbProj} alone, 1,000 images: precision, recall and $\MIND$ per $(\sigma_s, w_{\mathrm{div}})$.}
\label{tab:pproj-pr}
\vspace{\baselineskip}
\begin{tabular}{r r rrr}
    \toprule
    $\sigma_s$ & $w_{\mathrm{div}}$ & \hi{Precision} & \hi{Recall} & \lo{$\MIND$} \\
    \midrule
    \multirow{6}{*}{0.2} & 50 & \shade{53}{0.970} & \shade{53}{0.364} & \shade{32}{116.85} \\
      & 100 & \shade{87}{0.973} & \shade{40}{0.354} & \shade{58}{112.19} \\
      & 150 & \shade{40}{0.967} & \shade{28}{0.336} & \shade{74}{106.24} \\
      & 200 & \shade{24}{0.959} & \shade{15}{0.322} & \shade{83}{101.33} \\
      & 250 & \shade{36}{0.965} & \shade{24}{0.330} & \shade{92}{96.56} \\
      & 300 & \shade{28}{0.963} & \shade{24}{0.330} & \shade{96}{93.03} \\
    \midrule
    \multirow{5}{*}{0.4} & 50 & \shade{70}{0.971} & \shade{49}{0.358} & \shade{36}{115.97} \\
      & 100 & \shade{53}{0.970} & \shade{74}{0.379} & \shade{70}{109.52} \\
      & 150 & \shade{32}{0.964} & \shade{87}{0.398} & \shade{79}{103.25} \\
      & 200 & \shade{15}{0.956} & \shade{70}{0.374} & \shade{87}{98.26} \\
      & 250 & \shade{19}{0.958} & \shade{79}{0.386} & \best{92.34} \\
    \midrule
    \multirow{5}{*}{0.6} & 50 & \best{0.974} & \shade{36}{0.351} & \shade{15}{120.74} \\
      & 100 & \shade{83}{0.972} & \shade{32}{0.350} & \shade{28}{117.26} \\
      & 150 & \shade{70}{0.971} & \shade{45}{0.356} & \shade{45}{114.93} \\
      & 200 & \shade{83}{0.972} & \shade{70}{0.374} & \shade{53}{112.46} \\
      & 250 & \shade{83}{0.972} & \shade{62}{0.365} & \shade{66}{109.68} \\
    \midrule
    \multirow{5}{*}{0.8} & 50 & \best{0.974} & \shade{62}{0.365} & \shade{19}{120.34} \\
      & 100 & \best{0.974} & \shade{83}{0.389} & \shade{24}{117.60} \\
      & 150 & \shade{70}{0.971} & \shade{92}{0.406} & \shade{40}{115.28} \\
      & 200 & \shade{45}{0.968} & \shade{96}{0.416} & \shade{49}{112.77} \\
      & 250 & \shade{70}{0.971} & \best{0.421} & \shade{62}{109.73} \\
    \bottomrule
  \end{tabular}
\end{minipage}\hfill
\begin{minipage}[t]{0.48\linewidth}\centering\footnotesize
\caption{\code{SGMS} alone, 1,000 images: precision, recall and $\MIND$ per $(\sigma_s, w_{\mathrm{div}})$.}
\label{tab:sgms-pr}
\vspace{\baselineskip}
\begin{tabular}{r r rrr}
    \toprule
    $\sigma_s$ & $w_{\mathrm{div}}$ & \hi{Precision} & \hi{Recall} & \lo{$\MIND$} \\
    \midrule
    \multirow{6}{*}{0.2} & 40 & \best{0.972} & \shade{15}{0.390} & \shade{15}{113.38} \\
      & 80 & \shade{96}{0.960} & \shade{28}{0.420} & \shade{19}{102.23} \\
      & 120 & \shade{79}{0.939} & \shade{19}{0.410} & \shade{32}{91.43} \\
      & 160 & \shade{70}{0.933} & \shade{24}{0.411} & \shade{40}{86.16} \\
      & 200 & \shade{62}{0.921} & \shade{32}{0.425} & \shade{49}{77.67} \\
      & 300 & \shade{49}{0.897} & \shade{36}{0.436} & \shade{62}{64.90} \\
    \midrule
    \multirow{5}{*}{0.4} & 40 & \shade{87}{0.947} & \shade{53}{0.475} & \shade{36}{90.88} \\
      & 80 & \shade{58}{0.916} & \shade{70}{0.508} & \shade{66}{64.45} \\
      & 120 & \shade{32}{0.856} & \shade{66}{0.499} & \shade{87}{44.96} \\
      & 160 & \shade{19}{0.820} & \shade{58}{0.481} & \shade{96}{33.61} \\
      & 200 & \shade{15}{0.766} & \shade{74}{0.519} & \best{27.57} \\
    \midrule
    \multirow{5}{*}{0.6} & 40 & \shade{83}{0.946} & \shade{53}{0.475} & \shade{28}{95.00} \\
      & 80 & \shade{66}{0.923} & \shade{87}{0.530} & \shade{53}{72.89} \\
      & 120 & \shade{49}{0.897} & \shade{79}{0.528} & \shade{74}{57.84} \\
      & 160 & \shade{36}{0.859} & \shade{96}{0.540} & \shade{83}{45.02} \\
      & 200 & \shade{24}{0.849} & \best{0.567} & \shade{92}{35.70} \\
    \midrule
    \multirow{5}{*}{0.8} & 40 & \shade{92}{0.955} & \shade{45}{0.468} & \shade{24}{100.96} \\
      & 80 & \shade{74}{0.938} & \shade{40}{0.460} & \shade{45}{83.45} \\
      & 120 & \shade{53}{0.907} & \shade{62}{0.495} & \shade{58}{71.76} \\
      & 160 & \shade{40}{0.874} & \shade{83}{0.529} & \shade{70}{62.30} \\
      & 200 & \shade{28}{0.855} & \shade{92}{0.531} & \shade{79}{54.99} \\
    \bottomrule
  \end{tabular}
\end{minipage}
\end{table}

Three regularities hold across the four terms. \code{avgKNN} is nearly insensitive to its
neighbourhood size: at its strongest weight, $\MIND$ varies by $4.6$ across seven values of $\kappa$
from $1$ to $100$, against a $46$-point swing driven by the weight alone. For \code{SGMS} and
\code{PerturbProj} the re-noising level is non-monotone, $\sigma_s = 0.2$ being markedly worse than
$0.4$, so the useful range is $\sigma_s \in [0.4, 0.6]$ for \code{SGMS}, while \code{PerturbProj}
barely leaves the model's manifold at any setting (precision never below $0.956$) and only
$\sigma_s = 0.8$ gains recall. \code{SGMS} is the only term that converts precision into recall at
scale in this setting, which is also why it is the one term whose low-$\MIND$ settings fail the
precision floor of \cref{tab:diversity-alone-mind}; \code{SigLIPMS} has a turning point: $\MIND$
bottoms out at $w_{\mathrm{div}} = 250$ and rises beyond it while recall falls from its peak at
$150$, so weights above $250$ are harmful.

\subsection{Qualitative results}
\label{app:qualitative}

Every figure of this section shares its seeds across rows, so a column is one starting noise under
several regimes and the panels can be read image by image. Together with \cref{fig:teaser} they
show three things: the guidance acts on individual images and leaves what it does not name, our
method changes a face more cleanly than a labelled predictor, and the wording of the sentences
matters little.

\paragraph{CelebA-HQ against Balancing Act (\cref{fig:celebahq-gender}).}
Six seeds, three of them women in the unguided row, under Balancing Act and under \method{} at
matched discrepancy. Both methods turn the same two columns, the fifth and sixth, into men, and both
leave the first three women and the man in the fourth column as they are, so the two batches end at
the same proportion by the same choice of images. The difference is in the faces they produce: the
man Balancing Act makes in the fifth column carries a blurred, uneven skin and a distorted mouth,
while ours is a clean face with the seed's hair and lighting, and the sixth column shows the same
gap in a milder form. That is the visible side of the $\FID$ difference in \cref{tab:celebahq}: at
equal discrepancy, guidance through a labelled predictor pays in image quality where guidance
through the adapter does not.

\paragraph{Sensitivity to the sentences (\cref{fig:celebahq-prompts}).}
The same six seeds and the same weight $w_{\text{fair}} = 20$, with five different pairs of
sentences naming gender: woman and man, female and male person, girl's and boy's portrait, lady and
gentleman, and close-up photos of a woman and of a man. All five pairs turn the same two columns,
the fifth and sixth, and leave the four others untouched; the five men in the fifth column differ
only in fine detail, a lock of hair or the shape of the smile. The attribute the guidance reads is
the direction between the two sentences, and that direction hardly moves with the wording, so an
operator need not search for the right words.

\paragraph{Stable Diffusion (\cref{fig:sd-qual}).}
Six seeds each for \emph{A face of a doctor} and \emph{A face of a firefighter}, unguided and under
\method{} at a gender target of one half. The unguided model draws five men and one woman for the
doctor and six men for the firefighter, the skew of \cref{tab:t2i}. Guidance turns two doctor
columns and three firefighter columns into women, reaching three of six in each, and takes the
images whose logit was already closest: the woman of the sixth doctor column is left alone. White
coats and stethoscopes, helmets and turnout gear, the framing and the expression of each seed
stay; only the face changes, and with it hair and, on two firefighters, a hood under the helmet.
These are the samples behind the gender row of \cref{tab:t2i}.

\begin{figure}[htbp]\centering
\rowfig{0.94\linewidth}{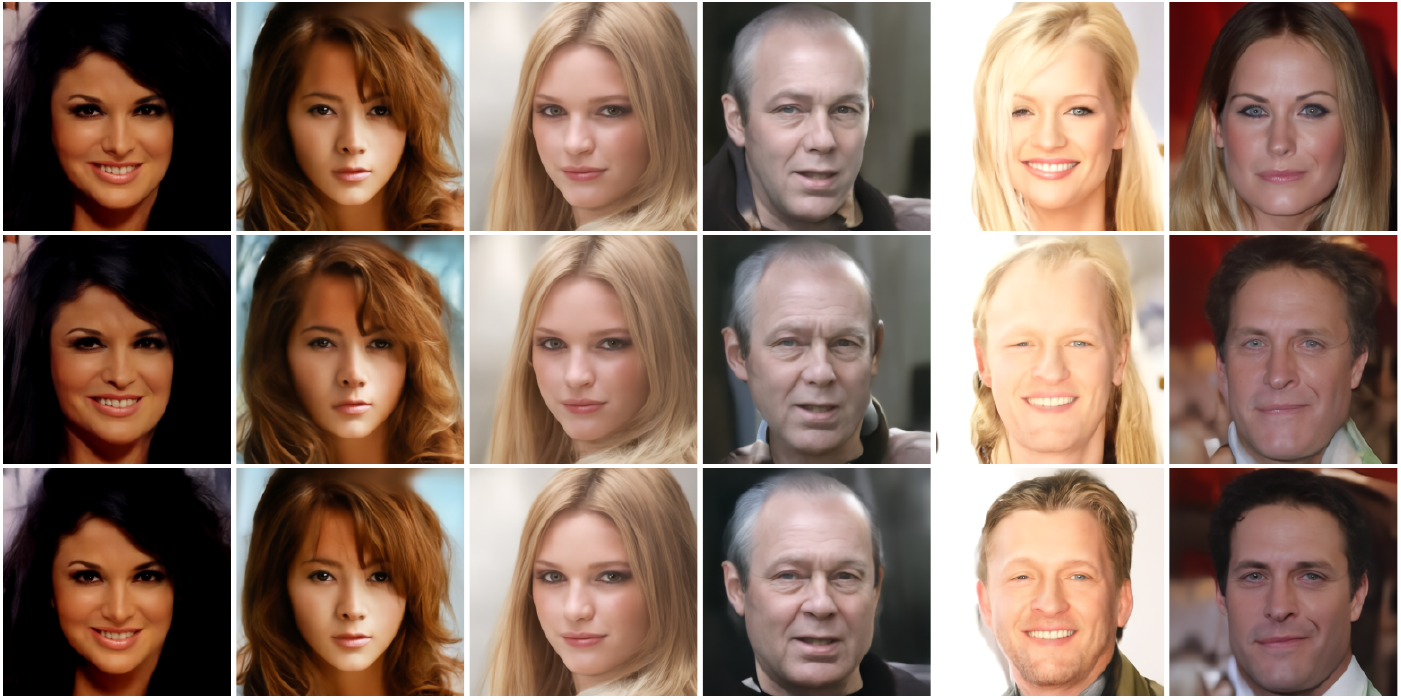}{0}{0.833/{Original}, 0.500/{Balancing Act}, 0.167/{\method{}}}
\caption{Gender on CelebA-HQ, six shared seeds: the unguided model, Balancing Act, and \method{}
at matched discrepancy.}
\label{fig:celebahq-gender}
\end{figure}

\begin{figure}[htbp]\centering
\includegraphics[width=0.85\linewidth]{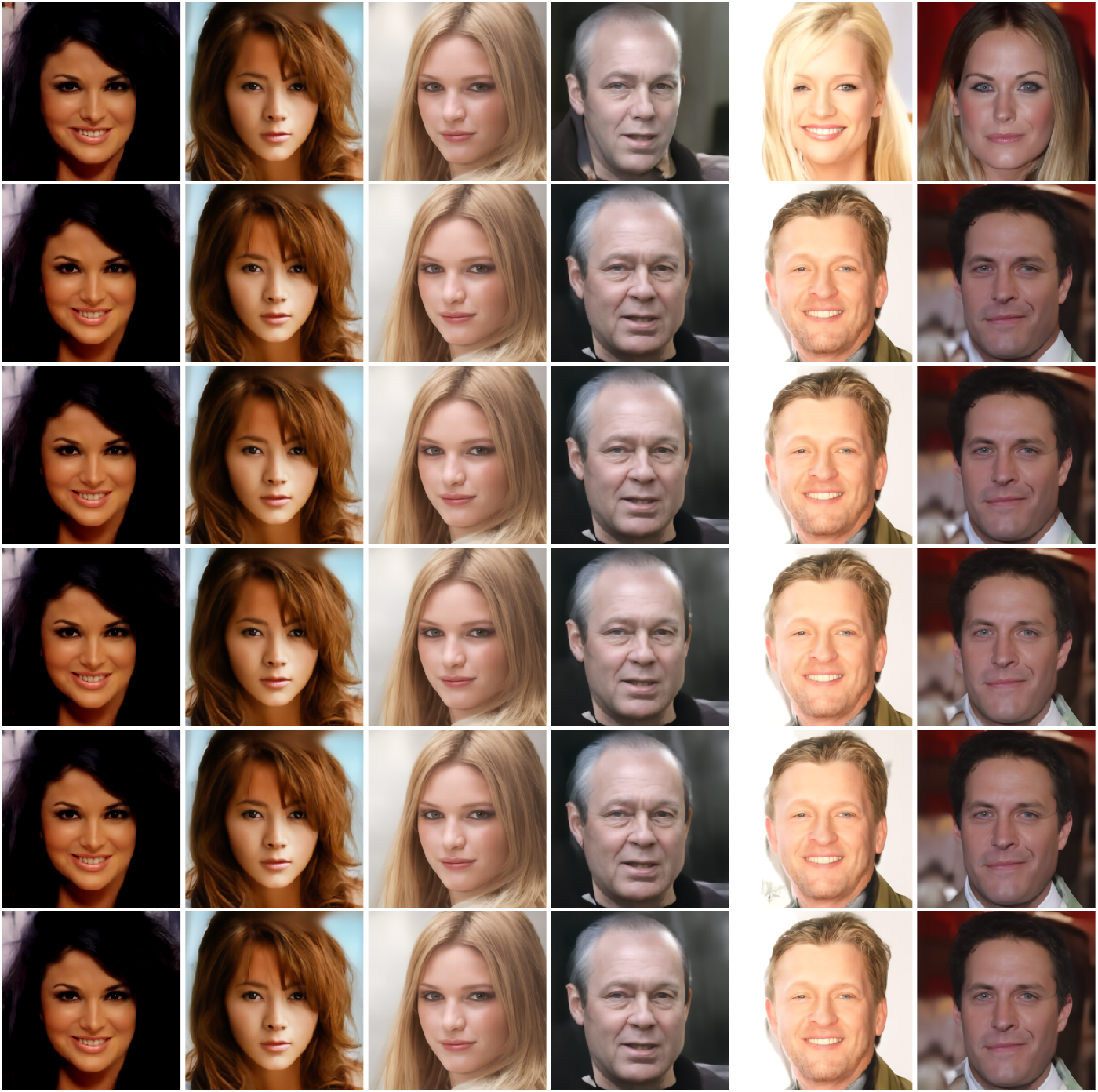}
\caption{Sensitivity to the prompt pair, gender on CelebA-HQ, $w_{\text{fair}} = 20$, six shared
seeds. Top: unguided. Then, one pair per row: \emph{a photo of a woman} / \emph{a photo of a man};
\emph{a female person} / \emph{a male person}; \emph{a girl's portrait} / \emph{a boy's portrait};
\emph{a photo of a lady} / \emph{a photo of a gentleman}; \emph{a woman's close-up photo} /
\emph{a man's close-up photo}.}
\label{fig:celebahq-prompts}
\end{figure}

\begin{figure}[htbp]\centering
\rowfig{0.94\linewidth}{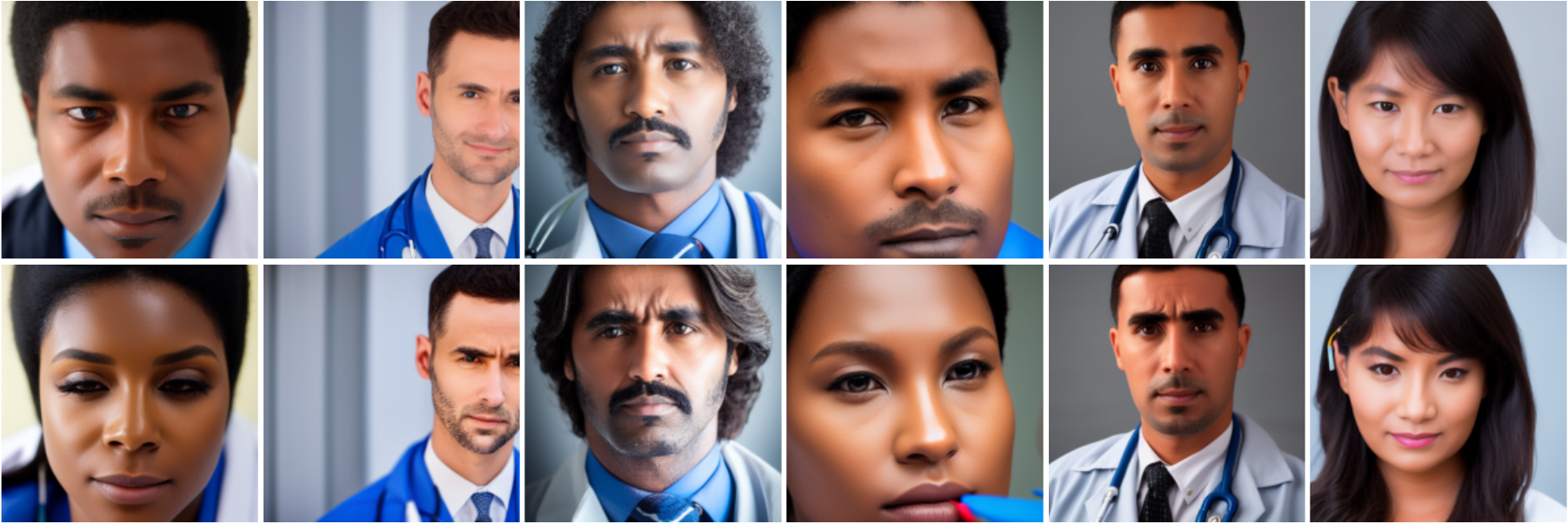}{0}{0.753/{Original}, 0.248/{\method{}}}\\[3pt]
\rowfig{0.94\linewidth}{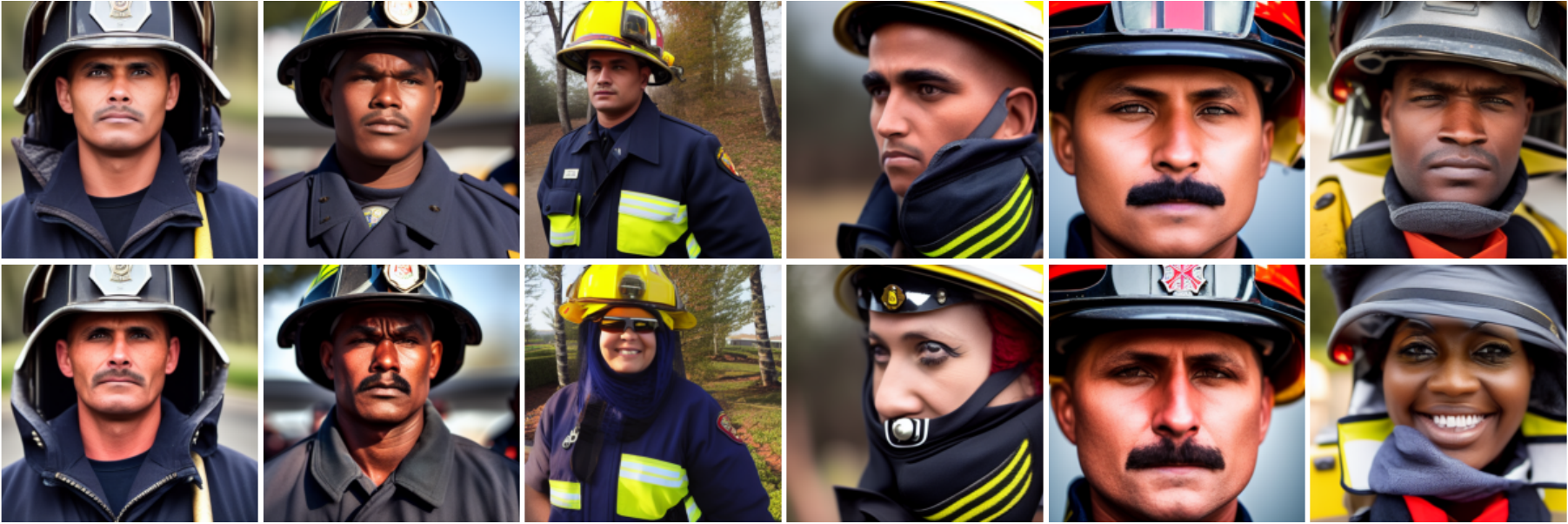}{0}{0.753/{Original}, 0.248/{\method{}}}
\caption{Stable Diffusion~1.5, prompts \emph{A face of a doctor} (top pair) and \emph{A face of a
firefighter} (bottom pair), six shared seeds each: the unguided model and \method{} at a gender
target of one half. Two doctor columns and three firefighter columns change gender; clothing, pose,
background and expression stay.}
\label{fig:sd-qual}
\end{figure}
% ===========================================================================

\section{Full comparison under the DiffLens protocol}
\label{app:t2i-full}

\cref{tab:t2i-full} completes \cref{tab:t2i} with the two CLIP scores of
\cref{app:details-metrics}, and the last paragraph of this section traces our residual
discrepancy to its sources. CLIP-T lies within $0.01$ of the unguided model for every method on
every attribute, so none of them, ours included, trades prompt adherence for the attribute. CLIP-I
separates the methods into two groups: the label-located methods \citep{shi2025difflens,zaleska2026icm}
and fine-tuning \citep{shen2024finetuning} keep $0.85$ to $0.92$ of the original, the latent-space
editors \citep{kwon2023asyrp,li2024selfdiscovering} $0.81$ to $0.88$, and ours sits at $0.84$ to
$0.85$, the level of the $h$-space distribution guidance \citep{parihar2024balancing}. Race is the
attribute on which DiffLens and ICM keep the most of the original image, above $0.90$, and also the
one on which they move the discrepancy least.

\paragraph{Where the residual discrepancy comes from.} The $\FD$ of \cref{tab:t2i} is an average
over the four occupations of the protocol, and \cref{tab:t2i-perjob} gives the four terms. On
gender they are far apart: doctors reach $0.021$ and firefighters $0.104$, nurses stay at $0.296$
and receptionists at $0.378$. For a binary attribute $\FD = \sqrt{2}\,|p - \tfrac12|$
(\cref{app:details-metrics}), so these are $51\%$, $57\%$, $71\%$ and $77\%$ women. The unguided
model explains the split. \citet{seshadri2024amplification} measured the share of women that Stable
Diffusion~1.4 draws for each occupation under four prompt templates: $100\%$ for \emph{nurse} under
every template and $99.8$ to $100\%$ for \emph{receptionist}, against $1$ to $16\%$ for
\emph{firefighter} and $4$ to $58\%$ for \emph{doctor}. These shares account for the unguided row
of \cref{tab:t2i}: an all-female or all-male batch scores $0.71$, and averaging the four templates
gives per-occupation discrepancies of $0.71$, $0.71$, $0.63$ and $0.32$, whose mean of $0.59$ is
close to the $0.56$ we measure on Stable Diffusion~1.5 with the prompt of the protocol. Reaching one
half means creating male nurses and receptionists that the model never draws on its own, whereas
female firefighters and doctors exist in its output already; the adapter, trained on the model's
own images of these jobs, has seen just as few. Guidance moves nurses and receptionists from all
women to three quarters, and stops there. The other guidance method of \cref{tab:t2i}, the
$h$-distribution guidance of \citet{parihar2024balancing}, is trained on gender labels and stops at
$0.222$, above ours; the editing methods pass this point because they change what the model
computes, whereas guidance reweights what it draws.

On age the residual is concentrated in the same way. Doctors and nurses reach $0.092$ and $0.077$,
firefighters and receptionists stay at $0.276$ and $0.381$. The unguided model puts nearly every
face in one bin, its age discrepancy of $0.752$ being close to the $0.82$ of a single-bin batch,
and its firefighters and receptionists are, as far as its samples show, middle-aged men and young
women; a third of either at $60+$ is far from what it draws on its own. A second effect may add a
floor: FairFace counts $0$--$19$, $20$--$59$ and $60+$, while the guidance is given \emph{a young},
\emph{a middle-aged} and \emph{an old person}, and a face the encoder moves towards \emph{young} is
most likely in its twenties, which the counter files under $20$--$59$. The two low values on doctor
and nurse show that the sentences do reach the counted bins where the model can follow. On race the
residual is even across occupations, $0.17$ to $0.21$, so it belongs to the attribute and not to
any job; the four sentences name the four groups the counter uses, and ours is the lowest race
discrepancy in \cref{tab:t2i}.

\begin{table}[t]\centering\footnotesize
\setlength{\tabcolsep}{6pt}
\caption{\method{} on Stable Diffusion~1.5, discrepancy per occupation. Each entry is the distance
of the mean FairFace output to the uniform target on the $500$ images of one prompt; the last row
is their average, the $\FD$ of \cref{tab:t2i}.}
\label{tab:t2i-perjob}
\vspace{\baselineskip}
\begin{tabular}{lrrr}
\toprule
Occupation & \lo{$\FD$ gender} & \lo{$\FD$ age} & \lo{$\FD$ race} \\
\midrule
doctor & 0.021 & 0.092 & 0.208 \\
firefighter & 0.104 & 0.276 & 0.193 \\
nurse & 0.296 & 0.077 & 0.206 \\
receptionist & 0.378 & 0.381 & 0.174 \\
\midrule
mean (\cref{tab:t2i}) & 0.200 & 0.206 & 0.195 \\
\bottomrule
\end{tabular}
\end{table}

\begin{table}[t]\centering\footnotesize
\setlength{\tabcolsep}{3pt}
\caption{Stable Diffusion~1.5 \citep{rombach2022ldm} under the protocol of DiffLens
\citep{shi2025difflens}, all four metrics. Baseline rows other than ICM \citep{zaleska2026icm} are
as reported by \citet{shi2025difflens}, which gives fine-tuning \citep{shen2024finetuning} a single
$\FID$, CLIP-I and CLIP-T for the three attributes. \Legendbest; equal values share a shade. The
unguided row has no CLIP-I by definition.}
\label{tab:t2i-full}
\vspace{\baselineskip}
\resizebox{\linewidth}{!}{%
\begin{tabular}{llrrrrrrrrrrrr}
\toprule
& & \multicolumn{4}{c}{Gender} & \multicolumn{4}{c}{Age} & \multicolumn{4}{c}{Race} \\
\cmidrule(lr){3-6}\cmidrule(lr){7-10}\cmidrule(lr){11-14}
Method & Attribute & \lo{$\FD$} & \lo{$\FID$} & \hi{CLIP-I} & \hi{CLIP-T} & \lo{$\FD$} & \lo{$\FID$} & \hi{CLIP-I} & \hi{CLIP-T} & \lo{$\FD$} & \lo{$\FID$} & \hi{CLIP-I} & \hi{CLIP-T} \\
\midrule
Original & --- & 0.564 & 120.06 & -- & 0.6155 & 0.752 & 120.06 & -- & 0.6155 & 0.558 & 120.06 & -- & 0.6155 \\
\midrule
\multicolumn{14}{@{}l}{\emph{Feature or weight editing}} \\
Latent editing \citep{kwon2023asyrp} & text, training per attribute & \shade{15}{0.408} & \shade{15}{166.11} & \shade{29}{0.8253} & \shade{15}{0.6005} & \shade{29}{0.682} & \shade{15}{200.90} & \shade{58}{0.8527} & \shade{86}{0.6122} & \shade{29}{0.524} & \shade{29}{153.05} & \shade{72}{0.8804} & \shade{15}{0.6086} \\
Latent direction \citep{li2024selfdiscovering} & text, training per attribute & \shade{29}{0.305} & \shade{58}{129.37} & \shade{15}{0.8058} & \shade{86}{0.6091} & \shade{86}{0.052} & \shade{72}{113.81} & \shade{15}{0.8151} & \shade{43}{0.6067} & \best{0.175} & \shade{43}{128.30} & \shade{15}{0.8211} & \shade{72}{0.6132} \\
Fine-tuning \citep{shen2024finetuning} & attribute classifier & \shade{86}{0.050} & \shade{29}{161.47} & \best{0.8779} & \best{0.6095} & \shade{15}{0.746} & \shade{29}{161.47} & \shade{86}{0.8779} & \shade{58}{0.6095} & \shade{72}{0.198} & \shade{15}{161.47} & \shade{58}{0.8779} & \shade{29}{0.6095} \\
DiffLens \citep{shi2025difflens} & labels & \best{0.046} & \best{112.83} & \shade{86}{0.8501} & \shade{72}{0.6090} & \best{0.049} & \best{99.17} & \shade{72}{0.8778} & \shade{15}{0.6057} & \shade{43}{0.401} & \shade{72}{119.86} & \shade{86}{0.9096} & \shade{86}{0.6149} \\
ICM \citep{zaleska2026icm} & labels & \shade{72}{0.087} & \shade{72}{122.08} & \shade{86}{0.8501} & \shade{72}{0.6090} & \shade{72}{0.133} & \shade{58}{114.87} & \best{0.9195} & \best{0.6150} & \shade{58}{0.266} & \shade{86}{116.88} & \best{0.9099} & \best{0.6172} \\
\midrule
\multicolumn{14}{@{}l}{\emph{Guidance on the model's distribution}} \\
$h$-distribution \citep{parihar2024balancing} & labels & \shade{43}{0.222} & \shade{43}{151.68} & \shade{58}{0.8475} & \shade{29}{0.6087} & \shade{43}{0.506} & \shade{43}{147.71} & \shade{29}{0.8345} & \shade{72}{0.6098} & \shade{15}{0.544} & \shade{58}{126.90} & \shade{29}{0.8255} & \shade{43}{0.6100} \\
\method{} & text & \shade{58}{0.200} & \shade{86}{113.85} & \shade{43}{0.8381} & \shade{72}{0.6090} & \shade{58}{0.206} & \shade{86}{110.55} & \shade{43}{0.8409} & \shade{29}{0.6060} & \shade{86}{0.195} & \best{114.71} & \shade{43}{0.8527} & \shade{58}{0.6111} \\
\bottomrule
\end{tabular}}
\end{table}

\end{document}